\documentclass{article} 
\usepackage{amsmath}
\usepackage{times}

\usepackage{amsmath,amsfonts,bm}









\def\eqref#1{equation~\ref{#1}}









\def\1{\bm{1}}








\def\va{{\bm{a}}}
\def\vb{{\bm{b}}}

\def\vp{{\bm{p}}}

\def\vr{{\bm{r}}}
\def\vs{{\bm{s}}}

\def\vv{{\bm{v}}}

\def\vz{{\bm{z}}}


\def\mA{{\bm{A}}}

\def\mI{{\bm{I}}}

\def\mM{{\bm{M}}}

\def\mQ{{\bm{Q}}}
\def\mR{{\bm{R}}}

\def\mV{{\bm{V}}}
\def\mW{{\bm{W}}}
\def\mX{{\bm{X}}}
\def\mY{{\bm{Y}}}

\DeclareMathAlphabet{\mathsfit}{\encodingdefault}{\sfdefault}{m}{sl}
\SetMathAlphabet{\mathsfit}{bold}{\encodingdefault}{\sfdefault}{bx}{n}













\usepackage{url}
\usepackage{amssymb}
\usepackage{float}
\usepackage{amsthm}
\usepackage{booktabs}
\usepackage{enumitem} 
\usepackage{tikz}
\usetikzlibrary{arrows.meta,   shapes.misc}
\usetikzlibrary{positioning,   calc}
\usepackage{natbib}
\usepackage{hyperref}
\usepackage{cleveref}
\newtheorem{theorem}{Theorem}[section]
\newtheorem{proposition}[theorem]{Proposition}
\newtheorem{lemma}[theorem]{Lemma}
\newtheorem{corollary}[theorem]{Corollary}
\newtheorem{example}[theorem]{Example}
\newtheorem{definition}[theorem]{Definition}

\newtheorem{remark}[theorem]{Remark}

\newcommand{\vA}{\bm{A}}

\newcommand{\vR}{\bm{R}}

\renewcommand{\eqref}[1]{(\ref{#1})}
\title{Limit Analysis for Symbolic Multi-step Reasoning Tasks with Information Propagation Rules Based on Transformers }
\author{
  \textbf{Qin Tian}\textsuperscript{2, $\dagger$},
  \textbf{Yuhan Chen}\textsuperscript{3, $\dagger$},
  \textbf{Zhiwei Wang}\textsuperscript{1,2, $\dagger$},
  \textbf{Zhi-Qin John Xu}\textsuperscript{1,2, $\ast$}
  \\
  \vspace{0.3cm}
  \begin{minipage}{\textwidth}
    \centering
    \textsuperscript{1}Institute of Natural Sciences, MOE-LSC, Shanghai Jiao Tong University\\
    \textsuperscript{2}School of Mathematical Sciences, Shanghai Jiao Tong University\\
    \textsuperscript{3}School of Mathematics and Statistics, Wuhan University
  \end{minipage}
  \\
  \vspace{0.2cm}
  \begin{minipage}{\textwidth}
    \centering
    \small
    \textsuperscript{$\dagger$}These authors contributed equally as co-first authors.\\
    \textsuperscript{$\ast$}Corresponding author: \href{mailto:xuzhiqin@sjtu.edu.cn}{xuzhiqin@sjtu.edu.cn}
  \end{minipage}
}

\date{}
\begin{document}
\onecolumn
\maketitle

\begin{abstract}
Transformers are able to perform reasoning tasks, however the intrinsic mechanism remains widely open. In this paper we propose a set of information propagation rules based on Transformers and utilize symbolic reasoning tasks to theoretically analyze the limit reasoning steps. 
We show that the limit number of reasoning steps is between $O(3^{L-1})$ and $O(2^{L-1})$ for a model with $L$ attention layers in a single-pass.
\end{abstract}

\section{Introduction}
The transformer architecture introduced by \citep{Att_is_need2017} has  demonstrated capabilities across a wide range of tasks \citep{liu2018generating,devlin2019bert,radford2019language,touvron2023llama,openai2023gpt},
showing particularly significant progress in logical reasoning. These models can not only solve complex mathematical problems \cite{davies2021advancing} but have also reached performance levels comparable to top human contestants in the International Mathematical Olympiad (IMO) \citep{trinh2024solving}. 
The reasoning capabilities of large language models are fundamentally shaped by the thinking strategies they employ. Widely adopted approaches include Chain-of-Thought (CoT) \citep{wei2022chain}, Tree-of-Thought (ToT) \citep{yao2023tree}, and Diagram-of-Thought (DoT) \citep{zhang2024diagram}. While these strategies substantially improve multi-step logical reasoning accuracy by prompting models to generate explicit intermediate reasoning steps, they often exhibit an over-thinking phenomenon that consumes excessive computational resources and increases response time. This inefficiency highlights a critical question: what is the intrinsic single-pass reasoning capacity of these models? Specifically, how many reasoning steps can a model effectively execute without requiring iterative prompting or external scaffolding?

Based on the Transformer  and the information propagation rules, we utilize a common symbolic multi-steps reasoning task to show that the limit of reasoning steps in single-pass of an L-layer Transformer is between $O(3^{L-1})$ and $O(2^{L-1})$. The key ingredient is that, i) in one layer, tokens parallelly perform reasoning; ii) each position can store information of multiple tokens in different sub-linear space.

Building on established Transformer architectures and information propagation mechanisms from prior research, we employ symbolic multi-step reasoning tasks to investigate the theoretical limits of reasoning depth achievable in a single forward pass through an L-layer Transformer. Our analysis demonstrates that the maximum number of reasoning steps is between $O(3^{L-1})$  and $O(2^{L-1})$. This result stems from two key architectural properties: (i) tokens execute reasoning operations in parallel within each layer, and (ii) each embedding in a hidden layer can encode information from multiple tokens across distinct sublinear spaces. We also perform experiments to support our analysis. For $3$-layer Transformers, we find that it requires large hidden dimensions to execute parallel reasoning. The maximum reasoning steps have lower and upper bounds.

\section{Transformer and Reasoning Mechanism}

    \subsection{Transformer Architecture}

        We investigate a decoder-only Transformer with $L$-layer attention blocks. For integer $n$, given any sequence $(x_i)_{1\leqslant i \leqslant n}$, we denote its one-hot encoding
        \footnote{One-hot encoding is a technique that represents categorical data as binary vectors, where only one bit is set to 1 others are  set to 0.}
        as $\mX^{\mathrm{in}} \in \mathbb{R}^{n \times d}$ with $d$ as the dictionary size. 
        
        The model first applies an embedding layer including both token embedding and positional encoding to obtain the input representation as $\mX^{(0)} = {\mX^{\mathrm{emb}}} + \mX^{\mathrm{pos}} \in \mathbb{R}^{n \times d_m}$. Moreover, we denote the set of  word embeddings of each word in the dictionary as $W^{E}$.
        We shall use the single-head attention in each layer which is computed as follows:
        \begin{align*}
            \mathcal{A}^{(l)}(\mX^{(l)}) &= \operatorname{SoftMax}\left(\frac{\mathrm{mask}(\mX^{(l)}\mW^{q(l)}\mW^{k(l),  \mathsf{T}}\mX^{(l),  \mathsf{T}})}{\sqrt{d_k}}\right),  \\
            \quad  \mX^{\mathrm{qkv}(l)} &= \mathcal{A}^{(l)}({\mX}^{(l)}) {\mX}^{(l)} \mW^{v(l)}\mW^{o(l)}, 
        \end{align*}
        where $0\leqslant l\leqslant L$ and $\tilde{\sigma}$ is the softmax operator. For simplicity of expression,   we will abbreviate $\mW^{q(l)}\mW^{k(l),  \mathsf{T}}$ as $\mW^{qk(l)}$ and $\mW^{v(l)}\mW^{o(l),  \mathsf{T}}$ as $\mW^{vo(l)}$ in the following text. 
        Also, we ignore the normalization coefficient $\sqrt{d_k}$ in later sections for notational simplicity.
        The output of the $(l+1)$-th layer is obtained as:
        \begin{equation*}
            \mX^{\mathrm{ao}(l)} = \mX^{(l)} + \mX^{\mathrm{qkv}(l)},   \ \  \mX^{(l+1)}=\operatorname{LayerNorm}(f^{(l)}({\mX}^{\mathrm{ao}(l)})+\mX^{\mathrm{ao}(l)}),  
        \end{equation*}
        where $f^{(l)}(\cdot)$ represents the feedforward neural network of the $(l+1)$-th layer. The final output (in the form of token indices within the vocabulary) is obtained as:
        \begin{equation*}
            \mY = \operatorname{SoftMax}({\mX}^{(L)}_n\mW^{p})\in \mathbb{R}^{d}.
        \end{equation*}
    
    \subsection{Induction Reasoning Mechanism}

    Based on numerous works on In-Context Learning, Induction Heads \citep{brown2020language,garg2022can,bietti2024birth, nichani2024transformers}, and recent studies on multi-step reasoning \citep{wang2025understandinglanguagemodelsolve, yu2025back}, the reasoning capability of Transformers can be largely attributed to a mechanism called the Buffer Mechanism for storing diverse information, together with adjacent position matching and same token matching for achieving information matching and transmission.

    \textbf{Buffer Mechanism}  The Buffer Mechanism is a crucial way for Transformers to store multiple pieces of information \citep{wang2025understandinglanguagemodelsolve}. Specifically, the interaction of information among tokens in a Transformer occurs in the attention module. Figure~\ref{fig:linear_and_parallel_reasoning}(a) illustrates the information flow of a 3-layer model performing 2-step reasoning, i.e., given a sentence of the form “…\texttt{[a]}\texttt{[b]}…\texttt{[b]}\texttt{[c]}…\texttt{[a]}”, the model is required to output \texttt{[c]}. The dashed lines denote residual connections, while the solid lines denote information propagation induced by the attention mechanism. 
    When a token (e.g., $\texttt{[b]}$) attends to a previous token (e.g., $\texttt{[a]}$), 
    its next-layer state is not simply $\texttt{[a]+[b]}$, but rather $\texttt{[a]}\mW^{vo}+\texttt{[b]}$. 
    In other words, the Transformer stores the two pieces of information into subspaces spanned by different matrices through a linear transformation.

    \textbf{Adjacent Position Matching}  Similar to humans, language models rely heavily on the immediately preceding word when predicting the next word \citep{barbero2024round}. That is, the model can leverage positional encodings to establish connections between adjacent tokens. In fact, constructing such an attention weight matrix is not difficult. Assuming the positional encodings approximately satisfy $\vp_{i}^{\mathsf{T}}\vp_{i}=1, \vp_{i}^{\mathsf{T}}\vp_{j}=0, i\ne j$, it suffices to construct:
    \begin{equation}
        \mW^{qk}=\sum_{i=1}^{[l_\text{seq}/2]}\vp_{2i}\vp_{2i-1}^{\mathsf{T}},
    \end{equation}
    \begin{equation}
        (x_{2i}+\vp_{2i})\mW^{qk}(x_{2i-1}+\vp_{2i-1})^{\mathsf{T}}\approx 1,
    \end{equation}
    Clearly, by this method, we can construct attention between any adjacent tokens of fixed length. 
    However, due to the inherent diversity of language tasks, only the attention between the most adjacent pair is the most salient. We refer to this mechanism as adjacent position matching.

    \textbf{Same Token Matching} Same token matching is the most essential mechanism within induction heads. Its existence grants Transformers strong out-of-distribution generalization ability. As shown in the Figure~\ref{fig:linear_and_parallel_reasoning}(a), because both nodes in the first layer contain the same information $\texttt{[a]}$, they can attend to each other via the same token matching mechanism. Specifically, it suffices that the weight matrices satisfy $\mW^{qk(1)}\mW^{vo(0),\mathsf{T}}=I$, in which case
    \begin{equation}
        q(\texttt{[a]})k(\texttt{[a]}\mW^{vo(0)}+\texttt{[b]})^{\mathsf{T}}\approx \texttt{[a]}\mW^{qk(1)}\mW^{vo(0),\mathsf{T}}\texttt{[a]}^{\mathsf{T}}=\texttt{[a][a]}^{\mathsf{T}}\approx 1,
    \end{equation}
    That is, the final token node will allocate nearly all of its attention to the previous node containing the same information $\texttt{[a]}$, thereby transmitting $\texttt{[b]}$ to the final node in the next layer. In this way, a single-step reasoning is achieved. Multi-step reasoning follows the same principle.

    \begin{figure}[ht]
       \centering
        \includegraphics[width=\textwidth]{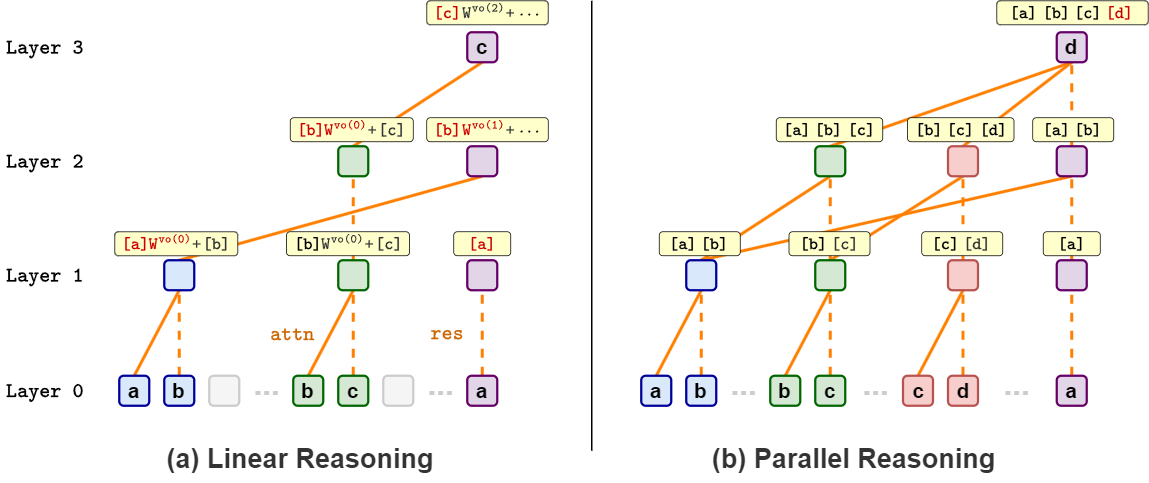} 
        \caption{Illustration of linear reasoning and parallel reasoning.}
        \label{fig:linear_and_parallel_reasoning}
    \end{figure}

    \subsection{Parallel Reasoning}
    
    However, we note that the above-described mode, where each layer performs only one step of reasoning, is far from the upper limit of the Transformer model. As shown in Figure~\ref{fig:linear_and_parallel_reasoning}(b), adjacent position matching and same token matching can occur multiple times within a single layer, thereby enabling even shallow Transformer models to perform multi-step reasoning. We refer to this phenomenon as parallel reasoning. The central question considered in this paper is: given only adjacent position matching and same token matching, what are the upper and lower bounds of the parallel reasoning step that a transformer with $L$ layers attention blocks can perform?

\section{Informal Theorems}

    To investigate the above question, we first consider the simplest case in which all reasoning relations are arranged sequentially. As shown in Figure~\ref{fig:example_lower_and_upper_bound}(a), the information flow of reasoning in this setting exhibits a clear “binary tree” structure. It then follows directly that the reasoning steps scale as $O(2^{L-1})$. In what follows, we will provide a rigorous proof of this result by mathematical induction. We note that permuting the order of reasoning pairs within the sequential arrangement does not disrupt the flow of sequential reasoning. Hence, when logical relations are arranged in sequence, the reasoning steps of a Transformer constitute the lower bound among all possible cases.

    On the other hand, we observe that for a 3-layer model, when the data are arranged as illustrated in Figure~\ref{fig:example_lower_and_upper_bound}(b), the final layer carries the maximum amount of information. The data in this case exhibit an evident fractal structure. The advantage of such a configuration is that each local terminal node can simultaneously match two preceding nodes by leveraging both the maximum and minimum information it carries, thereby expanding its information content. Consequently, the reasoning steps scale as $O(3^{L-1})$. Therefore, we arrive at the following informal conclusion:
    \begin{figure}[ht]
       \centering
        \includegraphics[width=\textwidth]{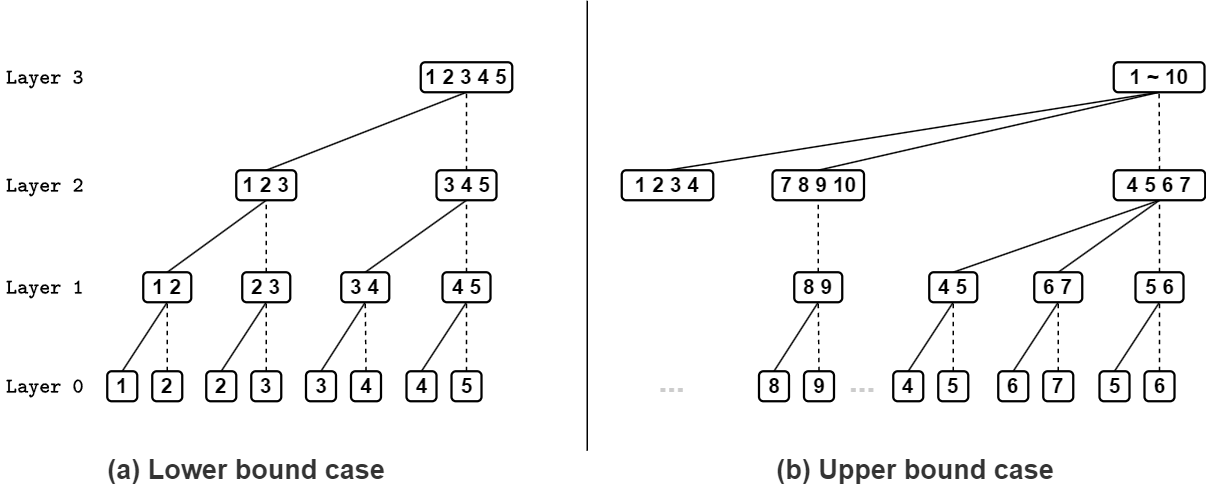} 
        \caption{Example of lower bound and upper bound of parallel reasoning.}
        \label{fig:example_lower_and_upper_bound}
    \end{figure}
\begin{theorem}[Informal Corollary \ref{Corollary_Transformer}]
     The maximal number of reasoning steps a transformer with $L$ layers attention blocks can perform has a lower bound $O(2^{L-1})$ and an  upper bound $O(3^{L-1})$.
\end{theorem}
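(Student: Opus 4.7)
The plan is to establish the two bounds separately via induction on the number of attention layers $L$, exploiting the buffer mechanism, adjacent position matching, and same token matching described in the preceding section. Both arguments track how much reasoning information a terminal token accumulates after each layer, but the constructions and counting arguments differ markedly.

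For the lower bound $O(2^{L-1})$, I would give an explicit construction in which the reasoning pairs are laid out sequentially so that the information flow forms a complete binary tree, as in Figure~\ref{fig:example_lower_and_upper_bound}(a). The base case $L=1$ handles a single reasoning pair via adjacent position matching. For the inductive step, I assume an $(L-1)$-layer transformer propagates $2^{L-2}$ sequential reasoning steps into the buffer of some token, and then show that one additional attention layer uses same token matching to join two such buffered chains into one of length $2^{L-1}$. The weight matrices $\mW^{qk(l)}$ and $\mW^{vo(l)}$ are set in the style of the paper's earlier equations so that each layer simultaneously realizes adjacent position matching on fresh tokens and same-token matching on tokens that already carry buffered content. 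Since any permutation of sequentially arranged pairs preserves this binary tree structure, the construction yields a valid lower bound on the maximum number of reasoning steps.

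For the upper bound $O(3^{L-1})$, I would consider the fractal arrangement of reasoning pairs in Figure~\ref{fig:example_lower_and_upper_bound}(b) and prove a recurrence $N(L) \leq 3 \cdot N(L-1)$ on the maximum number $N(L)$ of distinct reasoning links reachable by a terminal token in an $L$-layer model. The factor of three reflects the central architectural observation of the paper: within one attention layer, a single terminal token can issue same-token matches against at most two previously stored buffer components, namely its maximum-information and minimum-information ends, in addition to extending an existing chain, giving at most three child branches. Combining the recurrence with $N(1)=1$ produces $N(L) \leq 3^{L-1}$.

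The main obstacle will be making the upper bound rigorous. The lower bound is constructive once the binary tree arrangement is exhibited, whereas the upper bound requires proving that no data arrangement allows more than a three-fold expansion per layer. This amounts to (a) formulating a precise combinatorial invariant for the information content stored in a hidden embedding after each layer, (b) showing that under adjacent position matching and same token matching a single attention step can produce at most two new nontrivial links per token per layer, and (c) verifying that the buffer subspaces cannot overlap in a way that packs additional reasoning steps into the same embedding without collision. Since the two matching mechanisms interact nontrivially across layers and the buffer mechanism distributes information over several sublinear subspaces, I expect the bookkeeping for step (b) to be the most delicate part of the argument.
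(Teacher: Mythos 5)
Your overall strategy coincides with the paper's: both bounds are established by induction on the layer index, the lower bound via the binary-tree doubling of buffered chains under same token matching (this is the content of Theorems \ref{Information_quantity_theorem} and \ref{finite_sequence_theorem}), and the upper bound via a three-fold per-layer expansion; the paper likewise discards the mask condition when proving the upper bound, since masking can only reduce the information a node accumulates. Your lower-bound construction, and your remark that permuting sequentially arranged pairs does not break it, also match the paper, which in fact proves the stronger statement that \emph{every} arrangement yields information quantity at least $2^{l-1}+1$ at layer $l$.

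The one place where your plan would run into trouble is the justification you give for the recurrence $N(L)\leqslant 3N(L-1)$. You attribute the factor of three to a cap on the number of same-token matches a terminal token can issue (``at most two previously stored buffer components \dots in addition to extending an existing chain''). Rule 3 imposes no such cap: in a single layer a node may merge with arbitrarily many earlier nodes, provided each shares a token with it, so your step (b) as stated is not provable from the rules. What actually bounds the growth --- and what the paper's appendix induction formalizes --- is a contiguity invariant: the value set of every node is a contiguous segment of the underlying reasoning chain, of length at most $3^{l-1}+1$ at layer $l$. Any node merged into it must carry a segment that overlaps the current one, so no matter how many merges occur, the union can extend the segment by at most $3^{l-1}$ on each end, giving length at most $3^{l}+1$ at the next layer. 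Your step (a) gestures at exactly such an invariant; once you replace the ``at most two matches'' claim with this interval-overlap argument, your proof goes through and is essentially the paper's proof (up to the bookkeeping factor of two in Corollary \ref{Corollary_Transformer} that converts stored tokens into effective forward reasoning steps, which is absorbed by the $O(\cdot)$ in the informal statement).
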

Next, we will provide a formal statement of the problem and a rigorous proof of the conclusion.

\section{Symbolic Reasoning Task}
In this section,   we give a brief introduction to the reasoning task and some related definitions. Moreover,   we shall introduce the rules of information propagation. 

A reasoning task  typically involves a question and an answer to that question, along with the rule and process to get the answer.  For example, given  $A_1 \subseteq A_2$ and $A_2 \subseteq A_3$, the question is the relation of $A_1$ and $A_3$, and  the answer is  $A_1 \subseteq A_3$. We shall use a more symbolic way to express reasoning tasks as in the following example in figure \ref{Three_Step_Example}.
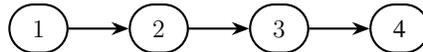
\begin{figure}[H]
  \centering

\begin{tikzpicture}[
    node distance=2cm,  
    box/.style={
        draw,   
        rounded rectangle,    
        minimum width=1.2cm,   
        minimum height=0.8cm,  
        thick,   
        font=\large,  
        align=center,
        scale = 0.8
    }
]
    \node[box] (1) {$1$};
    \node[box] (2) [right of=1] {$2$};
    \node[box] (3) [right of=2] {$3$};
    \node[box] (4) [right of=3] {4};
    \draw[->,   thick,   >=Stealth] (1) -- (2);
    \draw[->,   thick,   >=Stealth] (2) -- (3);
    \draw[->,   thick,   >=Stealth] (3) -- (4);
\end{tikzpicture}
\caption{Three steps reasoning task. One step reasoning leads to $2$,   two steps reasoning leads to $3$.}
\vspace{-1.5em}  
\label{Three_Step_Example}
\end{figure}

We use a sequence $(1,2,2,3,3,4)$ to denote this reasoning task. Indeed, this sequence is composed of three bigrams $(1,2)$, $(2,3) $, $(3,4)$, and each bigram represents one step of reasoning. We call these bigrams reasoning pairs, which we will define below.
\begin{definition}
  A reasoning pair is an element in $\mathbb{Z}^2 $ of the form $\boldsymbol{a}_i = (\boldsymbol{a}^1_i,   \boldsymbol{a}^2_i)$ where  $i \in \mathbb{Z}$,   $\boldsymbol{a}^1_i,  \ \boldsymbol{a}^2_i \in \mathbb{Z}$ and $\boldsymbol{a}^1_i \neq \boldsymbol{a}^2_i $. 
\end{definition}

The set of all reasoning pairs is denoted as $\mathcal{A}$. $a^1_i \rightarrow a^2_I$ represents one step of reasoning.

It is natural that we shall define the $s$ step reasoning chain as follows.

\begin{definition}

An $s$ step reasoning chain is a finite sequence  $(\boldsymbol{a}_i)_{ 1 \leqslant i \leqslant s}$ with $\boldsymbol{a}_i \in \mathcal{A}$,   and it shall satisfy the following conditions:

\begin{itemize}
  \item $\boldsymbol{a}^2_i = \boldsymbol{a}^1_{i+1}$  for $1 \leqslant i \leqslant s-1$;
  \item For any subsequence  $(\boldsymbol{a}_{i_{m}})_{i_m \in \mathcal{I}}$ of $(\boldsymbol{a}_i)_{ 1 \leqslant i \leqslant s}$,   
  we have $\boldsymbol{a}^1_{\min\{ \mathcal{I} \}} \neq \boldsymbol{a}^2_{\max\{ \mathcal{I}\}}$ (no loop),   where $\mathcal{I}$ is a subset of $\{1,  2,  \cdots,  s\}$ containing at least two elements. 
\end{itemize}
\end{definition}
The first condition ensures that the reasoning chain does not break before the final step, and the second condition  ensures that there is no loop of arbitrary size in the reasoning task. 
For example, the sequence $((1,2),(2,3),(3,1))$ and $((1,2),(3,4),(4,5))$ are
not  reasoning chains.

\begin{remark}
  For notational simplicity, here  and  in the sequel, we shall write $(\boldsymbol{a}_i)$ for  $(\boldsymbol{a}_i)_{i\in I}$ when the index set $I$ is clear from the context.
\end{remark}

We shall also consider the case  when the reasoning chain is of infinite length. 

\begin{definition}

A sequence $(\boldsymbol{a}_i)_{i\in \mathbb{Z}}$ is called a reasoning chain if it satisfies the following conditions:
\begin{itemize}
  \item $\boldsymbol{a}_i \in \mathcal{A}$;
  \item $\boldsymbol{a}^2_i = \boldsymbol{a}^1_{i+1}$  for $i \in \mathbb{Z}$;
  \item For any  subsequence  $(\boldsymbol{a}_{i_{m}})_{i_m \in \mathcal{I}}$ of $(\boldsymbol{a}_n)$,   we have $\boldsymbol{a}^1_{\min\{ \mathcal{I} \}} \neq \boldsymbol{a}^2_{\max\{ \mathcal{I}\}}$,   where $\mathcal{I} \subseteq \mathbb{Z}$ containing at least two elements. 
\end{itemize}
\end{definition}
Note that for any  $i_0,  s \in \mathbb{Z}$,   we can truncate the reasoning chain $(\boldsymbol{a}_n)$ as follows
\begin{equation}
  \boldsymbol{\tilde{a}}_k = \boldsymbol{a}_{i_0+k-1},  \  1\leqslant k \leqslant s
\end{equation}
to get an $s$ step reasoning chain $(\boldsymbol{\tilde{a}}_k)$.

In practice, a sentence may consist of reasoning pairs which are not in order.  Due to the mask condition which is common in the LLM, the order of reasoning pairs may influence the information propagation. To describe the order of these reasoning pairs and their relation to the reasoning chain,  we need to introduce the concept of permutation.
\begin{definition}
  A symmetric group $\operatorname{Sym}(S)$ on a countable set $S$ is a group whose elements are all bijective maps from $S$ to $S$ and whose group operation is that of  function composition. 
\end{definition}
The elements of a symmetric group are called permutations. And we shall focus on $\operatorname{Sym}(\mathbb{Z})$.
\begin{definition}\
  Given a reasoning chain $(\boldsymbol{a}_m)_{m\in \mathbb{Z}}$ and a permutation $\sigma \in \operatorname{Sym}(\mathbb{Z})$, 
  a sequence $(x_i)_{i\in \mathbb{Z}}$ is called a reasoning sequence constructed from  $(\boldsymbol{a}_m)_{m\in \mathbb{Z}}$ and $\sigma $ if it satisfies:
\begin{equation}\label{xa_relation}
  x_i = \boldsymbol{a}_{\sigma (\lfloor \frac{i+1}{2} \rfloor) }^{2-(i \bmod 2)}.
\end{equation}
 Also, $(\boldsymbol{a}_m)_{m\in \mathbb{Z}}$ and $\sigma$ are called the constructing reasoning sequence and constructing permutation of $(x_i)$, respectively.
\end{definition}
When referring to a reasoning sequence $(x_i)$,   we are actually denoting the tuple $((x_i),  (\boldsymbol{a}_m),  \sigma)$. 
Moreover,   if $\sigma = \text{Id}$,   then the reasoning sequence is called a sorted reasoning sequence.
Note that from the relation \eqref{xa_relation} we also have 
\begin{equation}\label{ax_relation}
  \boldsymbol{a}_i = \boldsymbol{a}_{\sigma ( \sigma^{-1} (i))} = (x_{2 \sigma^{-1}(i)-1},  x_{2 \sigma^{-1}(i)} ),  
\end{equation}
where $\sigma^{-1}$ is the inverse of $\sigma$ satisfying $\sigma \circ \sigma^{-1} = \sigma^{-1} \circ \sigma = \text{Id} \in \text{Sym}(\mathbb{Z})$.

\begin{remark}
    In the definition of reasoning sequence we use the permutation to change the order of reasoning pairs which does not break the relation inside each reasoning pair. Moreover, no permutation should be applied to the original sequence $(x_i)$. For example, the sequence $(1,2,2,3,3,4)$ can be $(2,3,3,4,1,2)$ or $(3,4,2,3,1,2)$ under some certain permutations. Both of these sequences are related to the reasoning chain $((1,2),(2,3),(3,4))$. However,  it cannot be transformed into $(1,3,3,4,2,3)$ through any permutation that acts on reasoning pairs. 
\end{remark}
\begin{figure}[H]
\vspace{-1em}  
  \centering
  \begin{tikzpicture}[
    every node/.style={font=\footnotesize},  
    myarrow/.style={-stealth,   thick},  
    node distance=1cm 
]
\node (layer1) {\(\cdots,  a_1,   a_2,   a_3,   a_4,   \cdots\)};
\node [below=of layer1] (layer2) {\(\cdots,   \boldsymbol{a}_{\sigma(1)},   \boldsymbol{a}_{\sigma(2)},   \boldsymbol{a}_{\sigma(3)},   \boldsymbol{a}_{\sigma(4)},  \cdots \)}; 
\node [below=of layer2] (layer3) {\( \cdots,   x_1,   x_2,   x_3,   x_4,   x_5,   x_6,   x_7,   x_8,   \cdots\)};
\draw[myarrow] ($(layer1.south)+(0.15,  0)$) -- 
    node[pos=0.5,   right=1mm,   fill=white,   inner sep=1pt] {$\sigma$} 
    ($(layer2.north)+(0.15,  0)$);
\draw[myarrow] ($(layer2.north)+(-0.15,  0)$) -- 
    node[pos=0.5,   left=1mm,   fill=white,   inner sep=1pt] {$\sigma^{-1}$} 
    ($(layer1.south)+(-0.15,  0)$);
\draw[myarrow] ($(layer2.south)+(0.15,  0)$) --
node[pos=0.5,   right=1mm,   fill=white,   inner sep=1pt] {$\eqref{xa_relation}$} 
 ($(layer3.north)+(0.15,  0)$);

\draw[myarrow] ($(layer3.north)+(-0.15,  0)$) -- 
    node[pos=0.5,   left=1mm,   fill=white,   inner sep=1pt] {$\eqref{ax_relation}$} 
    ($(layer2.south)+(-0.15,  0)$);
\end{tikzpicture}
\caption{Relationship between reasoning chain and reasoning sequence.}
\vspace{-1.5em}  
\end{figure}

Similarly, we shall also use a  reasoning sequence of finite length.
\begin{definition}
  An $s$ step reasoning sequence $(x_i)_{1\leqslant i \leqslant 2s}$ with constructing reasoning chain $(\boldsymbol{a}_m)_{1\leqslant m \leqslant s}$ and constructing permutation $\sigma$ is defined as:

  \begin{equation}
    x_i = \boldsymbol{a}_{\sigma (\lfloor \frac{i+1}{2} \rfloor) }^{2-(i \bmod 2)}, \ \text{for   } 1 \leqslant i \leqslant 2s.
  \end{equation}
\end{definition}

\begin{example}
  The sequence $(x_i)_{i\geqslant 1} = (\lfloor \frac{i}{2} \rfloor)_{i \geqslant 1}$ can be seen as a sorted reasoning sequence with constructing permutation $\sigma= \operatorname{Id}$ and constructing reasoning chain $((0,1),(1,2),(2,3),(3,4),\cdots)$. 
\end{example}

\begin{example}\label{nontrivial_example}
   The sequence $(x_i) = (1,2,6,3, 2,4,3,5,4,6)$ with constructing reasoning chain $(\va_m)=((1,2),(2,4),(4,6),(6,3),(3,5))$ and constructing permutation $\sigma$ satisfying
  $\sigma(1)=1,\ \sigma(2)=4, \ \sigma(3)=2, \ \sigma(4)=5, \ \sigma(5) = 3$.
  In this example, 
  \begin{equation*}
    \begin{aligned}
    &\va_1 = (1,2) = (x_1,x_2), \ \ \va_2= (2,4) = (x_5,x_6), \ \ \va_3 =(4,6) = (x_9,x_{10}).
    \end{aligned}
  \end{equation*}
  \begin{figure}[h]
  \vspace{-1.5em}  
      \centering
    \begin{tikzpicture}[
        node distance=2cm,  
        box/.style={
            draw,   
            rounded rectangle,    
            minimum width=1.2cm,   
            minimum height=0.8cm,  
            thick,   
            font=\large,  
            align=center,
            scale=0.75
        }
    ]
        \node[box] (1) {$1$};
        \node[box] (2) [right of=1] {$2$};
        \node[box] (3) [right of=2] {$4$};
        \node[box] (4) [right of=3] {$6$};
        \node[box] (5) [right of=4] {$3$};
        \node[box] (6) [right of=5] {$5$};
        \draw[->,   thick,   >=Stealth] (1) -- (2);
        \draw[->,   thick,   >=Stealth] (2) -- (3);
        \draw[->,   thick,   >=Stealth] (3) -- (4);
        \draw[->,   thick,   >=Stealth] (4) -- (5);
        \draw[->,   thick,   >=Stealth] (5) -- (6);
    \end{tikzpicture}
    \caption{Reasoning task represents by $(x_i)$ in example \ref{nontrivial_example}.}
    \vspace{-1.5em}  
    \end{figure}
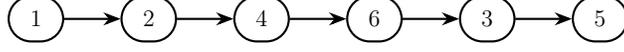
\end{example}
When considering a finite step reasoning sequence, the concept of reasoning start, which indicates where the reasoning task should begin, is also needed.
More specifically,   we consider an $s$ step reasoning sequence $(x_i)_{1\leqslant i \leqslant 2s}$ with constructing reasoning chain $(\va{a}_m)_{1\leqslant m \leqslant s}$ and constructing permutation $\sigma$. Then one more element $x_{2s+1}$ is added to the end of the reasoning sequence $(x_i)$, and $x_{2s+1} = \va^1_{m_0}$ for some $\va_{m_0} \in (\va_m)$.

\begin{example}
     We set the sequence $(x_i) = (1,2,6,3, 2,4,3,5,4,6,4)$ with constructing reasoning chain $(\va_m)$ and constructing permutation $\sigma$ as in  example \ref{nontrivial_example}. In addition, the reasoning start is set to be $4$. Then one step reasoning result is $6$ and two steps reasoning result is $3$.
\end{example}
  
  Next   we define nodes which serves as containers of information. 
  \begin{definition}

  A node corresponding to a reasoning sequence $(x_i)$ is a set of two sets.
  More specifically,   for $i \in \mathbb{N}$,  
  an $l$th layer node is defined as $N^{l}_{i}=\{ V^l_i,     I^l_i \}$ 
  where $V^l_i$ is called  a value set whose elements are integers,  and $ I^l_i \subseteq \mathbb{Z} $ is an index set. Also,   we require that $V^l_i = \bigcup\limits_{i_{\alpha} \in I^{l}_i} \{x_{i_\alpha}\}$.
    \end{definition}
  We define the information quantity of a node $N^{l}_{i}=\{ V^l_i,    I^{l}_i \}$  as $C^{l}_i = | V^l_i |$.
  Moreover,   we denote $\mathcal{N}^l$ as the set of all $l$th layer nodes.
  
  We  denote the information propagation between two nodes  as $N^{l+1}_i =N^{l}_m\star N^{l}_i$ where $N^{l}_m\star N^{l}_i := \{ V^{l}_m \cup   V^{l+1}_i,     I^{l}_m \cup I^{l+1}_i \}$. The case  $i \neq m$ represents the attention mechanism, more specifically, the node $N^l_m$ attends to the node $N^l_i$ and the result is stored in the node $N^{l+1}_i$. The case $i = m$ represents the residual connection, in which case $N^{l+1}_i  = N^l_{i}$.
  Moreover,   there may be more than one node transmitting information to a node $ N^{l}_i$. In this case,   we denote the set of all such nodes as $N^l_{\mathcal{I}} \subseteq \mathcal{N}^{l}$ where $\mathcal{I}$ is some index set. The information propagation process in this case is then defined by $N^{l+1}_i = N^l_{\mathcal{I}}\star N^l_{i} := \{ \bigcup_{{i_\alpha \in \mathcal{I}}} V^{l}_{i_{\alpha}} \cup V^{l+1}_{i},        \bigcup_{{i_\alpha \in \mathcal{I}}} I^{l}_{i_{\alpha}} \cup I^{l+1}_{i}\} $.  
\section{Rules of information propagation}\label{Rules_of_Inf_Prop}
We can extract the following rules of information propagation from the behavior of transformer as follows.
\begin{itemize}[leftmargin=*] \label{Rules}
  \item Rule 0 (Initial setup): The nodes in $0$th layer  are constructed as $N^{0}_i = \{ \{x_i\},    \{i\} \}$. For $l \geqslant 1$,   the $l$th layer of nodes are initially constructed as $N^{l}_i = \{ \emptyset,    \emptyset \}$.
  \item Rule 1 (Mask Condition): Attention happens only from former nodes to later nodes. 
  That is,   the attention mechanism  $N^{l+1}_i =N^{l}_m\star N^{l}_i$ is performed only when $ m < i$. For the multiple  nodes information  transmission case,   the operation $N^{l+1}_i = N^l_{\mathcal{I}}\star N^l_{i}  $  is performed only when $ i_{\alpha}<i$  for all  $ i_{\alpha} \in \mathcal{I}$.

  \item Rule 2 (Adjacent position matching): For $l=1$,  the information in an odd position node can be transmitted to the subsequent even position node. In this case,   the mask condition is satisfied automatically.
  More specifically, the nodes in $1$st layer  are of the  form $N^{1}_{2i} = N^{0}_{2i-1} \star N^{0}_{2i}$ after position matching.
  
  \item Rule 3 (Same token matching): For $l \geqslant 2$,   a node $N^{l}_i$ is updated as $N^{l}_i = N^{l-1}_{\mathcal{I}}\star N^{l-1}_{i}  $ 
  provided there exists a set  $N^{l-1}_{\mathcal{I}} \subseteq \mathcal{N}^{l-1}$ satisfying the mask condition  $i_{\alpha} < i$ and    $V^{l-1}_{i_{\alpha}}  \cap V^{l-1}_{i} \neq \emptyset$  for all $ i_{\alpha} \in \mathcal{I}$.
  \item Rule 4 (Residual Connection): For $ l \geqslant 1$,   $\forall N^l_i \in \mathcal{N}^l$,   $N^l_i = \{ V^{l-1}_i \cup V^{l}_i,   I^{l-1}_i \cup I^{l}_i \}$.
\end{itemize}

\begin{remark}
  With a  slight abuse of notation,   we still denote the value set and index set of a node $N^{l}_i$ as $V^{l}_i$ and $I^{l}_i$ after the information propagation process. 
  For example,   through residual connection $N^{l}_i$ is updated as $N^{l}_i = \{ V^{l-1}_i \cup V^{l}_i,   I^{l-1}_i \cup I^{l}_i\}$,   
  and we still denote the sets $V^{l-1}_i \cup V^{l}_i$ and  $I^{l-1}_i \cup I^{l}_i$ as $V^{l}_i$ and $I^{l}_i$, since we only care about the result after each layer's information propagation.
\end{remark}
\begin{remark}
  It makes no difference whether the  residual connection happens before  or  after  same token matching or position match. The result stored in the next layer remains unchanged.
\end{remark}
\begin{remark}
  In the above information propagation rules we require that the adjacent position matching only happens when $l=1$ and the same token matching only happens when $l \geqslant 2$. 
  We can also set the same token matching to happen when $l=1$ and adjacent position matching to  happen when $l \geqslant 2$, since  the index set $I^{l}_i$ in fact encodes the position information. The necessary condition for adjacent position matching to happen is $\exists I^{l}_k$ and  $I^{l}_j  $ s.t. there exist $i_j \in I^{l}_j$ and $i_k \in I^{l}_k$ satisfying $\min{i_k,i_j} \bmod 2 =1 $ and $|i_j-i_k|=1$.
  It is easy to see that the index set $I^l_i$ in this same token matching first rules plays the same role as $V^l_i$ in above adjacent position matching first rules, and there will be no essential difference for the result in our main theorems under these two different rules. 
  For simplicity, we only consider the above adjacent position matching first rules.
\end{remark}

    Two concepts of layer arise in this framework: the layer of attention blocks and the layer of nodes.
The $l$th layer attention block takes $(l-1)$th layer of nodes as input and  produces the $l$th layer of nodes as output. Due to this relation, we  use “layer $l$” referring to the $l$th layer of attention blocks and $l$th layer of nodes interchangeably.

\section{Main Theorems} \label{Main_theorem}
In this section we analyze the information quantity in the process of information propagation according to the above information propagation rules,   and our main theorem follows.

\begin{theorem}\label{Information_quantity_theorem}
    Under the rules of information propagation,   given any reasoning sequence $(( x_i)_{i\in \mathbb{Z}},(\boldsymbol{a}_m)_{m \in \mathbb{Z}},\sigma)$,   
     for  any  given $x \in \{x_i\}_{i \in \mathbb{Z}}$,   
    and for any $l \in \mathbb{Z}^+$ there exists $ i \in \mathbb{Z}^+$ such that $x \in V^l_i$,   and we have the following bound for $T^{l}(x) =\max\limits_{ i \in \mathbb{Z}}\{C^{l}_{i} \mid x \in V^{l}_{i}\}$:
    \begin{equation}
        \begin{aligned}
            &2^{l-1}+1 \leqslant T^{l}(x) \leqslant 3^{l-1}+1.
        \end{aligned}
    \end{equation}
\end{theorem}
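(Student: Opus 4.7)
The plan is to encode each node's value set as a set of integer ``chain positions'' and prove both bounds by induction on $l$, reducing everything to elementary interval arithmetic. The no-loop condition implies that each value $v$ appearing in the chain equals $\boldsymbol{a}^1_m$ for a unique $m \in \mathbb{Z}$; setting $\mathrm{pos}(v) := m$ and $P^l_i := \mathrm{pos}(V^l_i) \subseteq \mathbb{Z}$, we have $|P^l_i| = C^l_i$, and two nodes share a value if and only if $P^l_{i_\alpha} \cap P^l_i \neq \emptyset$. The first step I would establish is a \emph{contiguity lemma}: for every $l \geqslant 1$ and every $i$, the set $P^l_i$ is (empty or) a contiguous integer interval. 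At $l=1$ this is immediate from Rule 2, since $P^1_{2i} = \{\sigma(i), \sigma(i)+1\}$; for $l \geqslant 2$, Rules 3 and 4 yield $P^{l+1}_i = P^l_i \cup \bigcup_\alpha P^l_{i_\alpha}$, a union of contiguous intervals each meeting the common interval $P^l_i$, which is again a contiguous interval.

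\textbf{Upper bound.}
Set $M^l := \sup_i |P^l_i|$. Granted contiguity, any absorbed interval $P^l_{i_\alpha}$ has length at most $M^l$ and must meet $P^l_i$, so it can extend $P^l_i$ by at most $M^l - 1$ on the left and at most $M^l - 1$ on the right. This yields $|P^{l+1}_i| \leqslant |P^l_i| + 2(M^l - 1) \leqslant 3 M^l - 2$, hence the recursion $M^{l+1} \leqslant 3 M^l - 2$. Together with the base value $M^1 = 2$, induction gives $M^l \leqslant 3^{l-1} + 1$, and therefore $T^l(x) \leqslant M^l \leqslant 3^{l-1} + 1$.

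\textbf{Lower bound.}
A direct induction on $T^l(x)$ alone is too weak, because it offers no handle on \emph{where} the guaranteed node lies and so no way to align two such nodes for a single same-token match. I would instead prove the following strengthened statement by induction: \emph{for every $l \geqslant 1$ and every $k \in \mathbb{Z}$ there exists a layer-$l$ node $N^l_{j(l,k)}$ with $P^l_{j(l,k)} \supseteq [k - 2^{l-1},\, k]$.} The base case $l=1$ uses $j(1,k) := 2\sigma^{-1}(k-1)$, which has $P^1 = [k-1, k]$. For the induction step, apply the hypothesis at $k$ to obtain $N^l_{i_1}$ with $P^l_{i_1} \supseteq [k - 2^{l-1},\, k]$ and at $k - 2^{l-1}$ to obtain $N^l_{i_2}$ with $P^l_{i_2} \supseteq [k - 2^l,\, k - 2^{l-1}]$; both contain the chain position $k - 2^{l-1}$, so $V^l_{i_1} \cap V^l_{i_2} \neq \emptyset$. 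If $i_1 = i_2$, Rule 4 transports this node directly to layer $l+1$; otherwise, letting $i^* := \max(i_1, i_2)$, Rule 3 (applied with the other index in $\mathcal{I}$) combined with Rule 4 forces $V^{l+1}_{i^*} \supseteq V^l_{i_1} \cup V^l_{i_2}$, hence $P^{l+1}_{i^*} \supseteq [k - 2^l,\, k]$, closing the induction. Taking $k = \mathrm{pos}(x)$ then gives $x \in V^l_{j(l,k)}$ with $C^l_{j(l,k)} \geqslant 2^{l-1} + 1$, so $T^l(x) \geqslant 2^{l-1} + 1$.

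\textbf{Main obstacle.}
The genuinely delicate point is finding the correct strengthening for the lower bound: one must control the \emph{shape} of a node's information window (not merely its cardinality), so that two such windows sit at exactly the chain positions required to be spliced by a single same-token match. Once this is formulated and the contiguity lemma is in place, both bounds become easy interval arithmetic, and the gap between $2^{l-1}$ and $3^{l-1}$ acquires a clean geometric meaning: same-token matching can at best \emph{double} the reach of a guaranteed window (lower bound), while it can at best \emph{triple} the length of the largest possible window (upper bound).
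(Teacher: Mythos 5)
Your proof is correct, and it takes a genuinely different route from the paper. The central structural move you make — defining $\mathrm{pos}(v)$ as the unique chain index with $v=\boldsymbol{a}^1_m$ (well-defined by the no-loop condition), passing to position sets $P^l_i=\mathrm{pos}(V^l_i)\subseteq\mathbb{Z}$, and proving an explicit contiguity lemma for these sets — does not appear in the paper. The paper's lower-bound argument instead tracks the invariant $\max\sigma(I^k_i)-\sigma(\lfloor\tfrac{j+1}{2}\rfloor)\geqslant 2^{k-1}-1$ and asserts that this ``implies'' $C^k_i\geqslant 2^{k-1}+1$; that implication is exactly your contiguity lemma in disguise, but the paper never states or proves it, so your formulation actually patches an implicit step. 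For the upper bound, the paper lifts the mask condition, specializes to the canonical sorted sequence $x_i=\lfloor i/2\rfloor$ with $\sigma=\mathrm{Id}$, and verifies the explicit window $V^k_{2m}\subseteq\{m-\tfrac{3^{k-1}+1}{2},\dots,m+\tfrac{3^{k-1}-1}{2}\}$ by induction; your interval-arithmetic recursion $M^{l+1}\leqslant 3M^l-2$ is cleaner, works without specializing to a canonical sequence, and does not require removing the mask. Both approaches match on the lower bound's use of two windows overlapping at a single chain position, and both implicitly read Rule~3 as absorbing from \emph{all} same-token matches (the paper's ``there exists a set $\mathcal{I}$'' phrasing is loose, but its own proof treats it the same way you do). One trade-off: the paper's upper-bound construction is what later feeds the achievability examples in Appendix~C, so it earns its keep there; your proof, being more abstract, would need that construction re-derived. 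Net: your argument is a tighter and more transparent proof of this particular theorem.
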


 The whole proof is based on mathematical induction. Here we only give a sketch of the proof. The complete proof can be found in the appendix~\ref{Proof_of_Main1}.

 Given a reasoning sequence $((x_i),(a_m),\sigma)$, when considering the lower bound, 
by Rule 2 the value sets of nodes in  layer $1$ contain only one reasoning pair except the case where only residual connection happens. 
Suppose that $j<i$ and the node $N^1_i$ contains ${\va^1_{m_i},\va^2_{m_i}}$ as value set, or simply  we say $N^1_i$ contains $\va_{m_i}$, and $N^1_j$ contains $\va_{m_j}$. 
Two cases may happen, $n_i +1 = n_j$ and $\va^2_{m_i} = \va^1_{m_j} $ or $n_j+1 = n_i$  and $\va^2_{m_j} = \va^1_{m_i} $. 
Both cases will lead to $N^2_{i}$ containing $\va_{m_i}$ and $\va_{m_j}$ by Rule 3. This process is the same for any other two nodes containing two adjacent reasoning pairs respectively. The process for layer $3$ is analogous to that for layer $2$: information contained in two nodes in layer $2$ is propagated to one node in layer $3$.
The whole structure is in fact a binary tree and hence the bound is powers of 2.

Regarding the upper bound, due to the mask condition (Rule 1), the permutation $\sigma$ may affect the information propagation. However, the upper bound is always bounded by the case when the mask condition is lifted. Therefore, we ignore the mask condition to find the upper bound.
Just like the proof of the lower bound which use two adjacent reasoning pairs, now we use three adjacent reasoning pairs. Suppose the nodes $N^1_i$, $N^1_j$ and $N^1_k$ contain the reasoning pairs $\va_{m_i}$,  $\va_{m_j}$ and $\va_{m_k}$ respectively and $m_i+1 = m_j = m_k-1$. 
Then by Rule 3, at least one of the node $N^2_{j}$ contains $\va_{m_i}$,  $\va_{m_j}$ and $\va_{m_k}$. As for layer $3$, there are nodes that contain information propagated from three nodes like $N^3_{j}$. The whole structure is a ternary tree and hence the bound is powers of 3.

\begin{theorem} \label{finite_sequence_theorem}
  Under the rules of information propagation \eqref{Rules},   for any $s$ step reasoning sequence $((x_i)_{1\leqslant i \leqslant 2s},(\boldsymbol{a}_m)_{1\leqslant m \leqslant s},\sigma)$ with reasoning start $x_{2s+1}$, for any $l$ satisfying that $1\leqslant l \leqslant 1+ \log_2 s$, we have the following estimate for $C^{l}_{2s+1}$.
  \begin{equation}
    2^{l-1} \leqslant C^{l}_{2s+1} \leqslant  3^{l-1}.
  \end{equation}
\end{theorem}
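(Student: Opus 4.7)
The plan is to run an induction on $l$, adapting the binary-tree and ternary-tree arguments sketched for Theorem~\ref{Information_quantity_theorem} to the specific query position $2s+1$. The starting observation is that $2s+1$ is odd, so Rule~2 (adjacent position matching, which flows from odd positions to the subsequent even positions) contributes nothing to $N^1_{2s+1}$: the initial setup plus the residual connection yield $V^1_{2s+1}=\{x_{2s+1}\}$, hence $C^1_{2s+1}=1$. This gives the base case $l=1$ (where $2^0=3^0=1$), and also explains why the bounds here lose the $+1$ that appears in Theorem~\ref{Information_quantity_theorem}: the start node contributes a single token rather than a full reasoning pair of two tokens.

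For the lower bound I would prove by induction on $l$ that $V^l_{2s+1}$ contains the forward-chain prefix
\begin{equation*}
\{\va^1_{m_0},\,\va^2_{m_0},\,\va^2_{m_0+1},\,\ldots,\,\va^2_{m_0+2^{l-1}-2}\},
\end{equation*}
where $x_{2s+1}=\va^1_{m_0}$. By the no-loop clause in the definition of a reasoning chain these $2^{l-1}$ tokens are pairwise distinct, and the hypothesis $l\le 1+\log_2 s$ is exactly what guarantees the required $2^{l-1}-1$ subsequent pairs are present in $(\va_m)_{1\le m\le s}$. In the inductive step I would merge the current accumulator $V^{l-1}_{2s+1}$ (covering pairs $\va_{m_0},\ldots,\va_{m_0+2^{l-2}-2}$) with an auxiliary even-position layer-$(l-1)$ node $N^{l-1}_j$ which, by the analogous binary-tree growth for a generic even position, carries the dual prefix $\{\va^1_{m_0+2^{l-2}-1},\ldots,\va^2_{m_0+2^{l-1}-2}\}$. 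The two value sets overlap at the shared token $\va^2_{m_0+2^{l-2}-2}=\va^1_{m_0+2^{l-2}-1}$, so Rule~3 triggers; Rule~4 then unions them, producing $2^{l-1}$ distinct tokens at position $2s+1$. The mask condition $j<2s+1$ is automatic since every reasoning pair occupies a position in $\{1,\ldots,2s\}$.

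For the upper bound I would mirror the ternary-tree counting used in the upper bound part of Theorem~\ref{Information_quantity_theorem}. At each layer, Rule~3 lets $C^l_{2s+1}$ grow by at most a factor of $3$: each token already in $V^{l-1}_{2s+1}$ can serve as the overlap witness with at most one attending layer-$(l-1)$ node, and that node contributes at most two fresh tokens beyond the shared one (the same per-layer factor of $3$ that underlies the $3^{l-1}+1$ upper bound there). Starting from $C^1_{2s+1}=1$ rather than $2$, iterating yields $C^l_{2s+1}\le 3^{l-1}$. As in the sketch for Theorem~\ref{Information_quantity_theorem}, the mask condition can be ignored for the upper bound, since lifting it only enlarges the set of admissible matchings.

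The main obstacle I expect is the inductive step of the lower bound: one has to exhibit, at every layer $l$ and for every permutation $\sigma$, an auxiliary node $N^{l-1}_j$ whose value set covers the correct dual half of the sub-chain and overlaps with $V^{l-1}_{2s+1}$ at exactly the bridging token. This calls for careful bookkeeping that tracks which contiguous sub-chain of $(\va_m)$ each accumulator covers at each layer, handles the asymmetry between the start node (beginning with a single token) and generic even-position nodes (beginning with a full reasoning pair), and verifies that the positions of the recruited auxiliary nodes remain strictly below $2s+1$ regardless of $\sigma$, so that Rule~1 is never violated. The upper bound should then follow with only minor modifications to the counting argument already developed for Theorem~\ref{Information_quantity_theorem}.
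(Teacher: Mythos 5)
Your proposal follows essentially the same route as the paper's proof: the paper likewise runs two nested inductions for the lower bound --- first an auxiliary assertion that for every position there is a layer-$l$ node whose value set covers $2^{l-1}$ consecutive reasoning pairs (your ``auxiliary even-position node''), then an induction on the accumulator $V^l_{2s+1}$ that merges it with such a node across the bridging token --- and it obtains the upper bound by the same reduction to Theorem~\ref{Information_quantity_theorem} with one fewer element, since the start node begins with the single token $x_{2s+1}$ rather than a full pair. One caveat on your upper bound: the stated mechanism for the per-layer factor of $3$ (``each attending node contributes at most two fresh tokens beyond the shared one'') is not correct as a counting argument, since a single attending layer-$(l-1)$ node can contribute up to order $3^{l-2}$ fresh tokens; the bound actually rests on the window-containment argument of Theorem~\ref{Information_quantity_theorem} (each node's value set lies in a contiguous block of the chain whose length at most triples per layer), to which you do correctly defer in your closing sentence.
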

The proof of this theorem is similar to the proof of Theorem \ref{Information_quantity_theorem} with two main differences. 
First, we need to take into account the reasoning start which requires using  mathematical induction twice: once on the node of reasoning start and once on the nodes in the sequence. Second, the reasoning sequence is now finite, and we  require it to be long enough to support the structure of the binary tree and the ternary tree. The complete proof is included in the Appendix~\ref{Proof_of_Main2}.

\begin{remark}
    In the proof of this theorem,  the mask condition was relaxed to obtain the upper bound. However, we showed in Appendix \ref{Examples} that there is a way to construct a large class of reasoning sequences such that the upper bound is attained 
    under mask condition. This proves that the upper bound is indeed tight.
\end{remark}

\begin{corollary}\label{Corollary_Transformer}
 For a given transformer with $L$ layers of attention blocks and an input sequence of length $n= 2s+1$, where $s\in \mathbb{Z}^{+}$  and $1\leqslant L \leqslant 1+ \log_2 s$, the  maximal number of reasoning steps $S_p$ it can perform  satisfies the following bounds:
 \begin{equation}
     2^{L-1}-1 \leqslant S_p \leqslant \frac{3^{L-1}-1}{2}.
 \end{equation}
\end{corollary}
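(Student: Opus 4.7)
The plan is to derive the corollary directly from Theorem~\ref{finite_sequence_theorem} by translating $S_p$ into the forward extent of the value set $V^L_{2s+1}$ at the reasoning start.

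Two preparatory observations drive the proof. First, a short induction on the layer using Rule~3 shows that every value set $V^l_i$ is a contiguous segment of the underlying chain: writing $c_k = \boldsymbol{a}^2_k = \boldsymbol{a}^1_{k+1}$, one always has $V^l_i=\{c_L,\ldots,c_R\}$ for some $L\leq R$. The base case follows from Rules~0 and~2, and the inductive step uses that same-token matching merges two value sets only when they share a chain element, so the union of two overlapping contiguous segments is still contiguous. Second, the transformer executes $k$ reasoning steps from $x_{2s+1}=\boldsymbol{a}^1_{m_0}=c_{m_0-1}$ exactly when $c_{m_0-1+k}\in V^L_{2s+1}$; writing $R_l$ for the largest index with $c_{R_l}\in V^l_{2s+1}$, we therefore have $S_p = R_L-(m_0-1)$.

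For the lower bound $S_p\geq 2^{L-1}-1$, I would reuse the sorted reasoning sequence (linear configuration of Figure~\ref{fig:example_lower_and_upper_bound}(a)) that realises the lower bound of Theorem~\ref{finite_sequence_theorem}. There the value set at position $2s+1$ grows purely forward along a binary tree of depth $L-1$, attaining $R_L-(m_0-1)=2^{L-1}-1$.

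For the upper bound I would prove the one-step recursion
\[
R_l \;\leq\; R_{l-1} + 3^{l-2}, \qquad l\geq 2,
\]
with base case $R_1=m_0-1$ (since $2s+1$ is odd, Rule~2 leaves $V^1_{2s+1}=\{c_{m_0-1}\}$ untouched). Fix $l\geq 2$. By Rule~3 the node $N^l_{2s+1}$ equals the union of $V^{l-1}_{2s+1}$ with the value sets of finitely many earlier layer-$(l-1)$ nodes $N^{l-1}_{j_k}$, each a contiguous segment $[c_{L_k},c_{R_k}]$ overlapping $V^{l-1}_{2s+1}=[c_{L_{l-1}},c_{R_{l-1}}]$. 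The overlap condition gives $L_k\leq R_{l-1}$, while the generic size bound $|V^{l-1}_{j_k}|\leq 3^{l-2}+1$ from Theorem~\ref{Information_quantity_theorem} forces $R_k\leq L_k+3^{l-2}\leq R_{l-1}+3^{l-2}$. Taking the maximum over $k$ (and against $R_{l-1}$) yields the recursion, and summing from $l=2$ to $L$ gives
\[
S_p \;=\; R_L-(m_0-1) \;\leq\; \sum_{l=2}^{L} 3^{l-2} \;=\; \tfrac{3^{L-1}-1}{2}.
\]

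The main obstacle is invoking the generic bound $|V^{l-1}_j|\leq 3^{l-2}+1$ uniformly across all positions $j<2s+1$: Theorem~\ref{finite_sequence_theorem} only controls position $2s+1$, whereas the absorption argument above pulls in arbitrary earlier nodes. I would patch this with a short auxiliary lemma run alongside the induction of Theorem~\ref{finite_sequence_theorem}, showing that the position-independent bound of Theorem~\ref{Information_quantity_theorem} persists verbatim when the reasoning sequence is truncated to finite length; the induction is essentially the same one that already yields $C^l_{2s+1}\leq 3^{l-1}$ and introduces no new ingredients. Once this lemma is in hand, summing the recursion completes the proof.
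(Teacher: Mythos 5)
Your proposal is correct and follows the same overall strategy as the paper --- deducing the corollary from the information-quantity bounds of Theorems~\ref{Information_quantity_theorem} and~\ref{finite_sequence_theorem} --- but your execution of the upper bound is substantially more explicit than the paper's. The paper offers no proof beyond the remark that, since the value set at the start node also extends backward along the chain, only $\frac{3^{L-1}-1}{2}$ of the up-to-$3^{L-1}-1$ stored steps are ``effective''; the justification for why the forward extent specifically cannot exceed $\frac{3^{L-1}-1}{2}$ is left implicit (it is visible only in the interval containment $V^{k}_{2m}\subseteq\{m-\frac{3^{k-1}+1}{2},\ldots,m+\frac{3^{k-1}-1}{2}\}$ established for the particular sorted sequence in the proof of Theorem~\ref{Information_quantity_theorem}). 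Your contiguity observation plus the forward-radius recursion $R_l\leqslant R_{l-1}+3^{l-2}$ supplies exactly this missing justification for arbitrary orderings: an absorbed layer-$(l-1)$ node must overlap the current segment on its left end and has at most $3^{l-2}+1$ elements, so it can push the right end forward by at most $3^{l-2}$. Your lower-bound argument via the sorted sequence coincides with the paper's Appendix~\ref{Examples} example. The gap you flag --- that the uniform size bound $C^{l-1}_j\leqslant 3^{l-2}+1$ for \emph{all} positions $j$ of a \emph{finite} sequence is not literally what either theorem states --- is genuine, but it is equally present in the paper (whose proof of the upper bound of Theorem~\ref{finite_sequence_theorem} is itself only a one-sentence reduction to Theorem~\ref{Information_quantity_theorem}), and your proposed auxiliary lemma is the right patch. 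Net: your write-up is, if anything, a tightening of the paper's argument rather than a departure from it.
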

\begin{remark}
    Although the upper bound on information quantity as we show in Theorem \ref{finite_sequence_theorem} is $3^{L-1}$, this only implies that up to $3^{L-1}-1$ reasoning steps are involved. However,  since the process does not track information back through the reasoning chain, only $\frac{3^{L-1}-1}{2}$ reasoning steps are effective in the reasoning tasks. A quick example is the sequence $(0,1,1,2,1)$ with reasoning start $1$. For $l=2$ there are two reasoning steps in total but only one effective reasoning step $1\rightarrow 2$.
\end{remark}

\section{Experiments and Discussions} 

\subsection{Training task}

Our task aligns with the reasoning sequence described earlier. Specifically, we begin with a reasoning sequence denoted as $(x_{i})_{1\le i \le 2s}$. Subsequently, we introduce $x_{2s+1}$ as the starting point. The complete sequence $(x_{i})_{1\le i \le 2s+1}$ serves as the input to the transformer. The output corresponds to the reasoning result from the starting point with a fixed reasoning step (an example in Fig. \ref{fig:reason_exp}).

\begin{figure}[H]
   \centering
    \includegraphics[width=0.75\textwidth]{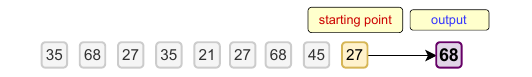} 
    \caption{A two-step reasoning example.}
    \label{fig:reason_exp}
\end{figure}

\subsection{Experimental results}

The detailed hyperparameter settings are provided in the appendix~\ref{appendix: experiment settings}. Reasoning pairs in the test set are totally different from the training set in order to prove transformer can learn reasoning instead of remembering these sequences. In this section, we present key experimental results. We begin by examining a 3-layer transformer architecture. As predicted by our theoretical analysis, this model is capable of solving 3-step reasoning problems with perfect accuracy. Furthermore, we observe that the model dimension $d_m$ in our construction is notably large and a higher hidden dimension aids in storing more intermediate information. To validate these findings, we investigate whether a transformer can be trained to achieve exact accuracy on 3-step reasoning tasks and whether a large $d_m$ is indeed necessary. Our experiments confirm that such a model can be successfully trained, and that a sufficiently large $d_m$ is critical for achieving optimal performance. The relationship between the model dimension $d_{m}$ and test accuracy is summarized in the table below.  
\begin{table}[H]
    \centering
    \caption{Relationship between $d_{m}$ and test accuracy}
    \label{table2}
    \begin{tabular}{cccccc} 
    \toprule
    $d_{m}$         & 64   & 128   & 256   & 512    & 1024   \\
    \midrule
    Test accuracy (\%) & 9.6  & 11.9  & 72.2  & 99.6   & 99.6   \\
    \bottomrule
    \end{tabular}\label{tab:exp1}
\end{table}
As shown in Table \ref{tab:exp1}, test accuracy increases monotonically with dimension $d_ m$, eventually approaching $100\%$, which confirms that a 3-layer transformer can solve 3-step reasoning tasks.

We further investigate scenarios where the model exhibits partial or complete failure. According to the theoretical analysis presented earlier Corollary \ref{Corollary_Transformer}, for reasoning step lengths in the interval [$2^{L-1}$, $\frac{3^{L-1}-1}{2}$], the model can produce correct answers under certain sequence ordering conditions. However, when the number of steps exceeds $\frac{3^{L-1}-1}{2}$, information propagation to the final token becomes insufficient, resulting in incorrect answers. Since this implies that high accuracy is unattainable in such regimes, experimental validation remains partial.

We consider a 3-layer Transformer. Given that $2^{3-1}=4$ and $\frac{3^{3-1}-1}{2}=4$, the model can solve 4-step reasoning tasks when sequence order conditions are satisfied, but fails for 5-step reasoning.
Experimental results under these settings are as follows:
\begin{itemize}
\item For 4-step reasoning, the model achieves a test accuracy of 46.1\%.
\item For 5-step reasoning, the test accuracy drops to 25.1\%.
\end{itemize}

The decrease in accuracy for the 5-step case provides empirical support for theoretical results. We also find that if the reasoning pairs satisfy a proper order, the network can obtain accurate results for 4-step cases \ref{fig:4-step} but not for 5-step cases \ref{fig:5-step}.
Complete training curves and additional experiments are provided in the Appendix~\ref{appendix: experiment settings}.
\section{Related Work}
In-context learning (ICL) was first introduced by \citet{brown2020language}. 
This was subsequently followed by numerous studies \citep{olsson2022context, garg2022can, wang2022interpretability, muller2021transformers, goldowsky2023localizing, bietti2024birth, nichani2024transformers, edelman2024evolution, chen2024training, todd2023function, chen2024can} to investigate the ICL using induction heads, 
which can be seen as a special case of one-step reasoning. 

Various reasoning tasks were proposed to study the multi-step reasoning such as recognizing context-free grammars \citep{zhao2023transformersparsepredictingmasked}, learning sparse functions \citep{edelman2022inductive}, learning compositionality \citep{hupkes2020compositionality}, generalizing out of distribution when learning Boolean functions \citep{abbe2024generalization}, matrix digits task \citep{webb2023emergent}, SET game tasks \citep{altabaa2023abstractors}, reasoning tasks designed by anchor function \citep{zhang2024anchor}.
\citet{kil2024ii, li2024making} use Chain-of-Thought (CoT) reasoning \citep{wei2022chain} to achieve multi-step reasoning via prompt engineering.
    \citet{zhang2023beam} introduced the
beam retrieval framework for multi-hop QA improving the few-shot QA performance of LLMs. 
    \citet{li2024understanding, yang2024large, shalev2024distributional} locate the  potential intermediate answers  within middle layers which play a causative role in shaping the final explicit reasoning results.
    \cite{zhang2024initialization,yao2025analysis,wang2025understandinglanguagemodelsolve} show that with small initialization, Transformers in condense regime \citep{luo2021phase,xu2025overview} can learn reasoning better.
    \citet{wang2025learning} investigated the $k$-fold task which is similar to the k-hop induction head task in \citet{sanford2024transformers}.
    Our paper's difference from \citet{sanford2024transformers} 
    is that we consider the transformer with mask and FNN whereas they ignore mask and FNN, which indicates that our framework operates under assumptions that align more directly with practical Transformers.

\textbf{LLM / AI Tool Disclosure.} 
During the preparation of this manuscript, we used DeepSeek-V3 (July 2024) to assist with language polishing (grammar, wording). 
All substantive content, ideas, and core writing remain the authors' own, and the authors are fully responsible for any error including those introduced during polishing.

\bibliographystyle{plainnat}
\bibliography{references}

\appendix

\section{\textbf{Proof of Theorem \ref{Information_quantity_theorem}}}\label{Proof_of_Main1}

We first prove the lower bound. 

According to Rule 0,   the nodes in first layer are constructed as $N^1_i = \{ (x_i),  \{ i\},  \{ i\}\}$.
According to Rule 2,   we have the nodes in second layer are constructed as $N^1_{2i} =\{ \{x_{2i-1},   x_{2i}\},  \{ 2i\},  \{ 2i-1,  2i\}  \}  $ for   $i \in \mathbb{Z}$. Clearly,   we have $C^1_{2i} = 2$.

We use mathematical induction to prove that for $k \geqslant 2$,   for any $x_j \in \{x_i\}_{i \in \mathbb{Z}}$,   there exists an integer $i\in \mathbb{Z}$ such that 
\begin{itemize} \label{Induction_hypothesis}
  \item $ x_j \in V^k_i$,  
  \item 
  We denote $\sigma({I}^k_i)$ to be the set $\{ \sigma(\lfloor \frac{i_m+1}{2} \rfloor)\}_{i_m \in {I}^k_i} $,   
  and we have 
   $\max{\sigma({I}^k_i)} - \sigma(\lfloor \frac{j+1}{2} \rfloor) \geqslant 2^{k-1}-1$ which implies that $C^k_i \geqslant 2^{k-1}+1$.
\end{itemize}

For $k=2$,   given any $x_j \in (x_i)$,   according to Rule 2,   we know that $x_j$ is contained in the value set $V^1_{2\lfloor \frac{j +1}{2} \rfloor} $ of node $N^1_{2\lfloor \frac{j +1}{2} \rfloor}$. 

For simplicity,    we assume that $j$ is an even number,   then $V^1_{2\lfloor \frac{j +1}{2} \rfloor} =  V^1_{j} = \{x_{j-1},   x_{j}\}$ and $I^1_{j} = \{j-1,  j\}$.
We now want to find a node $N^{i}_2$ such that $V^1_{i}  \cap V^1_{j}  = \{x_j\}$.
To do so,   by definition of reasoning sequence and using the  relation \eqref{xa_relation} we know that $x_{j-1}$ and $x_{j}$ comes from the reasoning pair $\boldsymbol{a}_{\sigma(\frac{j}{2})}$.
By definition of reasoning sequence and relation \eqref{ax_relation} we have 
\begin{equation}
  \begin{aligned}
    &\boldsymbol{a}^2_{\sigma(\frac{j}{2})} = \boldsymbol{a}^1_{\sigma(\frac{j}{2}) + 1},  \\
    &\boldsymbol{a}_{\sigma(\frac{j}{2}) + 1} =  \left(x_{2 \sigma^{-1}\left(\sigma(\frac{j}{2}) + 1 \right) -1},  x_{2 \sigma^{-1}\left(\sigma(\frac{j}{2})+1\right)}\right).
  \end{aligned}
\end{equation}
Set  $i = {2 \sigma^{-1}\left(\sigma(\frac{j}{2}) +1\right)} $. 
It is now clear that the node $N^{1}_{i}$ have value set $V^{1}_{i} $ satisfying $V^{1}_{i}  \cap V^{1}_{j}  = \{x_j\}$ and index set $I^{1}_i = \{i-1,  i \}$. 
By Rule 3,   the node $N^{2}_{\max\{ i,  j\}}$ have value set $V^{2}_{\max\{ i,  j\}}$ such that $\{ x_{j-1},  x_{j},  x_{i+1} \} \subseteq V^{2}_{\max\{ i,  j\}}   $. Hence,   $C^{2}_{\max\{ i,  j\}} \geqslant 3$ and 
$\sigma(\frac{j}{2}) + 1 - \sigma(\frac{j}{2}) = 1 \geqslant 2^{2-1}-1 $.

Now we assume the induction hypotheses hold for $3 \leqslant k \leqslant k_0$ and consider the case $k=k_0+1$. 
There exists a node $N^{k_0}_{m_0}$ such that 

\begin{align}
  &x_j \in V^{k_0}_{m_0},  \\
  &\max{\sigma({I}^{k_0}_{m_0})} - \sigma( \frac{j}{2} ) \geqslant 2^{k_0-1}-1,   \label{pari_counting1}\\
  &C^{k_0}_{m_0} \geqslant 2^{k_0-1}+1.
\end{align}
Set $ {\alpha}=\max\sigma(I^{k_0}_{m_0})$ then using relation \eqref{ax_relation} we know that $ \boldsymbol{a}^2_\alpha = x_{2 \sigma^{-1}(\alpha)} \in V^{k_0}_{m_0}$. 
Using definition of reasoning sequence and relation \eqref{xa_relation} we know that 
\begin{equation}
  x_{2\sigma^{-1}(\alpha+1)-1} = \boldsymbol{a}^1_{\alpha+1} = \boldsymbol{a}^2_{\alpha} = x_{2 \sigma^{-1}(\alpha)}.
\end{equation}
Note that $\sigma( \lfloor \frac{2\sigma^{-1}(\alpha+1)-1+1}{2} \rfloor) = \alpha+1$. And then the induction hypotheses implies that there is a node $N^{k_0}_{m_1}$ such that 
\begin{align}
  &x_{2\sigma^{-1}(\alpha+1)-1} \in V^{k_0}_{m_1},  \\
  &\max{\sigma({I}^{k_0}_{m_1})} - (\alpha +1 )\geqslant 2^{k_0-1}-1,   \label{pari_counting2}\\
  &C^{k_0}_{m_1} \geqslant 2^{k_0-1}+1.
\end{align}
This construction ensures that $x_{2\sigma^{-1}(\alpha+1)-1}=x_{2 \sigma^{-1}(\alpha)} \in V^{k_0}_{m_1} \cap V^{k_0}_{m_0}$,   
then by Rule 3,   we know that the node $N^{k_0 +1}_{\max\{m_0,  m_1\}}$ have value set $V^{k_0+1}_{\max\{m_0,  m_1\}}$ and index set $I^{k_0+1}_{\max\{m_0,  m_1\}}$ satisfying that 
\begin{align}
  &x_j \in V^{k_0+1}_{\max\{m_0,  m_1\}},   \\
  &I^{k_0}_{m_0} \cup I^{k_0}_{m_1} \subseteq I^{k_0+1}_{\max\{m_0,  m_1\}}. \label{index_union}
\end{align}
Combining \eqref{pari_counting1},   \eqref{pari_counting2} and \eqref{index_union} leads to 
\begin{equation*}
  \max{\sigma(I^{k_0+1}_{\max\{m_0,  m_1\}}) } - \sigma({\frac{j}{2}}) \geqslant 2^{k_0}-1,  
\end{equation*}
which implies that $C^{k_0+1}_{\max\{m_0,  m_1\}} \geqslant 2^{k_0+1 -1} +1$. 
This completes the proof for lower bound.

  Next we prove the upper bound. 
  
  It is clear that if Rule 1 is removed,   the information quantity in each node can only maintain unchanged or increase. Therefore,   we consider the no-mask condition,   without loss of generality, we consider the reasoning sequence to be $\{x_i = \lfloor \frac{i}{2}\rfloor \}_{i\in \mathbb{N}}$ with constructing permutation $\sigma= \text{Id}$ and constructing reasoning chain $(\boldsymbol{a}_m) =( (m-1,  m))$.

We will use mathematical induction to prove that 
for $k \geqslant 2$ and $m\in \mathbb{Z}$,   the value set of node $N^{k}_{2m}$ satisfies $V^{k}_{2m}\subseteq \{m-\frac{3^{k-1}+1}{2},  m-\frac{3^{k-1}+1}{2} +1,    \cdots,  m+\frac{3^{k-1}-1}{2}\}$ and $C^{k}_{2m} \leqslant 3^{k-1}+1$. 

Since the index set of a node will play no rule in this proof,   we will just ignore them in the expression of a node.

When $k= 2$,   By Rule 2 the nodes $N^1_i$ are of the form $N^1_{2i}=\{\{ x_{2i-1},   x_{2i}\},     \{2i\} \} $.
Then by Rule 3  the nodes in layer $3$ are of the form $N^2_{2i}
=  \{ \{ i-2,   i-1,   i,  i+1  \} \{2i\}\}$. Therefore,   the conclusion holds for $k=2$.

Assuming the conclusion holds for $k\leqslant k_{0}$.
When $k=k_0 + 1 $,   by inductive hypotheses,   there are three nodes $ N^{k_0}_{2m}$,   $ N^{k_0}_{2m_1}$,   $ N^{k_0}_{2m_2}$ with value sets $ V^{k_0}_{2m}$,   $ V^{k_0}_{2m_1}$,   $ V^{k_0}_{2m_2}$.
We require that  
\begin{equation}
  \begin{aligned}
    m+\frac{3^{{k_0}-1}-1}{2} = m_1-\frac{3^{{k_0}-1}+1}{2},  \\
    m_2+\frac{3^{{k_0}-1}-1}{2} = m-\frac{3^{{k_0}-1}+1}{2}.
  \end{aligned}
\end{equation}
Simple calculation shows that 
\begin{equation}
  m_1 = m+ 3^{{k_0}-1},  \ m_2=m-3^{{k_0}-1}.
\end{equation}
And the three nodes 
$ N^{k_0}_{2m}$,   $N^{k_0}_{2(m+ 3^{{k_0}-1})} $,   $ N^{k_0}_{2(m-3^{{k_0}-1})}$ have value sets
\begin{align}
V^{k_0}_{2m }\subseteq \{m-\frac{3^{{k_0}-1}+1}{2},  m-\frac{3^{{k_0}-1}+1}{2} +1,    \cdots,  m+\frac{3^{{k_0}-1}-1}{2}\},   \\
V^{k_0}_{2(m+ 3^{{k_0}-1})} \subseteq \{m+\frac{3^{{k_0}-1}-1}{2},  m+\frac{3^{{k_0}-1}-1}{2} +1,    \cdots,  m+\frac{3^{{k_0}-1}-1}{2}\},   \\
V^{k_0}_{2(m-3^{{k_0}-1})} \subseteq \{m-\frac{3^{{k_0}-1}+1}{2},  m-\frac{3^{{k_0}-1}+1}{2} +1,    \cdots,  m-\frac{3^{{k_0}-1}+1}{2}\}.
\end{align}
Again by Rule 3,   we know that the node  $N^{k_0+1}_{2m} = N^{k_0}_{2(m+ 3^{k-2})} \star N^{k_0}_{2(m-3^{k-2})} \star N^{k_0}_{2m}$ have value set $V^{k_0+1}_{2m} \subseteq \{m-\frac{3^{{k_0}}+1}{2},   m-\frac{3^{{k_0}}+1}{2}+ 1,  \cdots,  m+\frac{3^{{k_0}}-1}{2}\}$,   and therefore,   $C^{k_0+1}_{2m} \leqslant 3^{k_0} + 1$. This completes the proof.

\section{ \textbf{ Proof of Theorem \ref{finite_sequence_theorem}}} \label{Proof_of_Main2}

We shall also use the mathematical induction to prove this theorem. 

We first assert that  
for $1\leqslant l \leqslant 1+\log_2 s$ and for $j$ satisfying $1 \leqslant \sigma(\lfloor \frac{j+1}{2} \rfloor) \leqslant s- 2^{l-1} +1$ there exists a node $N^l_{k}$ such that 
\begin{equation} \label{finite_inductive_hypotheses}
\bigcup\limits_{ 0 \leqslant \alpha \leqslant 2^{l-1}-1} \{ x_{2\sigma^{-1}\left(\sigma(\lfloor \frac{j+1}{2} \rfloor)+\alpha\right)-1},  x_{2\sigma^{-1}\left(\sigma(\lfloor \frac{j+1}{2} \rfloor)+\alpha\right)}  \} \subseteq V^{l}_k.
\end{equation}

For $l=1$,   the relation \eqref{finite_inductive_hypotheses} can be verified easily since after position matching the nodes in $\mathcal{N}^1$ are of the form $N^1_{2m} = \{ \{ x_{2m-1},   x_{2m}\},    \{2m\},   \{2m-1,   2m\} \}$. 
And note that $x_j\in \{   x_{2\sigma^{-1}\left(\sigma(\lfloor \frac{j+1}{2} \rfloor)\right)-1},  x_{2\sigma^{-1}\left(\sigma(\lfloor \frac{j+1}{2} \rfloor)\right)} \} \subseteq V^{1}_{2 \lfloor \frac{j+1}{2} \rfloor}$.

Now we assume that \eqref{finite_inductive_hypotheses} holds for $l=l_0$,   then we prove it still holds for $l=l_{0}+1$.
Given $x_j \in (x_i)$ with $1 \leqslant \sigma(\lfloor \frac{j+1}{2} \rfloor) \leqslant s- 2^{l_0} +1$,   it is clear that $1 \leqslant \sigma(\lfloor \frac{j+1}{2} \rfloor) \leqslant s- 2^{l_0-1} +1$. Then by inductive hypotheses we know that there exists  a node $N^{l_0}_{k_1}$ such that 
\begin{equation} 
\bigcup\limits_{ 0 \leqslant \alpha \leqslant  2^{l_0-1}-1} \{ x_{2\sigma^{-1}\left(\sigma(\lfloor \frac{j+1}{2} \rfloor)+\alpha\right)-1},  x_{2\sigma^{-1}\left(\sigma(\lfloor \frac{j+1}{2} \rfloor)+\alpha \right)}  \} \subseteq V^{l_0}_{k_1}.
\end{equation}
On the other hand,   since $1 \leqslant \sigma(\lfloor \frac{j+1}{2} \rfloor) \leqslant s- 2^{l_0} +1 $ we have  $1 \leqslant \sigma(\lfloor \frac{j+1}{2} \rfloor)  + 2^{l_0-1}  \leqslant s- 2^{l_0-1} +1$,   and again by inductive hypotheses there exists a node $N^{l_0}_{k_2}$ such that 

\begin{equation} 
\bigcup\limits_{ 0 \leqslant \alpha \leqslant  2^{l_0-1}-1} \{ x_{2\sigma^{-1}\left(\sigma(\lfloor \frac{j+1}{2} \rfloor) + 2^{l_0-1}+\alpha\right)-1},  x_{2\sigma^{-1}\left(\sigma(\lfloor \frac{j+1}{2} \rfloor)+ 2^{l_0-1}+\alpha\right)}  \} \subseteq V^{l_0}_{k_2}.
\end{equation}

Note that by the definition of reasoning sequence we have
\begin{equation*}
  x_{2\sigma^{-1}\left(\sigma(\lfloor \frac{j+1}{2} \rfloor) + 2^{l_0-1}\right)-1} = x_{2\sigma^{-1}\left(\sigma(\lfloor \frac{j+1}{2} +\rfloor) + 2^{l_0-1}-1 \right)}.
\end{equation*}

By Rule 3,   there exists a node $N^{l_0+1}_{\max\{k_1,  k_2\}}$ such that 
\begin{equation*} 
\bigcup\limits_{ 0 \leqslant \alpha \leqslant  2^{l_0}-1} \{ x_{2\sigma^{-1}\left(\sigma(\lfloor \frac{j+1}{2} \rfloor)+\alpha  \right)-1},  x_{2\sigma^{-1}\left(\sigma(\lfloor \frac{j+1}{2} \rfloor)+\alpha\right)}  \}  \subseteq  V^{l_0}_{k_1} \cup V^{l_0}_{k_2} \subseteq V^{l_0+1}_{\max\{k_1,  k_2\}}.
\end{equation*}
This completes the proof of our assertion.

Set the reasoning start as  $x_{2s+1} = x_j$,   
we then assert that 
\begin{equation}\label{assertion2}
  \bigcup\limits_{0\leqslant \alpha \leqslant 2^{l-1}-2 }\{ x_{2 \sigma^{-1}\left( \sigma(\lfloor \frac{j+1}{2} \rfloor)  + \alpha\right) -1} ,     x_{2 \sigma^{-1}\left(\sigma(\lfloor \frac{j+1}{2}\rfloor ) + \alpha \right) } \} \subseteq V^l_{2s+1}.
\end{equation}

When $l=2$ the assertion \eqref{assertion2} can be easily verified.  We assume \eqref{assertion2} holds for $l= l_0$ and consider the case $l=l_0+1$.
By assertion \eqref{finite_inductive_hypotheses},   we know there exists $N^{l_0}_{k}$ such that 
\begin{equation} 
\bigcup\limits_{ 0 \leqslant \alpha \leqslant 2^{l_0-1}-1} \{ x_{2\sigma^{-1}\left(\sigma(\lfloor \frac{j+1}{2} \rfloor)+2^{l_0-1}-1+\alpha\right)-1},  x_{2\sigma^{-1}\left(\sigma(\lfloor \frac{j+1}{2} \rfloor)+2^{l_0-1}-1+\alpha\right)}  \} \subseteq V^{l_0}_k.
\end{equation}

Again,   by definition of reasoning sequence we have 
\begin{equation*} 
  x_{2\sigma^{-1}\left(\sigma(\lfloor \frac{j+1}{2} \rfloor) + 2^{l_0-1} -1\right)-1} = x_{2\sigma^{-1}\left(\sigma(\lfloor \frac{j+1}{2} +\rfloor) + 2^{l_0-1}-2 \right)}.
\end{equation*}
And by Rule 3 we know that 
\begin{equation*}
  \bigcup\limits_{ 0 \leqslant \alpha \leqslant  2^{l_0}-2} \{ x_{2\sigma^{-1}\left(\sigma(\lfloor \frac{j+1}{2} \rfloor)+\alpha  \right)-1},  x_{2\sigma^{-1}\left(\sigma(\lfloor \frac{j+1}{2} \rfloor)+\alpha\right)}  \}  \subseteq V^{l_0}_k \cup V^{l_0}_{2s+1} \subseteq V^{l_0+1}_{2s+1}.
\end{equation*}
This completes the proof of assertion \eqref{assertion2},   which implies that for $l \geqslant 2$, $C^{l}_{2s+1} \geqslant 2^{l-1} $. 

For $l=1$, it is clear that $C^2_{2s+1} = 1$ since only residual connection happens.  And we complete the proof for the lower bound.

The upper bound is proved similarly as in the proof of Theorem \ref{Information_quantity_theorem}, except we consider mainly the reasoning start position, which results in one fewer element.

\section{Examples where the theoretical bounds are achieved}\label{Examples}
In this section we give some examples related to Theorem \ref{finite_sequence_theorem}. In fact, both the lower bound and upper bound can be attained.
\subsubsection*{Lower bound}
We construct a reasoning sequence $(1,  2,  2,  3,  3,  4,  4,  5\cdots,  2s-1,  2s,  1)$. Assume that $1\leqslant l \leqslant 1+ \log_2 s$,   it is clear that $C^{l}_{2s+1} \geqslant 2^{l-1}$

\begin{figure}[H] 
    \centering 
    \includegraphics[width=0.8\linewidth]{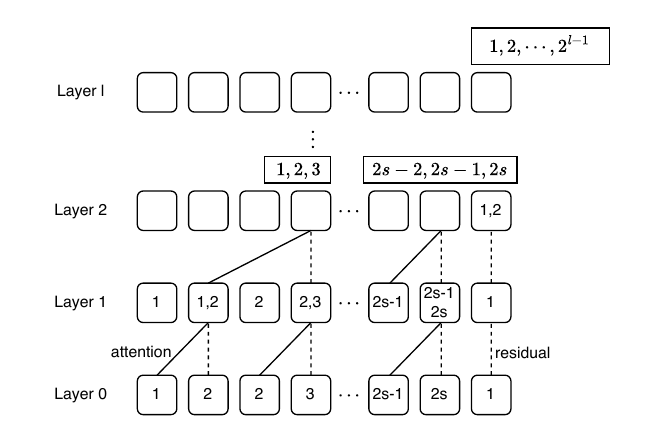} 
    \caption{Lower bound example} 
    \label{fig:Lower_bound_example} 
\end{figure}

\subsubsection*{Upper bound}
In this section, we shall use the concept of truncation of a reasoning sequence. This truncated reasoning sequence contains a finite step reasoning chain and some irrelevant reasoning pairs which serves as some redundant information as in practice. In fact, we truncate a reasoning sequence $(x_i)$ as follows:
\begin{enumerate}
  \item Firstly,   we truncate the constructing reasoning sequence $(\boldsymbol{a}_m)$ to be an $s$ step reasoning chain $ (\boldsymbol{\tilde{a}}_k) =  (\boldsymbol{a}_i,  \boldsymbol{a}_{i+1},  \cdots,  \boldsymbol{a}_{i+s-1})$.

  \item Secondly,   we use the relation \eqref{ax_relation} to find those elements constructed from this $s$ step reasoning chain. That is  
  \begin{equation}
    E = \{ x_{2 (\sigma^{-1}(i)-1)},    x_{2 (\sigma^{-1}(i)-1)+1},  \cdots,   x_{2 (\sigma^{-1}(i+s-1)-1)},    x_{2 (\sigma^{-1}(i+s-1)-1)+1} \}. \label{truncation_chain_index}
  \end{equation}

  \item Thirdly,   we set 
  \begin{equation}
  \begin{aligned}
      \mathcal{I}_E = \{ &{2 (\sigma^{-1}(i)-1)},  {2 (\sigma^{-1}(i)-1)+1},   \cdots, \\  &{2 (\sigma^{-1}(i+s-1)-1)},    {2 (\sigma^{-1}(i+s-1)-1)+1} \}\label{truncation_index},
  \end{aligned}
  \end{equation}
  which is the set of all subscripts of elements in $E$. Moreover, we take the subsequence $(\tilde{x}_i) = (x_{\min\{ \mathcal{I}_E \}+i-1})$ as the truncated reasoning sequence,   
  where  ${1\leqslant i \leqslant \max\{ \mathcal{I}_E \} - \min\{ \mathcal{I}_E \} +1 }$. 

\end{enumerate}

We  call this sequence $(\tilde{x}_i)$ the truncated reasoning sequence containing $(\boldsymbol{\tilde{a}}_k)$.
  
With this definition of truncation, we provide a method to construct  finite step reasoning sequences which allows the upper bound in Theorem \ref{finite_sequence_theorem} to be attained.
This is achieved in a recursive way. 

Consider a reasoning sequence $(\boldsymbol{a}_i)_{i \in \mathbb{Z}} $. For a given integer $i$
we define  
\begin{equation*}
  \vs_{1}(i) =(\boldsymbol{a}_i), \ \  \vs_{2}(i)=(\boldsymbol{a}_i,  \boldsymbol{a}_{i+2},  \boldsymbol{a}_{i+1}),  
\end{equation*}
and 
\begin{equation*}
  \vs_{3}(i)  = (\boldsymbol{a}_i,  \boldsymbol{a}_{i+2},  \boldsymbol{a}_{i+1},   \boldsymbol{a}_{i+6},  \boldsymbol{a}_{i+8},  \boldsymbol{a}_{i+7},   \boldsymbol{a}_{i+3},  \boldsymbol{a}_{i+5},  \boldsymbol{a}_{i+4} ).
\end{equation*}
For simplicity,   we use the notation 
\begin{align*}
 \vs_{3}(i)&=(\vs_{2}(i),  \vs_{2}(i+6),  \vs_{2}(i+3) )\\
 &:=(\boldsymbol{a}_i,  \boldsymbol{a}_{i+2},  \boldsymbol{a}_{i+1},   \boldsymbol{a}_{i+6},  \boldsymbol{a}_{i+8},  \boldsymbol{a}_{i+7},   \boldsymbol{a}_{i+3},  \boldsymbol{a}_{i+5},  \boldsymbol{a}_{i+4} ),  
\end{align*}
and with this notation $\vs_{3}(i)$ is still a sequence of reasoning pairs.

For $k \geqslant 3$ we define
\begin{equation*}
  \vs_{k}(i)=(\vs_{k-1}(i),    \vs_{k-1}(i+2 \times 3^{k-2}),  \vs_{k-1}(i+3^{k-2})).
\end{equation*}
Same notation as in $\vs_{3}(i)$ and all $\vs_{k}(i)$ are sequences of reasoning pairs.

We now construct our reasoning sequence.

Set $r_1 = (\vs_{l-1}(i-\frac{3^{l-1}-1}{2}),\cdots,\vs_3(i-13),\vs_2(i-4), \vs_1(i-1),\vs_1(i), \vs_2(i+1),  \vs_3(i+4),   \cdots,   \vs_{l-1}(i+  \frac{3^{l-2}-1}{2}) )$,   
it is clear that there exist $\sigma \in \text{Sym}( \mathbb{Z})$ such that $(\boldsymbol{a}_{\sigma(1-\frac{3^{l-1}}{2})},\cdots,\boldsymbol{a}_{\sigma(1)},   \boldsymbol{a}_{\sigma(2)},  \cdots,  \boldsymbol{a}_{\sigma(\frac{3^{l-1}-1}{2})}) = r_1$.
By the construction of $r_1$ we know all the reasoning pairs in $r_1$ forms a finite step reasoning chain 
$(\boldsymbol{a}_{i-\frac{3^{l-1}-1}{2}},\cdots\boldsymbol{a}_{i},  \boldsymbol{a}_{i+1},  \cdots,  \boldsymbol{a}_{i+\frac{3^{l-1}-3}{2}})$,   we extend this finite step  reasoning chain to an infinite reasoning chain $\tilde{a}$ by adding reasoning pairs to both sides.
And we take a permutation $\tau  \in \text{Sym}( \mathbb{Z})$ satisfying $\boldsymbol{a}_{\tau(i_m)}  = \boldsymbol{a}_{\sigma(m)} $ where $i_m \in I$ with $I = \{i_{1-\frac{3^{l-1}}{2}},\cdots ,i_1,  i_2,  \cdots,  i_{\frac{3^{l-1}-3}{2}}\}$  and $i_{1-\frac{3^{l-1}}{2}}<\cdots<i_{1}<i_{2}<\cdots<i_{\frac{3^{l-1}-3}{2}}$. 

We truncate the reasoning sequence $((x_j),   \tilde{\va},   \tau)$ to contain the finite reasoning chain 
$(\boldsymbol{a}_{i-\frac{3^{l-1}-1}{2}},  \cdots,  \boldsymbol{a}_{i+\frac{3^{l-1}-3}{2}})$
to get a sequence $(\tilde{x}_j) = (x_{\min\{ \mathcal{I}_E \}+j-1})_{1\leqslant j \leqslant \max\{ \mathcal{I}_E \} - \min\{ \mathcal{I}_E \} +1 }$. Here $E$ and $\mathcal{I}_{E}$ are defined as in \eqref{truncation_chain_index} and \eqref{truncation_index}.  
This $(\tilde{x}_j)$ with reasoning start $x_{\max{\mathcal{I}_E}+1}= \boldsymbol{a}^1_i$ is the sequence we want.

\begin{remark}
  Under the rules of information propagation,   the elements in $(\tilde{x}_j)$ contributing to the node $N^l_{\max{\mathcal{I}_E}+1}$ are those come from the reasoning chain 
  $(\boldsymbol{a}_{i-\frac{3^{l-1}-1}{2}},  \cdots,  \boldsymbol{a}_{i+\frac{3^{l-1}-3}{2}})$.
  Hence,   the information in $N^l_{\max{\mathcal{I}_E}+1}$ will be the same if we consider the reasoning sequence constructed from 
  $(\boldsymbol{a}_{i-\frac{3^{l-1}-1}{2}},  \cdots,  \boldsymbol{a}_{i+\frac{3^{l-1}-3}{2}})$
  and $\sigma$. 
  However,    the above complicated  way we construct truncated sequence is still necessary which shows that there is a large class of reasoning sequence allows the upper bound to be attained.
\end{remark}

\begin{proposition}\label{upper_bound_example_prop}
  We consider the reasoning sequence $(x_i)$ with constructing reasoning chain $(\boldsymbol{a}_m)=((m,  m+1))_{1-\frac{3^{\tilde{l}-1}-1}{2}\leqslant m \leqslant \frac{3^{\tilde{l}-1}-1}{2}}$ and constructing permutation $\sigma$ satisfying that 
  \begin{align}
      &(\boldsymbol{a}_{\sigma(1)},  \boldsymbol{a}_{\sigma(2)},  \cdots,  \boldsymbol{a}_{\sigma(\frac{3^{\tilde{l}-1}-1}{2})}) = (\vs_1(1),  \vs_2(2),     \vs_3(5),  \cdots,  \vs_{l}(\frac{3^{l-1}+1}{2}),\cdots \vs_{\tilde{l}-1}(\frac{3^{\tilde{l}-2}+1}{2}) ),\\
      &
      \begin{aligned}
          (\boldsymbol{a}_{\sigma(1-\frac{3^{\tilde{l}-1}}{2})},  \boldsymbol{a}_{\sigma(2-\frac{3^{\tilde{l}-1}}{2})},  \cdots,  &\boldsymbol{a}_{\sigma(0)})=\\ &(\vs_{\tilde{l}-1}(\frac{3-3^{\tilde{l}-1}}{2}),\cdots,\vs_l(\frac{3-3^l}{2}),\cdots\vs_3(-12),\vs_2(-3),\vs_1(0),).
      \end{aligned}
  \end{align}
  If the reasoning start is set be to $x_{2\sigma(\frac{3^{\tilde{l}-1}-1}{2})+1} = \boldsymbol{a}^1_1$,   then for $1\leqslant l \leqslant \tilde{l}$,   we have $C^l_{2\sigma(\frac{3^{\tilde{l}-1}-1}{2})+1} = 3^{l-1}$.
\end{proposition}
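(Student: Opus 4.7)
I would prove the proposition by a nested induction, with an outer induction on $l$ supported by a structural lemma about each block $\vs_k(i)$.

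First I would establish the key lemma: whenever $\vs_k(i)$ appears in the reasoning sequence determined by $\sigma$, after $k$ layers of propagation there exists a node, located at the terminal even position of that block, whose value set equals the consecutive range $\{i,i+1,\ldots,i+3^{k-1}\}$ of $3^{k-1}+1$ integers. I would proceed by induction on $k$. The case $k=1$ is immediate from Rule 2. For the inductive step, the recursion $\vs_k(i)=(\vs_{k-1}(i),\vs_{k-1}(i+2\cdot 3^{k-2}),\vs_{k-1}(i+3^{k-2}))$ yields, by the inductive hypothesis, three layer-$(k-1)$ nodes at successively later physical positions with value sets $\{i,\ldots,i+3^{k-2}\}$, $\{i+2\cdot 3^{k-2},\ldots,i+3^{k-1}\}$, and $\{i+3^{k-2},\ldots,i+2\cdot 3^{k-2}\}$. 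The middle-valued block is placed last, so its terminal node may legally attend to both earlier ones; its value set shares the boundary element $i+3^{k-2}$ with the first block and $i+2\cdot 3^{k-2}$ with the second, and Rule 3 merges all three into a single node with value set $\{i,\ldots,i+3^{k-1}\}$.

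Next I would apply the lemma to the final position $2s+1$ with reasoning start $\boldsymbol{a}^1_1=1$, proving $C^l_{2s+1}=3^{l-1}$ by outer induction on $l$. For $l=1$ only residual connection acts and $N^1_{2s+1}=\{1\}$. For the inductive step I would use the fact that $\sigma$ assigns, on either side of the reasoning start, two symmetric blocks $\vs_{l-1}(\tfrac{3^{l-2}+1}{2})$ (positive side) and $\vs_{l-1}(\tfrac{3-3^{l-1}}{2})$ (negative side). By the lemma these contain at layer $l-1$ the consecutive ranges $\{\tfrac{3^{l-2}+1}{2},\ldots,\tfrac{3^{l-1}+1}{2}\}$ and $\{\tfrac{3-3^{l-1}}{2},\ldots,\tfrac{3-3^{l-2}}{2}\}$ respectively, both located at physical positions less than $2s+1$. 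The outer inductive hypothesis gives $N^{l-1}_{2s+1}$ with value set $\{\tfrac{3-3^{l-2}}{2},\ldots,\tfrac{3^{l-2}+1}{2}\}$, which meets the two side ranges at the boundary integers $\tfrac{3-3^{l-2}}{2}$ and $\tfrac{3^{l-2}+1}{2}$. Rule 3 then merges all three into $N^l_{2s+1}$ with value set $\{\tfrac{3-3^{l-1}}{2},\ldots,\tfrac{3^{l-1}+1}{2}\}$, giving $C^l_{2s+1}=3^{l-1}$.

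The main obstacle is verifying the mask condition (Rule 1) at every same-token match above. Because $\sigma$ reorders reasoning pairs in the fractal pattern of $\vs_k$, the physical token order differs from the reasoning-chain order, and the inductive argument requires that each absorbing node lie strictly to the right of all nodes it attends to. The construction is engineered precisely so that in $\vs_k(i)$ the middle sub-block $\vs_{k-1}(i+3^{k-2})$ comes last, and on the global scale the negative-side blocks precede all positive-side blocks which precede the reasoning start. I would prove by an auxiliary induction on $k$ that the terminal even position of $\vs_k(i)$ as placed by $\sigma$ exceeds every internal position of $\vs_k(i)$, and separately that the two side blocks used at layer $l$ sit at positions less than $2s+1$. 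The remaining bookkeeping — that the truncation specified in $\mathcal{I}_E$ lands precisely at the claimed placements, and that residual connections (Rule 4) preserve the intermediate value sets from layer to layer — is routine and tracks directly from the definitions.
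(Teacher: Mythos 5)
Your proof follows essentially the same route as the paper's: a block lemma, proved by induction on $k$ via the ternary recursion for $\vs_k(i)$, producing at layer $k$ a node whose value set contains the consecutive range $\{i,\ldots,i+3^{k-1}\}$ (this is the paper's Lemma~\ref{upper_bound_lemma} and its mirror Lemma~\ref{left_upper_bound_lemma}), followed by an outer induction on $l$ that merges the two symmetric side blocks with the reasoning-start node at the shared boundary integers $\tfrac{3-3^{l-2}}{2}$ and $\tfrac{3^{l-2}+1}{2}$. One correction: your lemma asserts the terminal node's value set \emph{equals} the range $\{i,\ldots,i+3^{k-1}\}$, but Rule 3 allows absorption from any earlier node whose value set merely intersects, and neighbouring blocks share boundary values, so only containment can legitimately be claimed; the exact count $C^l_{2\sigma(\frac{3^{\tilde{l}-1}-1}{2})+1}=3^{l-1}$ must therefore be closed off by pairing the resulting lower bound with the upper bound of Theorem~\ref{finite_sequence_theorem}, which is exactly how the paper concludes. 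Your explicit plan to verify the mask condition at every same-token match is a point the paper leaves implicit, and is worth carrying out.
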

To prove this proposition we need the following two lemmas.
\begin{lemma}\label{upper_bound_lemma}
  For $m\geqslant 1$,  $ \forall k \in \{ \frac{3^{j-2}+1}{2} \}_{2\leqslant j \leqslant \tilde{l}}$,  there exists $i_{(m,k)}\in [1,{3^{\tilde{l}-1}-1}] \cap \mathbb{Z}$ depending on $m$ and $k$  such that the node $N^m_{i_{(m,k)}}$ have value set $V^m_{i_{(m,k)}}$ satisfying the following property.
  \begin{equation}
    \{k,k+1,\cdots,k+3^{m-1} \} \subseteq V^m_{i_{(m,k)}}.
  \end{equation}
\end{lemma}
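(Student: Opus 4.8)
The plan is to prove this by induction on the node-layer index $m$, building a ``maximal ternary interval'' at each layer centered near the seed index $k$. The base case $m=1$ is immediate: after Rule 2 (position matching), the layer-$1$ node $N^1_{2k}$ has value set $\{x_{2k-1},x_{2k}\}=\{k,k+1\}$ (using that the constructing reasoning chain is $(\va_m)=((m,m+1))$ as in Proposition \ref{upper_bound_example_prop}), so $\{k,k+1,\dots,k+3^0\}=\{k,k+1\}\subseteq V^1_{2k}$, and $i_{(1,k)}=2k$ lies in the required range provided $k$ is taken from the stated set $\{\frac{3^{j-2}+1}{2}\}$. First I would make explicit that each such seed value $k$ really does appear as the left endpoint of a distinct block $\vs_{j-1}(k)$ in the permuted arrangement described in Proposition \ref{upper_bound_example_prop}; this is exactly what the recursive definition $\vs_k(i)=(\vs_{k-1}(i),\vs_{k-1}(i+2\cdot3^{k-2}),\vs_{k-1}(i+3^{k-2}))$ is engineered to guarantee.

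For the inductive step, assume the claim holds at layer $m$ for every admissible seed in the set $\{\frac{3^{j-2}+1}{2}\}_{2\le j\le\tilde l}$, together with the auxiliary blocks generated by the $\vs$-construction. Given a target seed $k$, I would locate three layer-$m$ nodes whose value sets are (contained in) the consecutive length-$3^{m-1}+1$ integer intervals $[k-3^{m-1}, k]$, $[k, k+3^{m-1}]$, and $[k+3^{m-1}, k+2\cdot3^{m-1}]$ — i.e.\ the ``left,'' ``middle,'' and ``right'' pieces, exactly mirroring the upper-bound induction in the proof of Theorem \ref{Information_quantity_theorem}. The middle node is the one supplied by the inductive hypothesis at seed $k$; the left and right nodes come from the inductive hypothesis applied to the neighbouring $\vs_{m-1}$-blocks, whose offsets $\pm3^{m-1}$ relative to $k$ are precisely the shifts appearing in the recursive definition of $\vs_m$. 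Because consecutive intervals share an endpoint ($k$ on the left/middle overlap, $k+3^{m-1}$ on the middle/right overlap), Rule 3 (same-token matching) applies: the three nodes may all transmit into a common layer-$(m+1)$ node $N^{m+1}_{i_{(m+1,k)}}$, whose value set then contains the union $\{k-3^{m-1},\dots,k+2\cdot3^{m-1}\}$. After re-indexing so that the interval is written as starting at the seed (replacing $k$ by $k-3^{m-1}$, or equivalently choosing the seed one notch over in the admissible set), this union has the form $\{k',k'+1,\dots,k'+3^m\}$, completing the induction; and since all the relevant physical positions were confined to a truncated sequence of length $O(3^{\tilde l-1})$, the node index $i_{(m+1,k)}$ stays in $[1,3^{\tilde l-1}-1]$.

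The main obstacle I anticipate is purely bookkeeping rather than conceptual: one must verify that the three required layer-$m$ nodes genuinely coexist under the \emph{mask} condition (Rule 1), i.e.\ that the ordering induced by $\sigma$ places the ``left'' and ``right'' $\vs_{m-1}$-blocks before the node that collects them, so that the attention is legal and not merely legal ``after lifting the mask.'' This is the whole point of the elaborate permutation in Proposition \ref{upper_bound_example_prop}: the blocks $\vs_{m-1}(i),\vs_{m-1}(i+2\cdot3^{m-2}),\vs_{m-1}(i+3^{m-2})$ are listed in \emph{that} order precisely so the block that must ``receive'' (the third one, at offset $3^{m-2}$) comes last. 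I would therefore spend the bulk of the argument tracking, layer by layer, which node plays the role of the collecting node $N^{m}_{i_{(m,k)}}$ and confirming its position index exceeds those of its two feeders; once that invariant is carried through the induction, the interval-union step is routine. A secondary, minor check is the arithmetic that the overlapping-endpoint intervals of lengths $3^{m-1}+1$ glue to an interval of length $3^m+1$, which is just $3\cdot(3^{m-1}+1)-2 = 3^m+1$.
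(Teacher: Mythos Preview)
Your overall plan (induction on $m$, glue three overlapping layer-$m$ intervals via Rule~3, track the mask condition through the $\vs$-ordering) is essentially the paper's approach, and your emphasis on verifying that the ``collecting'' node sits last in position is more explicit than the paper, which simply writes $i_{(m_0+1,k)}=\max\{\dots\}$ and moves on. However, there is a concrete error in the placement of your three intervals that prevents the induction from closing as written.

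You take the pieces to be $[k-3^{m-1},k]$, $[k,k+3^{m-1}]$, $[k+3^{m-1},k+2\cdot3^{m-1}]$, so the union you obtain at layer $m+1$ is $[k-3^{m-1},\,k+2\cdot3^{m-1}]$. You then ``re-index'' to $k'=k-3^{m-1}$, but this does not prove the lemma for the seed $k$: the statement requires, for each fixed $k$ in the admissible set and for \emph{every} $m$, an interval starting at $k$ itself, namely $[k,k+3^m]$. Your re-indexing produces the claim for a shifted seed, not for $k$. Relatedly, your assertion that the offsets $\pm3^{m-1}$ are ``precisely the shifts appearing in the recursive definition of $\vs_m$'' is not correct: the definition $\vs_{m+1}(i)=(\vs_m(i),\,\vs_m(i+2\cdot3^{m-1}),\,\vs_m(i+3^{m-1}))$ uses only the nonnegative offsets $0,\,3^{m-1},\,2\cdot3^{m-1}$; there is no left block at $-3^{m-1}$.

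The fix is to take all three intervals to the right of $k$, exactly as the paper does: invoke the (strengthened) inductive hypothesis at seeds $k$, $k+3^{m_0-1}$, $k+2\cdot3^{m_0-1}$ to get layer-$m_0$ nodes carrying $[k,k+3^{m_0-1}]$, $[k+3^{m_0-1},k+2\cdot3^{m_0-1}]$, $[k+2\cdot3^{m_0-1},k+3^{m_0}]$, and then combine. With this orientation the union is $[k,k+3^{m_0}]$ on the nose, no re-indexing required, and your mask analysis goes through verbatim because the $\vs$-ordering places the middle sub-block (seed $k+3^{m_0-1}$, which overlaps both neighbours) last in position.
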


\begin{lemma}\label{left_upper_bound_lemma}
  For $m\geqslant 1$,  $ \forall k \in \{ \frac{3-3^{j-1}}{2} \}_{2\leqslant j \leqslant \tilde{l}}$,  there exist $i_{(m,k)}\in [1-{3^{\tilde{l}-1}},0] \cap \mathbb{Z}$ such that the node $N^m_{i_{(m,k)}}$ have value set $V^m_{i_{(m,k)}}$ satisfying the following property.
  \begin{equation}
    \{k,k+1,\cdots,k+3^{m-1} \} \subseteq V^m_{i_{(m,k)}}.
  \end{equation}
\end{lemma}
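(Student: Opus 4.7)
The plan is to carry out an induction on the layer index $m$ that mirrors the one used for Lemma~\ref{upper_bound_lemma}, but run on the left-hand blocks $\vs_1(0), \vs_2(-3), \vs_3(-12), \ldots, \vs_{\tilde l - 1}((3-3^{\tilde l - 1})/2)$ that the constructing permutation $\sigma$ lays out to the left of the reasoning start. The admissible starts $k = (3 - 3^{j-1})/2$, $j \in [2,\tilde l]$, are exactly the first pair indices of these blocks, so for each such $k$ I aim to produce a layer-$m$ node whose value set contains $\{k, \ldots, k + 3^{m-1}\}$ and whose index lies in the left-hand region $[1 - 3^{\tilde l - 1}, 0]$ claimed in the statement.

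The base case $m = 1$ is immediate: each admissible reasoning pair $\va_k = (k,k+1)$ occupies some even position inside a left-hand block, so Rule~2 delivers a layer-1 node with the desired value set $\{k,k+1\} = \{k,\ldots,k+3^{0}\}$. For the inductive step I decompose the target range as
\begin{equation*}
\{k,\ldots,k+3^{m-1}\} = \{k,\ldots,k+3^{m-2}\} \cup \{k+3^{m-2},\ldots,k+2\cdot 3^{m-2}\} \cup \{k+2\cdot 3^{m-2},\ldots,k+3^{m-1}\},
\end{equation*}
where successive sub-ranges share exactly one boundary value. Each sub-range will be delivered either by the present induction hypothesis (for sub-starts on the left) or by Lemma~\ref{upper_bound_lemma} (for sub-starts on the right), and the shared boundary values let me invoke Rule~3 to fuse the three layer-$(m-1)$ nodes into a single layer-$m$ node. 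The recursion $\vs_m(i) = (\vs_{m-1}(i),\vs_{m-1}(i + 2\cdot 3^{m-2}),\vs_{m-1}(i + 3^{m-2}))$ was designed precisely so that the middle sub-block sits latest in sequence order within any $\vs_m$ block, so the natural aggregation target is the last position of that middle sub-block, which I take as $i_{(m,k)}$.

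Two pieces will require careful bookkeeping. First, the claim does not close on the specific admissible starts under this decomposition, because the intermediate sub-starts $k + 3^{m-2}$ and $k + 2\cdot 3^{m-2}$ are generally not of the form $(3 - 3^{j-1})/2$; I therefore plan to prove the stronger statement that for every $\vs$-sub-block $\vs_{m'}(k')$ appearing anywhere in the constructing sequence, the layer-$m'$ node at its last sequence position contains the full range $\{k',\ldots,k' + 3^{m'-1}\}$, and then specialize to the left-hand top-level starts. Second, and this is the step I expect to be the main obstacle, when $k$ is near zero and $m$ grows the target range $\{k,\ldots,k+3^{m-1}\}$ protrudes from the left half into the right half of the sequence, so the three sub-nodes live on opposite sides of the reasoning boundary. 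I will need to verify, by a refined case analysis around the boundary together with a careful choice of aggregation position that still satisfies Rule~1, that the fused layer-$m$ node can be realized at an index inside the required interval; reconciling this value-set growth with the position constraint is what I anticipate will consume most of the proof effort, and the argument will likely parallel, in reverse, the right-hand boundary analysis that underlies Lemma~\ref{upper_bound_lemma}.
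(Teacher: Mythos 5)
Your plan follows essentially the same route as the paper, which omits this proof entirely with the remark that it is ``analogous to that of Lemma~\ref{upper_bound_lemma}'': induct on $m$, split $\{k,\ldots,k+3^{m-1}\}$ into three consecutive sub-ranges overlapping at single values, and fuse the three layer-$(m-1)$ nodes via Rule~3 at the last-in-sequence middle sub-block. Your first ``bookkeeping'' point is in fact a genuine improvement over the paper: the written proof of Lemma~\ref{upper_bound_lemma} invokes its induction hypothesis at the sub-starts $k+3^{m_0-1}$ and $k+2\cdot 3^{m_0-1}$, which are not of the admissible form $\frac{3^{j-2}+1}{2}$, so the strengthened per-block statement (every $\vs_{m'}(k')$ occurring in the constructing sequence yields, at the last position of its middle sub-block, a layer-$m'$ node containing $\{k',\ldots,k'+3^{m'-1}\}$) is what one actually has to prove; specializing to the left-hand top-level blocks then gives the lemma. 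Your second concern, the protrusion of the range into the right half, dissolves once you note how the lemma is actually invoked: in the proof of Proposition~\ref{upper_bound_example_prop} it is only used with $m=j-1$, for which $k+3^{m-1}=\frac{3-3^{j-2}}{2}\leqslant 1$ and the entire range is carried by the single left block $\vs_{j-1}(k)$, so no cross-boundary fusion is ever needed. For general pairs $(m,k)$ with $k+3^{m-1}>1$ the statement as literally written is in fact unprovable --- a node at a non-positive index cannot, under Rule~1, receive values that occur only in the right-half blocks at positive positions --- so you should not attempt the ``refined case analysis around the boundary'' you anticipate; rather, prove the per-block version, which is all the paper needs. With that restriction your argument closes.
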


\begin{proof}[\textbf{Proof of Proposition \ref{upper_bound_example_prop} }]
  The case $l=1$ is trivial and omitted. We use mathematical induction to prove this proposition. 
  We assert that for $l \geqslant 2$  the reasoning start node $N^l_{2\sigma(\frac{3^{\tilde{l}-1}-1}{2})+1}$ have value set $V^l_{2\sigma(\frac{3^{\tilde{l}-1}-1}{2})+1}$ satisfying
  \begin{equation}
    \{\frac{3-3^{l-1}}{2},\frac{3-3^{l-1}}{2}+1,\cdots,0, 1,2,\cdots, \frac{3^{l-1}+1}{2} \} \subseteq V^l_{2\sigma(\frac{3^{\tilde{l}-1}-1}{2})+1}.
  \end{equation}
  The case $l=2$ can be verified easily. 
  We assume that  for $ l =l_0 \in [2,\tilde{l}-1]\cap \mathbb{Z}$  the node $N^{l_0}_{2\sigma(\frac{3^{\tilde{l}-1}-1}{2})+1}$ have value set $V^{l_0}_{2\sigma(\frac{3^{\tilde{l}-1}-1}{2})+1}$ satisfying
 \begin{equation}
    \{\frac{3-3^{l_0-1}}{2},\frac{3-3^{l_0-1}}{2}+1,\cdots,0, 1,2,\cdots, \frac{3^{l_0-1}+1}{2} \} \subseteq V^l_{2\sigma(\frac{3^{\tilde{l}-1}-1}{2})+1}.
  \end{equation}
  Now for $l=l_0 +1 $, by Lemma \ref{upper_bound_lemma}, we know there exists a node $N^{l_0}_{i(l_0, \frac{3^{l_0-1}+1}{2})} $ such that 
  \begin{equation*}
    \{\frac{3^{l_0-1}+1}{2},\cdots, \frac{3^{l_0-1}+1}{2} + 3^{l_0-1}  \}  \subseteq  V^{l_0}_{i(l_0, \frac{3^{l_0-1}+1}{2})}.
  \end{equation*}

   And by Lemma \ref{left_upper_bound_lemma} there exist   a node $N^{l_0}_{i(l_0, \frac{3-3^{l_0}}{2})} $ such that 
  \begin{equation*}
    \{\frac{3-3^{l_0}}{2},\cdots, \frac{3-3^{l_0}}{2} + 3^{l_0-1}   \}  \subseteq  V^{l_0}_{i(l_0, \frac{3-3^{l_0}}{2})}.
  \end{equation*}

  We then have 
  \begin{equation}
    \begin{aligned}
      V^{l_0}_{i(l_0, \frac{3^{l_0-1}+1}{2})} \cap  V^{l_0}_{2\sigma(\frac{3^{\tilde{l}-1}-1}{2})+1} = \{\frac{3^{l_0-1}+1}{2}\},\\
      V^{l_0}_{i(l_0, \frac{3-3^{l_0}}{2})}\cap V^{l_0}_{2\sigma(\frac{3^{\tilde{l}-1}-1}{2})+1}  = \{\frac{3-3^{l_0-1}}{2}\},
    \end{aligned}
  \end{equation}
and by Rule 3, we know that the node $N^{l_0+1}_{2\sigma(\frac{3^{\tilde{l}-1}-1}{2})+1} $ have value set $V^{l_0+1}_{2\sigma(\frac{3^{\tilde{l}-1}-1}{2})+1} $ satisfying
   \begin{equation}\label{temp_r1}
   \begin{aligned}
        \{\frac{3-3^{l_0}}{2} \cdots, 0,1,2,\cdots, \frac{3^{l_0}+1}{2}\} &= V^{l_0}_{i(l_0, \frac{3^{l_0-2}+1}{2})}  \cup V^{l_0}_{2\sigma(\frac{3^{\tilde{l}-1}-1}{2})+1}  \cup V^{l_0}_{i(l_0, \frac{3-3^{l_0-1}}{2})} \\ & \subseteq V^{l_0+1}_{2\sigma(\frac{3^{\tilde{l}-1}-1}{2})+1}.
   \end{aligned}
   \end{equation}

   This completes the proof of our assertion. From above assertion we know that $C^l_{2\sigma(\frac{3^{\tilde{l}-1}-1}{2})+1} \geqslant 3^{l-1}$. Combining the proof of Theorem \ref{finite_sequence_theorem}, we know that $C^l_{2\sigma(\frac{3^{\tilde{l}-1}-1}{2})+1} \leqslant 3^{l-1}$. And therefore 
  $C^l_{2\sigma(\frac{3^{\tilde{l}-1}-1}{2})+1}=  3^{l-1}$.
\end{proof}

\begin{proof}[\textbf{Proof of Lemma \ref{upper_bound_lemma} }]
  We use mathematical induction to prove this lemma. The case $m=2$ can be verified easily through Rule 2. We assume that for $m=m_0$,  $\forall k \in \{  \frac{3^{l-2}+1}{2} \}_{2\leqslant l \leqslant \tilde{l}}$,
   \begin{equation}
    \{k,k+1,\cdots,k+3^{m_0-1} \} \subseteq V^{m_0}_{i_{({m_0},k)}}.
  \end{equation}
  For $m=m_0+1$, by assumption, there exists $i(m_0,k)$, $i(m_0,k+3^{m_0-1})$,   $i(m_0,k+2\times3^{m_0-1})$ such that 
  \begin{equation}\label{lemma_temp}
    \begin{aligned}
      &\{ k,\cdots, k+3^{m_0-1}\} \subseteq V^{m_0}_{i(m_0, k)},\\
      &\{ k+3^{m_0-1},\cdots, k+2\times 3^{m_0-1}\} \subseteq V^{m_0}_{i(m_0, k+3^{m_0-1} )},\\
      &\{ k+2\times 3^{m_0-1},\cdots, k+3^{m_0}\}\subseteq V^{m_0}_{i(m_0, k+2\times3^{m_0-1} )}.
    \end{aligned}
  \end{equation}
  Set $i{(m_0 +1 , k)} = \max \{{i(m_0, k)},i(m_0, k+3^{m_0-1} ), i(m_0, k+2\times3^{m_0-1} )\}$, by Rule 3 and \eqref{lemma_temp} we know that 
  \begin{equation}
     \{k,\cdots, k+3^{m_0} \} \subseteq V^{m_0}_{i(m_0, k)}\cup V^{m_0}_{i(m_0, k+3^{m_0-1} )  }\cup V^{m_0}_{i(m_0, k+2\times3^{m_0-1} )} \subseteq V^{m_0 +1 }_{i(m_0 +1 , k) },
  \end{equation}

which completes the proof.
\end{proof}
The proof of Lemma \ref{left_upper_bound_lemma} is analogous to that of Lemma \ref{upper_bound_lemma} and is omitted.

\section{Construction of transformer}\label{Construction_of_Transformer}
\subsection{Embedding} 
We assume that $d_{m}$ is large enough. 
We choose a suitable embedding
(which can be done by choosing a suitable basis of $\mathbb{R}^{d_m}$) 
such that the non-zero elements of all value vector $\mX^{pos}$ are located in the first $n$ coordinates and all the elements in $\mX^{tgt}$ located at the first $n$ coordinates are zero.  

We denote $\mX^{tgt}= (\mX^{tgt,\mathsf{T}}_1,\mX^{tgt,\mathsf{T}}_2,\cdots,\mX^{tgt,\mathsf{T}}_n)^{\mathsf{T}}$ with each $\mX^{tgt}_i= (\mX^{tgt}_{i,1},\mX^{tgt}_{i,2},\cdots,\mX^{tgt}_{i,d_m})\in \mathbb{R}^{d_m}$. Similarly, we denote $\mX^{pos}= (\mX^{pos,\mathsf{T}}_1,\mX^{pos,\mathsf{T}}_2,\cdots,\mX^{pos,\mathsf{T}}_n)^{\mathsf{T}}$ and  $\mX^{(l)} = (\mX^{(l),\mathsf{T}}_1,\mX^{(l),\mathsf{T}}_2,\cdots,\mX^{(l),\mathsf{T}}_{n})^{\mathsf{T}}$.

We correspond each element in the set $\tilde{E}$ to an index i.e., the set $\tilde{E}$ is of the form $\tilde{E}=\{\vv_1,\vv_2,\cdots,\vv_d\}$. Moreover, each $\vv_i$ is of the form $(\underbrace{0, 0, \cdots, 0}_{n\  \text{zeros}},0,\cdots,1,\cdots,0)$, and we denote $  k_i $ the position of $1$. 
Let $n\leqslant k_{1}< k_{2}<\cdots <k_{d}$, we require the embedding is chosen such that ${k_{i}}$ satisfies that 

\begin{equation}\label{embedding_requirement}
  \left\{ \begin{aligned}
    &k_{1}-n \geqslant 2(n+1)(3^{L}+1),\\
    &k_{i}-k_{i-1}  \geqslant 2(n+1)(3^{L}+1), \ \  \text{ for } 2 \leqslant i\leqslant \frac{n-1}{2},\\
    &d_{m} -k_{\frac{n-1}{2}}\geqslant 2(n+1)(3^{L}+1).
  \end{aligned}\right.
\end{equation}

\subsection{Construction of parameters}\label{Construction_of_parameters}
We set the  weight matrices as follows:

\begin{align}
  &\mW^{q(l)}=\mI(0 \leqslant l \leqslant L),\label{ql}\\
  &\mW^{vo(l)}=\mI(1\leqslant l\leqslant L), \  \mW^{vo(0)}=\mR,\label{vol}\\
  &\mW^{k(l),\mathsf{T}}=\sum_{i=-(n+1)3^{L}}^{-1} \mR^{i}(l\geqslant 1), \ \ \mW^{k(0)}=\sum_{i=1}^{\frac{n-1}{2}}\vp_{2i-1}\vp_{2i} ^{T}.\label{kl}
\end{align}

Here $\vp_{j}$ is  the positional encoding and  arbitrary two positional encodings are orthogonal to each other. Moreover, $\mR \in \mathbb{R}^{d_m\times d_m}$ is defined as 
 $\mR=[\vR_{ij}]$, with $ \vR_{i+1,  i}=1=\vR_{1d_{m}} \text{ for } 1\leqslant i \leqslant d_{m}-1  $, and all the other elements of $\mR$ are set to be 0. In fact,

$$
\mR = \begin{bmatrix}
0 & 0 & 0 & \cdots & 1 \\
1 & 0 & 0 & \cdots & 0 \\
0 & 1 & 0 & \cdots & 0 \\
\vdots & \vdots & \ddots & \ddots & \vdots \\
0& 0 & \cdots & 1 & 0
\end{bmatrix}.
$$

This matrix $\mR$ is called the left shift matrix.  When we  apply $\mR$  to a vector $\vv = (\vv_1,\vv_2,\cdots,\vv_{d_m}) $ then  result is $\vv\mR =(\vv_2,\cdots,\vv_{d_m-1},\vv_{d_m},\vv_1) $. 
The notation $\mR^i$ means that the matrix $\mR$  act $i$ times, i.e., $\vv \mR^i = \vv \mR \mR \cdots \mR $. Also, the inverse of $mR$ is called the right shift matrix, and $\mR^{-1} = \mR^{\mathsf{T}}$.

For any given matrix $\mM  = [m_{ij}]$, we define the mask operation as 
\begin{equation}
  \text{mask}(\mM) = \begin{bmatrix}
    m^{(l)}_{1,  1} & -\infty & -\infty & \cdots & -\infty \\
m^{(l)}_{2,  1} & m^{(l)}_{2,  2} & -\infty & \cdots & -\infty \\
m^{(l)}_{3,  1} & m^{(l)}_{3,  2} & m^{(l)}_{3,  3} & \cdots & -\infty \\
\vdots & \vdots & \vdots & \ddots & \vdots \\
m^{(l)}_{n,  1}& m^{(l)}_{n,  2} & \cdots & m^{(l)}_{n,  n-1} & m^{(l)}_{n,  n}
  \end{bmatrix}.
\end{equation}

Furthermore, we denote
\begin{equation}\label{Al}
  \mA^{(l)}= \mathrm{mask}(\mX^{(l)}\mW^{q(l)}\mW^{k(l),  \mathsf{T}}\mX^{(l),  \mathsf{T}})=
  \begin{bmatrix}
\mA^{(l)}_{1,  1} & -\infty & -\infty & \cdots & -\infty \\
\mA^{(l)}_{2,  1} & \mA^{(l)}_{2,  2} & -\infty & \cdots & -\infty \\
\mA^{(l)}_{3,  1} & \mA^{(l)}_{3,  2} & \mA^{(l)}_{3,  3} & \cdots & -\infty \\
\vdots & \vdots & \vdots & \ddots & \vdots \\
\mA^{(l)}_{n,  1}& \mA^{(l)}_{n,  2} & \cdots & \mA^{(l)}_{n,  n-1} & \mA^{(l)}_{n,  n}
\end{bmatrix}.
\end{equation}

We shall still use the notation $N^{l}_i$ to denote the $i$th node in $l$th layer with slightly modification: the value set $\mV^{l}_i$ is now a set $\mV^{l}_i$ which contains the embedded value as element.

Given any reasoning chain $(\va_u)_{1\leqslant i \leqslant n}$, after embedding it is of the form

$$(\tilde{\va}_u) = ( (\tilde{\va}^1_u,\tilde{\va}^2_u)  )_{1\leqslant i \leqslant n} = ( ({\va}^1_u\mW^{\operatorname{emb}},{\va}^2_u\mW^{\operatorname{emb}})  )_{1\leqslant i \leqslant n}. $$

We define a sequence $(\tilde{\vb}_v)_{1\leqslant v \leqslant n+1} $ as follows:
\begin{equation}
  \begin{aligned}
    &\tilde{\vb}_v = \tilde{\va}^1_v, \ \ \text{for } 1 \leqslant v \leqslant n,\\
    &\tilde{\vb}_{v} = \tilde{\va}^2_{n}\ \ \text{for } v= n+1.
  \end{aligned}
\end{equation} 

Assume the node $N^l_i$ contains the information from $(\tilde{a}_i)$, then its value set $\mV^l_i$ must contain the elements from a subsequence of $\tilde{\vb}_{v \in {[v_1,v_2] \cap \mathbb{Z}}}$ where $v_1,v_2$ depends on $i$ and $l$. 
For simplicity, we set  $\vb^l_{i,v} = \tilde{\vb}_{v_1 + v -1 } $  for $v \in [1,v_2-v_1+1]\cap \mathbb{Z}$. In fact, $v_2 - v_1 +1  = C^l_i$. 

With these notations, we define the FNN as follows 

\begin{equation}\label{output}
  \left\{\begin{aligned}
    L^{l}_{N}(f^{l}_{i}(\mX^{ao(l)}_{i})+(\mX^{ao(l)}_{i}))& = \sum_{1\leqslant k\leqslant C^{l}_i} \vb_{i;k}\mR^{i3^{L}-j+k} \text{ for } i \geqslant 2, \\
   L^{l}_{N}(f^{l}_{i}(\mX^{ao(l)}_{i})+(\mX^{ao(l)}_{i}))&=\mX^{tgt}_{1}\mR^{(n+1)3^{L}}\ \text{for }\ i=1.
  \end{aligned}
  \right.
\end{equation}
Here $L_{N}$ stands for LayerNorm and $j\in [1,n]\cap\mathbb{Z}$ satisfies  $\vb^l_{i,j}= X^{tgt}_i$.

\begin{remark}\label{impossible}

  Note that this is not an accurate expression, since we can not simply define the output of a FNN. However, we can show that there exist $f^{(l)}_i$ for each layer such that the output of  $L_{N}(f^{l}_{i}(\mX^{ao(l)}_{i})+(\mX^{ao(l)}_{i}))$ differs very little from the right-hand side of \eqref{output}. 
And also the information propagation won't be affected by this error. This will be explained in detail in later sections.
\end{remark}
By this construction the reasoning information contained in the node $N^l_i$ is encoded by $L_{N}(f^{l}_i(\cdot)+(\cdot))$.

In order to extract the result after m-step reasoning, we set 

\begin{equation}
  \mW^{p}=\mR^{-n3^{L}-m+1}\mQ \mW^{emb,\mathsf{T}},
\end{equation}
where $\mQ$ satisfies $\mQ \vv_{1}=\vv_{1}$, ..., $\mQ \vv_{d}=\vv_{d}$ and maps other basis to 0.
Note that last token on the $s$-th layer is $\mX_{n}^{(L)}=\vb_{n;1}\mR^{n3^{s}-j+1}+......+\vb_{n;t}\mR^{n3^{s}+t-j}$,  and we have 

\begin{equation}
  \mX_{n}^{(L)}\mW^{p}=\vb_{n;1}\mR^{-j-m+2}\mQ \mW^{emb,\mathsf{T}}+......+\vb_{n;t}\mR^{t-j-m+1}\mQ \mW^{emb,\mathsf{T}},
\end{equation}
and the output 
\begin{equation}
  \mY=argmax(\tilde{\sigma}(\mX_{n}^{(L)}\mW^{p})).
\end{equation}

When $m+j-1\leqslant t$, $\mX_{n}^{(L)}\mW^{p}=\vb_{n;m+j-1}\mW^{emb,\mathsf{T}}$ and the above setting yields the desired reasoning result.
However, when $m+j-1>t$, $\mX_{n}^{(L)}\mW^{p}=0$ and therefore we can not get a right reasoning result. 

In fact, the sufficiency of transformer depth $s$ relative to the required reasoning steps $m$ is a key factor in ensuring accurate reasoning result.

We can roughly categorize this relationship into three cases 
\begin{itemize}
  \item \textbf{Case 1} $m\leqslant 2^{L-1}-1$;
  \item \textbf{Case 2} $2^{s-1}\leqslant m\leqslant \frac{3^{L-1}+1}{2}$;
  \item \textbf{Case 3} $m>\frac{3^{L-1}-1}{2}$.
\end{itemize}
For the first case, note that $t-j+1\ge 2^{L-1}-1$ and therefore $m+j-1\le t$ which means we can get the result.
For the third case, since $t-j+1\le\frac{3^{L-1}+1}{2}$  and consequently $m+j-1>t$, the model can not derive the  result. 
The second case is more complicated, since we can not derive the relationship of $t-j+1$ and $t$ from the relationship of $m$ and $s$. Whether the model can derive the reasoning result now depends on $t-j+1$ and $t$.

\subsection{Explanation}

First, we explain how the same token matching rule works in this construction. More specifically, the attention matrices defined above satisfy the following property.
\begin{lemma}
For $l \geqslant 1$, we have 
\begin{equation}
  \begin{aligned}
    &\mA^{(l)}_{i,  j}=0, \text{ if }\ \ j=1,\\
    &\mA^{(l)}_{i,  j}=0, \text{ if }\ \  i\geqslant j\geqslant2, \ \mV^{l}_{i} \cap \mV^{l}_{j} =\emptyset,\\
    &\mA^{(l)}_{i,  j}\geqslant1, \text{ if }\ \ i\geqslant j\geqslant2,\ \mV^{l}_{i} \cap \mV^{l}_{j} \neq \emptyset.
  \end{aligned}
\end{equation}
\end{lemma}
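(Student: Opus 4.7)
The plan is to unfold $\mA^{(l)}_{i,j}=\mX^{(l)}_{i}\mW^{k(l),\mathsf{T}}\mX^{(l),\mathsf{T}}_{j}$ using the explicit ``shifted-basis'' form supplied by the construction and then separate into three subcases. First I would set up the basic shift identity: since $\mR$ is a cyclic left-shift, one has $\ve_{p}\mR^{b}\ve_{q}^{\mathsf{T}}=\delta_{p-b\equiv q \,(\mathrm{mod}\,d_{m})}$ for standard basis vectors. Together with $\mW^{q(l)}=\mI$ and $\mW^{k(l),\mathsf{T}}=\sum_{a=-(n+1)3^{L}}^{-1}\mR^{a}$, the FNN output expression $\mX^{(l)}_{i}=\sum_{k=1}^{C^{l}_{i}}\vb^{l}_{i,k}\mR^{i3^{L}-j_{0}(i)+k}$ gives
\begin{equation*}
\mA^{(l)}_{i,j}=\sum_{a=-(n+1)3^{L}}^{-1}\sum_{k,k'}\vb^{l}_{i,k}\mR^{(i3^{L}-j_{0}(i)+k)-(j3^{L}-j_{0}(j)+k')+a}\,\vb^{l,\mathsf{T}}_{j,k'}.
\end{equation*}
Each term is $1$ precisely when $\vb^{l}_{i,k}$ and $\vb^{l}_{j,k'}$ correspond to the \emph{same} dictionary token (same position $k_{t}$ in the embedding) and $a$ cancels the exponent difference modulo $d_{m}$.

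Next I would dispatch the three cases in turn. For $j=1$ I would substitute $\mX^{(l)}_{1}=\mX^{\mathrm{tgt}}_{1}\mR^{(n+1)3^{L}}$; the required shift becomes $a\equiv (k_{t(i,k)}-k_{t(1)})+(n+1-i)3^{L}+j_{0}(i)-k \pmod{d_{m}}$, and a direct size estimate shows this residue lies outside $[-(n+1)3^{L},-1]$ for every $i\ge 2$, so every summand vanishes. For $j\ge 2$ with $\mV^{l}_{i}\cap\mV^{l}_{j}=\emptyset$, the tokens $\vb^{l}_{i,k}$ and $\vb^{l}_{j,k'}$ lie at positions $k_{t},k_{t'}$ with $t\ne t'$; the embedding-gap condition \eqref{embedding_requirement} forces $|k_{t}-k_{t'}|\ge 2(n+1)(3^{L}+1)$, which when combined with $|i-j|\le n-1$ and $|j_{0}(\cdot)-k|\le n$ makes the required shift exceed $(n+1)3^{L}$ in magnitude and not wrap around modulo $d_{m}$. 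For $j\ge 2$ with $\mV^{l}_{i}\cap\mV^{l}_{j}\ne\emptyset$, I would pick a shared token giving $k_{t(i,k)}=k_{t(j,k')}$, so the required $a$ reduces to $-(i-j)3^{L}+\bigl(j_{0}(j)-k'\bigr)-\bigl(j_{0}(i)-k\bigr)$; the crude bounds $(i-j)3^{L}\le (n-1)3^{L}$ and $|j_{0}(\cdot)-k|\le C^{l}_{\cdot}\le 3^{L-1}+1$ place this integer inside $[-(n+1)3^{L},-1]$, yielding at least one indicator equal to $1$.

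The bulk of the work, and the main obstacle, is the modular-arithmetic bookkeeping in the second and third cases: I must simultaneously rule out spurious wrap-around coincidences (no residue from two distinct dictionary tokens accidentally lands in the shift window) and guarantee that every genuine same-token alignment sits inside $[-(n+1)3^{L},-1]$. This is exactly what the two numerical ingredients of the construction are engineered for: the $2(n+1)(3^{L}+1)$ embedding gap separates distinct tokens by more than the shift-window width, while the width $(n+1)3^{L}$ is big enough to catch every matching pair with $i\ge j\ge 2$ but narrow enough to exclude the $(n+1)3^{L}$-shift that carries position $1$. I would handle the corner case $i=j$ separately by noting that its contribution comes from pairs $k>k'$ within $\mV^{l}_{i}$ (if any) and is anyway subsumed by the residual connection for the downstream argument, so no loss of strength occurs. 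Finally, once all three claims are established, I would package the result cleanly by observing that every non-vanishing term in the double sum is exactly $+1$, which immediately yields both the equality $\mA^{(l)}_{i,j}=0$ in the first two cases and the bound $\mA^{(l)}_{i,j}\ge 1$ in the third.
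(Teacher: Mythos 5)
Your proposal is correct and follows essentially the same route as the paper's proof: expand $\mA^{(l)}_{i,j}$ as a triple sum of terms $\vb^{l}_{i;u}\mR^{(i-j)3^{L}+\cdots+m}\vb^{l,\mathsf{T}}_{j;v}$, then use the embedding-gap condition \eqref{embedding_requirement} to kill all terms when the value sets are disjoint or $j=1$, and exhibit one aligned same-token term with shift inside $[-(n+1)3^{L},-1]$ when they intersect. Your explicit treatment of the wrap-around bookkeeping and of the $i=j$ diagonal (which the paper silently skips, and where the stated inequality actually needs the residual-connection caveat you give) is if anything more careful than the original.
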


\begin{proof}
Start with given any $\mX^{(0)}= (\mX^{(0)}_1,\mX^{(0)}_2,\cdots,\mX^{(0)}_n)^{\mathsf{T}}$, we denote $\mX^{(l)}=(\mX^{(l)}_1,\mX^{(l)}_2,\cdots,\mX^{(l)}_n) $ with $\mX^{(l)}_{i}=\sum_{u=1}^{C^{l}_{i}} \vb^{l}_{i;u}\mR^{i3^{L}+u-d^{(l)}_{i}}$, 
where $C^{l}_i = |\mV^{l}_i|$ and  $\vb^{l}_{i;u} \in \tilde{E}$. 

\begin{align}
\mA^{(l-1)}_{i,  j}&=\mX^{(l-1)}_{i}\mR^{q(l-1)}\mR^{k(l-1),  \mathsf{T}} \mX_{j}^{(l-1),  \mathsf{T}}\\
     &=(\sum_{u=1}^{C^{l-1}_{i}} \vb_{i;u}^{l-1}\mR^{i3^{L}+u-d^{(l-1)}_{i}})(\sum_{m=-(n+1)3^{L}}^{-1} \mR^{m})(\sum_{v=1}^{C^{l-1}_{j}} \vb_{j;v}^{l-1}\mR^{j3^{L}+v-d^{(l-1)}_{j}})^{\mathsf{T}}\\
     &=\sum_{u=1}^{C^{l-1}_{i}}\sum_{m=-(n+1)3^{L}}^{-1}\sum_{v=1}^{C^{l-1}_{j}} \vb_{i;u}^{l-1}\mR^{i3^{L}+u-d^{(l-1)}_{i}-j3^{L}-v+d^{(l-1)}_{j}+m}\vb^{l-1,  \mathsf{T}}_{j;v}
\end{align}
For any $i\geqslant j \geqslant 2$ and   $\mV^{l-1}_{i} \cap \mV^{l-1}_{j} =\emptyset$. 

Since $-(n+1)3^{L}\leqslant i3^{L}+u-d^{(l-1)}_{i}-j3^{L}-v+d^{(l-1)}_{j}+m\leqslant(n+1)3^{L}$ and $\vb_{i;u}^{l}\ne \vb^{l}_{j;v}$,  we have $\vb_{i;u}^{l-1}\mR^{i3^{L}+u-d^{(l-1)}_{i}-j3^{L}-v+d^{(l-1)}_{j}+m}\vb^{l-1,  \mathsf{T}}_{j;v}=0$.
Therefore,   $\mA^{(l-1)}_{i,  j}=0$.

When $i\geqslant j \geqslant 2$ and $\mV^{l-1}_{i} \cap \mV^{l-1}_{j} \ne \emptyset$,  

\begin{align}
\mA^{(l-1)}_{i,  j}&=\sum_{u=1}^{C^{l-1}_{i}}\sum_{m=-(n+1)3^{L}}^{-1}\sum_{v=1}^{C^{l-1}_{j}} \vb_{i;u}^{l-1}\mR^{i3^{L}+u-d^{(l-1)}_{i}-j3^{L}-v+d^{(l-1)}_{j}+m}\vb^{l-1,  \mathsf{T}}_{j;v}
\end{align}

Since $\mV^{l-1}_{i} \cap \mV^{l-1}_{j} \ne \emptyset$, there exists $j,v,i,v_0$ such that $\vb^{l-1}_{j;v}= \vb^{l-1}_{i;v_{0}}$. Moreover, since
$1\leqslant i3^{L}+u-d^{(l-1)}_{i}-j3^{L}-v+d^{(l-1)}_{j}+m\leqslant(n+1)3^{L}$ and there exists $j,v,i,v_0$ such that $\vb^{l-1}_{j;v}= \vb^{l-1}_{i;v_{0}}$,   
there exists $m_{0}$ s.t.  $\vb^{l-1}_{j;v}\mR^{i3^{L}+u_{0}-d^{(l-1)}_{i}-j3^{L}-v_{0}+d^{(l-1)}_{j}+m_{0}}\vb^{(l-1),  \mathsf{T}}_{j;v_{0}}=1$.
This indicates that   $\mA^{(l-1)}_{i,  j}\ge1$

When $j=1$,   

\begin{align}
\mA^{(l-1)}_{i,  1}= \sum_{u=1}^{C^{l-1}_{i}}\sum_{m=-(n+1)3^{L}}^{-1} \vb^{l-1}_{i;u}\mR^{i3^{L}+u-d^{(l-1)}_{i}-(n+1)3^{L}-1+m}\mX^{tgt,  \mathsf{T}}_{1}
\end{align}

Since $-2(n+1)3^{L}-1\leqslant i3^{L}+u-d^{(l-1)}_{i}-(n+1)3^{L}-1+m\leqslant -1$,  we know that $\mA^{(l-1)}_{i,  1}=0$.
\end{proof}
\subsection{Information propagation}

Next, we explain how the  above defined transformer extract reasoning result.

$\mX^{ao(0)}=\mX^{(0)}+\mX^{qkv(0)}$

When $l=0$, for $i\geqslant j$, using \eqref{ql}, \eqref{vol}, \eqref{kl} and \eqref{Al}  we have 
\begin{equation}
\begin{aligned}
 \mA_{i,j}^{(0)}&=\mX_{i}^{(0)}(\sum_{t=1}^{\frac{n-1}{2}}\vp_{2t}\vp_{2t-1}^{\mathsf{T}})\mX_{j}^{(0),\mathsf{T}}\\
    &=\sum^{\frac{n-1}{2}}_{t=1}\mX_{i}^{tgt}\vp_{2t}\vp_{2t-1}^{\mathsf{T}}\mX_{j}^{tgt,\mathsf{T}}+\sum^{\frac{n-1}{2}}_{t=1}\mX_{i}^{pos}\vp_{2t}\vp_{2t-1}^{\mathsf{T}}\mX_{j}^{tgt,\mathsf{T}}+\sum^{\frac{n-1}{2}}_{t=1}\mX_{i}^{pos}\vp_{2t}\vp_{2t-1}^{\mathsf{T}}\mX_{j}^{pos,\mathsf{T}}\\
    &+\sum^{\frac{n-1}{2}}_{t=1}\mX_{i}^{tgt}\vp_{2t}\vp_{2t-1}^{\mathsf{T}}\mX_{j}^{pos,\mathsf{T}}
\end{aligned}
\end{equation}
It is clear that 

\begin{equation}
  \begin{aligned}
    &\mA_{i,j}^{(0)}=1, \ \text{when $i=j+1$ and $i \bmod 2 =0$, }\\
    &\mA_{i,j}^{(0)}=0,\ \text{when $i\neq j+1$ or $i \bmod 2 \neq 0$. }
  \end{aligned}
\end{equation}

Then for $m\in [1,n]\cap \mathbb{Z} $, we get  $\mX_{m}^{ao(0)}=\sum_{i=1}^{m}\frac{\exp(\mA^{(0)}_{m,i})}{\sum_{j=1}^{m}\exp(\mA^{(0)}_{m,j})}\mX_{i}^{(0)}\mR+\mX_{m}^{(0)}$, when $m \bmod 2 =0$, we can get reasoning chain $(\vb_{m;1}^{(1)},\vb_{m;2}^{(1)})$. 

In fact, we  set all the coefficient $\frac{\exp(0)}{{\sum_{j=1}^{m}\exp(\mA^{(0)}_{m,j})}}$ to be $0$, then $\mX_{m}^{ao(1)}$ is of the form $\frac{\exp(1)}{\sum_{t=1}^{m}\exp(\mA^{(0)}_{m,t})}\vb_{m;1}^{(1)}\mR+\vb_{m;2}^{(1)}$.

By \eqref{output} we have  $L^{0}_{N}(f_{m}^{(0)}(\mX_{m}^{ao(0)})+\mX_{m}^{ao(0)})=b^{(1)}_{m;1}\mR^{m3^{L}-1}+b_{m;2}^{(1)}\mR^{m3^{L}}$

When $m \bmod 2 \neq 0$ and $  m>1$, we can recognize $b_{m;1}^{(0)}=\mX_{m}^{tgt}$, and $L^{0}_{N}(f_{m}^{(0)}(\mX_{m}^{ao(0)})+\mX_{m}^{ao(0)})=b_{m;1}^{(1)}\mR^{m3^{s}}$

When $m=1$, we can recognize $\vb_{1;1}^{(1)}=\mX_{1}^{tgt}$, and 
$L^{0}_{N}(f_{1}^{(0)}(\mX_{1}^{ao(0)}+\mX_{1}^{ao(0)}))=\vb_{1;1}^{(1)}\mR^{(n+1)3^{L}}$.

After the first layer decoder,  the even positions pass information to subsequent odd positions. And we can eliminate the position vectors on the second floor.

In fact, by our construction we have
\begin{align}
\mX^{ao(l-1)}_{i}&=\sum_{j=1}^{i}\frac{\exp(\mA^{(l-1)}_{i,  j})}{\sum_{t=1}^{i}\exp(\mA^{(l-1)}_{i,  t})} \mX_{j}^{(l-1)}+\mX^{(l-1)}_{i}\\
&=\sum_{j=1}^{i}\frac{\exp(\mA^{(l-1)}_{i,  j})}{\sum_{t=1}^{i}\exp(\mA^{(l-1)}_{i,  t})}\sum_{u=1}^{C^{l-1}_{j}}\vb^{l-1}_{j;u}\mR^{j3^{L}+u-d^{(l-1)}_{j}}+\sum_{u=1}^{C^{l-1}_{i}}\vb^{l-1}_{i;u}\mR^{i3^{L}+u-d^{(l-1)}_{i}}.
\end{align}

For any $i,\ j$  such that $i\ne j$, note that $|u-d^{(l-1)}_{j}|\leqslant \frac{3^{L}-1}{2}$ and $|v-d^{(l-1)}_{i}|\leqslant \frac{3^{L}-1}{2}$, by theorem \ref{finite_sequence_theorem},   we have 
\begin{align}
    |j3^{L}+u-d^{(l-1)}_{j}-i3^{L}-v+d^{(l-1)}_{i}|&=|(j-i)3^{L}+u-v-d^{(l-1)}_{j}+d^{(l-1)}_{i}|\\
    &\geqslant3^{L}-|u-v-d^{(l-1)}_{j}+d^{(l-1)}_{i}|\geqslant1,\\
    |i3^{L}+u-d^{(l-1)}_{i}|\leqslant(n+1)3^{L}.
\end{align}

And therefore, the nonzero element of $\vb^{l-1}_{j;u}\mR^{j3^{L}+u-d^{(l-1)}_{j}}$ and $\vb^{l-1}_{i;u}\mR^{i3^{L}+u-d^{(l-1)}_{i}}$ ($i\ne j$) won't locate at the same coordinate.

Denote $\mX^{ao(l-1)}_{i}=(\mX^{ao(l-1)}_{i,  1},  \cdots,    \mX^{ao(l-1)}_{i,  d_{m}})$. 
Firstly, set  the number on the axis which equals to $min_{1\leqslant j\leqslant i}\{\mX^{ao(l-1)}_{i,  j}>0\}$ to be $0$.

Suppose the nonzero component of   $X_{1}^{tgt}$ located at the $k_{p}$-th axis. 
Then, $X_{i,k_{p}-(n+1)3^{L}}^{ao(l-1)}=\frac{exp(0)}{\sum_{1\leqslant j\leqslant i}exp(A_{i,j}^{(l-1)})}$. 
Since $\min\limits_{1\leqslant j\leqslant i}\{\mX^{ao(l-1)}_{i,  j}>0\}=\frac{1}{\sum_{t=1}^{i}\exp(\mA^{(l-1)}_{i,  t})}$,   there remains the information propagated from $\mX^{(l-1)}_{j}$ s.t. $\mA^{(l-1)}_{i,  j} \geqslant 1$ which indicates that $\mV^{l-1}_{i} \cap \mV^{l-1}_{j} \neq \emptyset$.
We use the sequence $(\vb^{l-1}_{i;1},\cdots,\vb^{l-1}_{i;C^{l-1}_i})$ associated  to $N^{l}_i$ and the sequence $(\vb^{l-1}_{j;1},\cdots,\vb^{l-1}_{j;C^{l-1}_j})$ associated to $N^{l-1}_j$ to construct a new sequence $(\vb^{l}_{i})$
 associated to $N^{l}_i$ in the following two rules.
\begin{itemize}
  \item If $\mV_i \subseteq \mV_j $(resp. $\mV_j \subseteq \mV_i $), then we set $(\vb^{l}_i) = (\vb^{l-1}_i  )$(resp. $(\vb^{l}_i) = (\vb^{l-1}_j  )$). \label{chain_rule_1}
  \item If $\mV_i \not\subseteq \mV_j $ and $\mV_j \not\subseteq \mV_i $. Since $\mV_i \cap \mV_j \neq \emptyset$ without loss of generality we assume the set $\mV_i \cap \mV_j$ is of the form $ \{\vb^{l-1}_{i;1}, \vb^{l-1}_{i;2},\cdots, \vb^{l-1}_{i;k_i} \}$ for some $k_i \leqslant C^{l-1}_i$. Also, there exist $k_j \leqslant C^l_j$ such that 
$\vb^{l-1}_{j;k_j} = \vb^{l-1}_{i;1}$. And therefore the sequence $\vb^{l}_i$ is set to be $(\vb^{l-1}_{j;1},\vb^{l-1}_{j;2},\cdots \vb^{l-1}_{j;k_j}, \vb^{l-1}_{i;2},\vb^{l-1}_{i;3} \cdots \vb^{l-1}_{i;C^{l-1}_i} )$. \label{chain_rule_2}
\end{itemize}
Moreover, for a given node $N^{l-1}_i$ there might exist more than one node $N^{l-1}_j$ satisfying $\mA^{(l-1)}_{i,j} \geqslant 1$. 
Denote $\Lambda^{l}_i = \{ j| \mA^{(l)}_{i,j} \geqslant 1 \}$, 
then the information in each node $N^{l-1}_k$ with $k\in \Lambda^{l}_i$ is transmitted to $N^{l-1}_i$ as the above way by treating $\vb^{l}_i$ as $\vb^{l-1}_i$ each time. 
More specifically, we set initially $N^{l}_i = N^{l-1}_i$ and correspondingly $\vb^{l}_i = \vb^{l-1}_i$. Then for each $k\in \Lambda^{l}_i$ and for $\vb^{l-1}_k$ associated to $N^{l-1}_k$, we update $\vb^{(l)}_i$ as in the above two rules by setting $\vb^{l-1}_i = \vb^{l}_i$ and $\vb^{l-1}_j = \vb^{l-1}_k $.
\begin{remark}

The information propagation in this transformer satisfies the rules as we defined in section \ref{Rules_of_Inf_Prop}. 
Although we ignore the information propagated from the node $N^{l-1}_1$ by setting $\mA^{(l-1)}_{i,  1}=0$ for $l\geqslant 1$, 
there won't be any information loss. Since the first node in each layer will only contain one value which is also contained in $N^1_2$ by Rule 2.
\end{remark}

\subsection{Existence of approximating FNN and error analysis}
We find the FNN we required in three steps. 
\begin{itemize}
  \item \textbf{Step 1} Find continuous functions that decode the information;
  \item \textbf{Step 2} Extend the continous function to allow small error;
  \item \textbf{Step 3} Use universal approximation theorem to find a FNN to approximate the extended continuous functions. 
\end{itemize}

\subsection*{\textbf{Step 1}, Continuous funtions}
Since $\{(\vz_{1},  \cdots,    \vz_{d_{m}})|\vz_{i}\in \{0,  1\}\}\subseteq \operatorname{Range}(L_N)$,   there exists continuous functions $f^{l-1}_i$ s.t. 

\begin{equation}\label{continous_version}
L^{l-1}_N(f^{l-1}_{i}(\mX^{ao(l-1)}_{i}))=\sum_{u=1}^{C^{(l)}_{i}}\vb^{l}_{i;u}\mR^{i3^{s-1}+u-d^{(l)}_{i}}.
\end{equation}
By the universal approximation theorem (Theorem \ref{thm:universal}), we know that
a neural network can approximate any continuous function with arbitrarily small error. 
In fact, we can prove the following theorem.

\begin{lemma}\label{approximate_FNN}
    If $L_N(f)$ is a simple function, there exists a single-hidden-layer neural network $f^{\prime}$ for any $\epsilon^{\prime}$, such that:
   
    $$\sup\limits_{x\in K}||L_N(f(x))-L_N(f^{\prime}(x))||<\epsilon^{\prime}$$
    
    where $K\subseteq\mathbb{R}^{d}$ is an arbitrary compact set.
\end{lemma}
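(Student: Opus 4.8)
The plan is to reduce the claim to the universal approximation theorem (Theorem~\ref{thm:universal}) together with a local Lipschitz estimate for LayerNorm near the set of admissible outputs. First I would record the structure of a simple function in the present setting. By definition $L_N(f)$ takes only finitely many values $c_1,\dots,c_M$, and its level sets partition $K$, say $L_N(f)=\sum_{m=1}^{M}c_m\,\mathbf{1}_{A_m}$. In the application the argument $\mX^{\mathrm{ao}(l-1)}_i$ ranges over a finite set of vectors (the vocabulary and the sequence length are finite), so $K$ may be taken to be a finite union of mutually separated compact pieces $A_1,\dots,A_M$ with $L_N(f)\equiv c_m$ on $A_m$; in particular there is $\delta>0$ with $\operatorname{dist}(A_i,A_j)\ge\delta$ for $i\ne j$. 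Each target value $c_m=\sum_u \vb^{l}_{i;u}\mR^{\,i3^{s-1}+u-d^{(l)}_i}$ is a $\{0,1\}$-vector having at least one $0$ and one $1$ entry, hence has strictly positive coordinate variance, so it lies in $\operatorname{Range}(L_N)$ and stays bounded away from the singular set of constant vectors on which LayerNorm is undefined.

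Next I would build an explicit continuous interpolant. Pick $w_m$ with $L_N(w_m)=c_m$ and, using bump functions (or a partition of unity) subordinate to the $\delta/2$-neighborhoods of the $A_m$, assemble a continuous map $F:\mathbb{R}^d\to\mathbb{R}^{d_m}$ with $F\equiv w_m$ on $A_m$. The image $F(K)=\{w_1,\dots,w_M\}$ is then a finite set, every point of which has positive variance, so there is a compact neighborhood $U$ of $F(K)$ on which LayerNorm is $\Lambda$-Lipschitz for some finite $\Lambda$. By Theorem~\ref{thm:universal} there is a single-hidden-layer network $f'$ with $\sup_{x\in K}\|f'(x)-F(x)\|<\epsilon''$; choosing $\epsilon''$ small enough forces $f'(K)\subseteq U$, and then for every $x\in K$,
\[
\|L_N(f(x))-L_N(f'(x))\|=\|L_N(F(x))-L_N(f'(x))\|\le\Lambda\,\|F(x)-f'(x)\|<\Lambda\,\epsilon''.
\]
Taking $\epsilon''<\epsilon'/\Lambda$ gives the desired bound $\sup_{x\in K}\|L_N(f(x))-L_N(f'(x))\|<\epsilon'$.

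The main obstacle is precisely the non-Lipschitz (indeed undefined) behaviour of LayerNorm on zero-variance vectors: the displayed estimate only works once one knows the relevant network outputs stay in a region of uniformly bounded variance. This is why the preliminary observation that every admissible target $c_m$ is a genuine $\{0,1\}$-vector carrying both a $0$ and a $1$ entry is essential — it supplies the uniform lower bound on variance that produces the local Lipschitz constant $\Lambda$. A secondary point demanding care is the separation of the level sets: if $K$ were an arbitrary compact set carrying a genuinely discontinuous simple function, no continuous $f'$ could approximate it in the sup norm, so the construction must invoke the fact established in Section~\ref{Construction_of_parameters} that the inputs fed to each FNN take values in a finite, hence uniformly separated, set; this is what makes the continuous interpolant $F$ available in the first place.
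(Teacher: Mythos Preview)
Your argument is correct and follows the same overall strategy as the paper: approximate a suitable target by a neural network via Theorem~\ref{thm:universal}, then push the approximation through LayerNorm using a continuity/Lipschitz estimate. Two minor points of divergence are worth noting. First, your careful handling of the ``singular set of constant vectors'' is unnecessary here: the paper's LayerNorm is $L_N(x)=\alpha\,(x-\operatorname{E}(x))/\sqrt{\operatorname{Var}(x)+\epsilon}+\beta$ with a fixed $\epsilon>0$ in the denominator, so it is globally defined and Lipschitz on every bounded set without any variance lower bound; the paper simply computes an explicit bound on $\|L_N(x)-L_N(y)\|$ in terms of $\|x-y\|$, $M=\sup_K\|f'\|$, $\alpha$ and $\epsilon$, and then shrinks the inner error. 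Second, you are more careful than the paper at the interpolation step: the paper invokes Theorem~\ref{thm:universal} directly on the simple function $f$, whereas you explicitly pass through a continuous interpolant $F$ built from bump functions on separated level sets. Your extra care is justified (the universal approximation theorem as stated requires a continuous target), and it buys a cleaner explanation of why the construction works precisely when the inputs to the FNN come from a finite, hence separated, set---which is indeed the situation in Section~\ref{Construction_of_parameters}.
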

Here and in the sequel, we use the notation $\|\cdot\|$ to denote the $\infty$ norm of vectors, i.e. for $v = (v_1,v_2,\cdots,v_{d_m}) \in \mathbb{R}^{d_m}$, $\|v\|= \max\limits_{1\leqslant i \leqslant {d_m}} |v_i| $.

As we have discussed earlier in remark \ref{impossible}, we can not define a FNN such that it satisfies \eqref{output}. However, as we have shown in Lemma \ref{approximate_FNN}, we can find a FNN such that it differs from \eqref{output} by a small error $\varepsilon^{\prime}$. 
And we now analyze the effect caused by this small error during the information propagation.

\subsection*{\textbf{Step 2: Expansion of $f_i$}}
To proceed, we shall use the following notations.
Note that for a given node $N^{l}_i$ the reasoning sequence contained in this node can be transmitted from  an input matrix  
$(\mX^{tgt}_i)$ or a permutation of $(\mX^{tgt}_i)$ say  $(\mX^{tgt}_{\sigma(i)})$ where $\sigma \in \mathbb{Z}\cap[1,n]$. 
In this case we denote correspondingly the output of $l$th layer in transformer as $\mX^{(l)}_{\sigma,i}$ or simply $\mX^{(l)}_{\sigma}$.
Also, when the input matrix is set to be $\mX^{tgt}_{\sigma(i)}$, we denote correspondingly the sequence associated to each node $N^l_{i}$ as $\vb^l_{\sigma,i;u}$, the attention matrices $\mA$ as $\mA_{\sigma,i,j}$ and $(\mX^{ao(l)}_i)$ as $(\mX^{ao(l)}_{\sigma,i})$.

Moreover, for a given input matrix $(\mX^{tgt}_i)$ and for fixed $i$, $l$ and $\sigma$ we denote the set 
\begin{equation}\label{def_of_Dset}
  \begin{aligned}
    D^l_{\sigma,i} &= \{ \mY \in \mathbb{R}^{ d_m}:\   L^{l}_N\circ f^l_i (\mY) = \mX^{(l)}_{\sigma,i}\}\\
    &\cap\{\mY \in \mathbb{R}^{ d_m}: \mY = \mX^{ao(l-1)}_{\sigma,i}\}
  \end{aligned}
\end{equation}
And for a given input matrix $(\mX^{tgt}_i)$ we define the equivalence class of permutations as follows
\begin{definition}
  For fixed $i \in [1,n]\cap\mathbb{Z}$ and fixed $l \in [1,L]\cap\mathbb{Z}$, two permutations $\sigma \in \operatorname{Sym}(\mathbb{Z}\cap[1,n])$ and $\tau \in \operatorname{Sym}(\mathbb{Z}\cap[1,n])$ are said to be $N^l_i$ level equivalent if and only if they satisfy
  \begin{equation}
    \mX^{(l)}_{\sigma,i} = \mX^{(l)}_{\tau,i}.
  \end{equation}
  And the equivalence class of $\sigma$ is denoted as 
  \begin{equation}
    [\sigma]^l_i = \{ \tau \in \operatorname{Sym}(\mathbb{Z}\cap[1,n]): \mX^{(l)}_{\sigma,i} = \mX^{(l)}_{\tau,i} \}.
  \end{equation}
  Moreover, we denote $E^l_i$ the set of  all the $N^l_i$ level equivalent classes.
\end{definition}
It is clear that $ D^l_{\sigma,i} $ are all finite sets since $\operatorname{Sym}(\mathbb{Z}\cap[1,n])$ is a finite set.
We shall also use the notation $d(x,y) = \|x-y\|$.

To expand the  $f^l_i$ defined in \eqref{continous_version}, we need the following lemma.
\begin{lemma}\label{distance_Lemma}
  For $\sigma_1,\sigma_2 \in \operatorname{Sym}(\mathbb{Z}\cap[1,n])$, and for $i \in \mathbb{Z}\cap[1,n]$, if $(\vb^{l+1}_{\sigma_1,i}) \neq (\vb^{l+1}_{\sigma_2,i})$, then for $\mX \in D^{l+1}_{\sigma_1,i}$ and $\mY \in D^{l+1}_{\sigma_2,i}$ we have 
  $d(\mX,\mY)>0$.
\end{lemma}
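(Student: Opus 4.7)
The plan is to argue by contraposition: show that $\mX = \mY$ forces $(\vb^{l+1}_{\sigma_1,i}) = (\vb^{l+1}_{\sigma_2,i})$, so that a difference between these sequences implies $d(\mX,\mY) > 0$. Unpacking the definition \eqref{def_of_Dset}, the second clause constrains $D^{l+1}_{\sigma,i}$ to be contained in the singleton $\{\mX^{ao(l)}_{\sigma,i}\}$, so I may identify $\mX = \mX^{ao(l)}_{\sigma_1,i}$ and $\mY = \mX^{ao(l)}_{\sigma_2,i}$. Hence the hypothesis $\mX = \mY$ reduces to the equality of these two attention-output vectors at position $i$.

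Next, I propagate this equality through the remainder of layer $l+1$. Because $f^{(l)}_i$ and $L_N$ are fixed maps that do not depend on $\sigma$, the update rule
\[
    \mX^{(l+1)}_{\sigma,i} \;=\; L_N\!\bigl(f^{(l)}_i(\mX^{ao(l)}_{\sigma,i}) + \mX^{ao(l)}_{\sigma,i}\bigr)
\]
yields $\mX^{(l+1)}_{\sigma_1,i} = \mX^{(l+1)}_{\sigma_2,i}$ immediately from $\mX = \mY$.

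Finally, I recover the sequence $(\vb^{l+1}_{\sigma,i;u})$ from $\mX^{(l+1)}_{\sigma,i}$. According to \eqref{output}, $\mX^{(l+1)}_{\sigma,i}$ has the form $\sum_{u} \vb^{l+1}_{\sigma,i;u}\mR^{i3^{L}+u-j}$ whose summands have pairwise disjoint coordinate supports, thanks to the gap inequalities established from \eqref{embedding_requirement} in the ``Information propagation'' subsection. Consequently, the ordered tuple $(\vb^{l+1}_{\sigma,i;u})_u$ can be read off uniquely from the positions of the nonzero blocks of $\mX^{(l+1)}_{\sigma,i}$. Equal layer outputs therefore give equal sequences, proving the contrapositive and hence the lemma.

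I expect the main obstacle to be the uniqueness of decoding in the final step: one must verify that the shifts $i3^{L} + u - d^{(l)}_i$, ranging over the various $u$ within a node and across nodes, never drive any two embedded tokens into the same coordinate window, and that the approximating FNN discussed in the subsequent error-analysis step preserves this separation so that the decoded sequence is unchanged by the small error. This amounts to a careful application of the separation inequalities already built into the attention-matrix analysis, rather than any new geometric input.
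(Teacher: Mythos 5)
Your proof has a genuine circularity in its central step. You argue that $\mX = \mY$, i.e.\ $\mX^{ao(l)}_{\sigma_1,i} = \mX^{ao(l)}_{\sigma_2,i}$, forces $\mX^{(l+1)}_{\sigma_1,i} = \mX^{(l+1)}_{\sigma_2,i}$ because ``$f^{(l)}_i$ and $L_N$ are fixed maps that do not depend on $\sigma$.'' But in the paper's construction, $\mX^{(l+1)}_{\sigma,i}$ is not \emph{defined} as the image of $\mX^{ao(l)}_{\sigma,i}$ under some given FNN; it is a prescribed target vector, determined by the buffer-mechanism merger rules, which the FNN is constructed post hoc to realize. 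A single $\sigma$-independent function $f^{(l)}_i$ satisfying $L_N\bigl(f^{(l)}_i(\mX^{ao(l)}_{\sigma,i}) + \mX^{ao(l)}_{\sigma,i}\bigr) = \mX^{(l+1)}_{\sigma,i}$ simultaneously for all $\sigma$ can exist only if the assignment $\mX^{ao(l)}_{\sigma,i} \mapsto \mX^{(l+1)}_{\sigma,i}$ is single-valued across permutations --- which is exactly the contrapositive of the lemma you are trying to prove. Indeed, in the paper Lemma~\ref{distance_Lemma} is precisely what licenses the extension $\tilde{f}^{l}_i$ in \eqref{Extended_f} and hence the existence of the FNN; invoking the FNN in the lemma's proof assumes what one must show. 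The obstacle you flagged (uniqueness of the shift-coordinate decoding) is real but secondary and is handled correctly; the real issue is the one you did not flag.

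The paper's argument instead analyzes the attention outputs $\mX^{ao(l)}_{\sigma_1,i}$ and $\mX^{ao(l)}_{\sigma_2,i}$ coordinate by coordinate, with no reference to any $f^{(l)}_i$. Since $(\vb^{l+1}_{\sigma_1,i}) \neq (\vb^{l+1}_{\sigma_2,i})$, there is a token $h$ appearing in one target sequence but not the other; looking at the coordinate where $h$'s shifted embedding lands in $\mX^{ao(l)}_{\sigma,i}$, the softmax coefficient contributed by the node carrying $h$ is at least $\exp(1)/\!\sum_t\exp(\mA^{(l)}_{\sigma_1,i,t})$ under $\sigma_1$ but equals $0$ or $\exp(0)/\!\sum_t\exp(\mA^{(l)}_{\sigma_2,i,t})$ under $\sigma_2$. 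The remaining subtlety is that the two softmax denominators need not be equal a priori; the paper resolves this with a reference coordinate --- the one where $\mX^{tgt}_1\mR^{(n+1)3^L}$ is stored, which always carries coefficient $\exp(0)$ over its denominator --- to deduce that the denominators coincide, after which the contradiction is immediate. If you want to repair your proof, you must show directly that the merger-rule output depends only on $\mX^{ao(l)}_{\sigma,i}$ and not on $\sigma$ separately, which essentially forces you back to the coordinate-level argument the paper carries out.
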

Since the sets $D^l_{\sigma,i}$ are all finite, then by Lemma \ref{distance_Lemma} for $\sigma_1$ and $\sigma_2$ satisfying $\mV^l_{\sigma_1}\neq \mV^l_{\sigma_2}$ we have 
\begin{equation}
  d^{l+1}_i=\min\limits_{\mX\in D^{l}_{\sigma_{1},i},\mY \in  D^{l}_{\sigma_{2},i}} d(\mX,\mY) >0.
\end{equation}

We now define the expansion of $f^{l}_i$ as follows
\begin{equation}\label{Extended_f}
  \tilde{f}_{i}^{l}= (L^{l}_N)^{-1}\left(\sum_{ [\sigma]^l_i \in E^l_i }\mathbf{1}_{D^{l+1}_{\sigma,i}+[-\delta_{i}^{l+1},\delta_{i}^{l+1}]^{n}}\right),
\end{equation} 

for some $\delta^{l+1}_i \in (0,d^{l+1}_i)$. Here the symbol $+$ in $D^{l+1}_{\sigma,i}+[-\delta_{i}^{l+1},\delta_{i}^{l+1}]^{n}$ denotes the addition of sets in  $\mathbb{R}^n$, 
and the condition $\delta_{i}^{l+1}<d_{i}^{(l+1)}$  ensures that  \eqref{Extended_f} are well-defined

\subsection*{\textbf{Step 3}, approximating FNN}
Note that for given $i$, $l$ and $\sigma$ the set  $D^{l+1}_{\sigma,i}+[-\delta_{i}^{l+1},\delta_{i}^{l+1}]^{n}$ is a compact set,
according to Lemma \ref{approximate_FNN}, $\forall \eta^{l+1}_i >0$,
there exist a single-hidden-layer neural network $\hat{f}^l_i$ such that 
\begin{equation}
  \sup\limits_{\mX \in  D^{l+1}_{\sigma,i}+[-\delta_{i}^{l+1},\delta_{i}^{l+1}]^{n}}||L^{l}_N(\hat{f}_{i}^{l})(\mX)-L^{l}_N(\tilde{f}_{i}^{l})(\mX)||<\eta_{i}^{(l+1)}.
\end{equation}
This $\hat{f}$ is the FNN we are looking for which can tranmit information as $f$. In fact we have following proposition.
\begin{proposition}
  For given  $\mX_{\sigma,i}^{ao(l)}$, we have
  \begin{equation}
  \begin{aligned}
  &L^{l}_N(f_{i}^{(l)}(\mX_{\sigma,i}^{ao(l)}))=\mX^{(l+1)}_{\sigma,i};\\
    &L^{l}_N(\hat{f}_{i}^{(l)}(\mX_{\sigma,i}^{ao(l)})) \in \{\mX^{(l+1)}_{\sigma,i}  \}  +[-\epsilon,\epsilon]^{n}.
    \end{aligned}
  \end{equation}
  
\end{proposition}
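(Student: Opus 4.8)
The plan is to obtain the first identity directly from the construction and then derive the second by a layer-by-layer error-propagation argument. For the exact statement, note that when the input is the permuted matrix $(\mX^{tgt}_{\sigma(i)})$, the quantity $\mX^{ao(l)}_{\sigma,i}$ is by construction the point at which $f^{l}_i$ is defined through \eqref{continous_version}; that definition is made precisely so that $L^{l}_N\circ f^{l}_i$ returns $\sum_u \vb^{l+1}_{\sigma,i;u}\mR^{i3^{L}+u-d^{(l+1)}_i}$, which is the definition of $\mX^{(l+1)}_{\sigma,i}$. Hence $L^{l}_N(f^{(l)}_i(\mX^{ao(l)}_{\sigma,i}))=\mX^{(l+1)}_{\sigma,i}$ is immediate, and in particular $\mX^{ao(l)}_{\sigma,i}\in D^{l+1}_{\sigma,i}$ in the sense of \eqref{def_of_Dset}.

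For the approximating network I would argue by induction on $l$ that the state produced by the transformer in which every $f^{(k)}_i$ is replaced by $\hat f^{(k)}_i$ stays within a box $[-\epsilon_l,\epsilon_l]^{d_m}$ of $\mX^{(l)}_{\sigma,i}$, with $\epsilon_0=0$. In the inductive step one first checks that the attention-plus-residual block is Lipschitz on the relevant compact set: the shift matrices $\mR^{m}$ are orthogonal, $\mathrm{SoftMax}$ is Lipschitz, and the post-processing used in the construction (zeroing the smallest positive coordinate and discarding the contribution of the first token) is continuous on the region where the attention logits are separated — logit $0$ versus logit $\ge 1$ by the Lemma in the Explanation subsection — a separation that survives any sufficiently small perturbation, so that the combinatorial attention pattern is unchanged and a perturbation of size $\epsilon_l$ in $\mX^{(l)}$ yields a perturbation of size at most $C\epsilon_l$ in $\mX^{ao(l)}$, for a constant $C$ depending only on $n$, $d_m$ and $L$. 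Next, provided $C\epsilon_l<\delta^{l+1}_i$, the perturbed $\mX^{ao(l)}$ lands in $D^{l+1}_{\sigma,i}+[-\delta^{l+1}_i,\delta^{l+1}_i]^{n}$, which by Lemma \ref{distance_Lemma} (through $d^{l+1}_i>0$) is exactly the region on which the extension $\tilde f^{l}_i$ of \eqref{Extended_f} still outputs, after $L^{l}_N$, the correct value $\mX^{(l+1)}_{\sigma,i}$. Finally $\hat f^{l}_i$ approximates $\tilde f^{l}_i$ to within $\eta^{l+1}_i$ by Lemma \ref{approximate_FNN} and the universal approximation theorem, and $L^{l}_N$ is Lipschitz, so the layer-$(l+1)$ state lies within $[-\epsilon_{l+1},\epsilon_{l+1}]^{d_m}$ of $\mX^{(l+1)}_{\sigma,i}$ with $\epsilon_{l+1}\le C'(C\epsilon_l+\eta^{l+1}_i)$.

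To close the argument I would choose the parameters backwards: fix the desired output tolerance $\epsilon$, set $\epsilon_L\le\epsilon$, and for $l=L-1,\dots,0$ pick $\eta^{l+1}_i$ and $\delta^{l+1}_i$ small enough that both $C'(C\epsilon_l+\eta^{l+1}_i)\le\epsilon_{l+1}$ and $C\epsilon_l<\delta^{l+1}_i<d^{l+1}_i$ hold; since there are only finitely many layers and finitely many indices, this is always possible, and it is consistent with the earlier construction, which left $\delta^{l+1}_i\in(0,d^{l+1}_i)$ and $\eta^{l+1}_i>0$ free. Evaluating at the relevant layer then gives $L^{l}_N(\hat f^{(l)}_i(\mX^{ao(l)}_{\sigma,i}))\in\{\mX^{(l+1)}_{\sigma,i}\}+[-\epsilon,\epsilon]^{n}$, completing the proof.

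\textbf{Main obstacle.} The delicate part is the robustness bookkeeping in the inductive step: one must verify that the softmax attention together with the hand-built post-processing is genuinely stable under small input perturbations, so that the discrete "which node attends to which" structure is preserved exactly and the accumulated error can be kept strictly below the gap $d^{l+1}_i$ supplied by Lemma \ref{distance_Lemma}. Every quantitative choice of $\delta^{l+1}_i$ and $\eta^{l+1}_i$ — and hence the whole induction — hinges on making this Lipschitz-and-separation estimate precise.
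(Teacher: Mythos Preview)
Your proposal is correct and follows essentially the same route as the paper: the first identity is indeed immediate from the defining equation \eqref{continous_version}, and the second is obtained by showing that a small perturbation at layer $l$ produces a perturbed $\mX^{ao(l)}$ that still lies in $D^{l+1}_{\sigma,i}+[-\delta^{l+1}_i,\delta^{l+1}_i]^{n}$, after which the approximation property of $\hat f^{l}_i$ on that box finishes the job. The paper does exactly what you flag as the ``main obstacle'': it writes $\bar\mA^{(l)}_{i,j}-\mA^{(l)}_{i,j}$ explicitly, bounds it by $3^{L}(n+1)\epsilon^{(l)}_i\epsilon^{(l)}_j\le\eta_0$, splits $\|\bar\mX^{ao(l)}_{\sigma,i}-\mX^{ao(l)}_{\sigma,i}\|$ into a softmax-perturbation term $\mathrm{I}\le 4n\eta_0\exp(2M)$ (using $|\exp(x)-1|<2|x|$ for small $x$) and a residual error term $\mathrm{II}\le(n+1)\epsilon$, and then chooses $\eta_0,\epsilon$ so that the sum is below $\delta^{l+1}_i$; your abstract Lipschitz argument is precisely this computation in disguise.
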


\begin{proof}

Without loss of generality we set $L^{l}_N(\hat{f}_{i}^{(l)}(\mX^{ao(l)}_{\sigma,i}))=\sum_{u=1}^{C^{(l+1)}_{i}}\vb^{l+1}_{iu}R^{i3^{s-1}+u-d^{(l+1)}_{i}}+\epsilon^{(l+1)}_{i}\vr^{(l+1)}_{i}=\mX^{(l+1)}_{\sigma,i}, \text{ where }|\vr_{i}|=1,\vr_{i}\in \mathbb{R}^{d_{m}}$.
Take  $\epsilon^{(l)}_{i}<\epsilon$ and set
$$\bar{\vA}^{(l)}_{i,j}=(\sum_{u=1}^{C^{l}_{i}} \vb_{i;u}^{l}\mW^{i3^{L}+u-d^{(l)}_{i}}+\epsilon^{(l)}_{i}\vr^{(l)}_{i})(\sum_{m=-(n+1)3^{L}}^{-1} \mW^{m})(\sum_{v=1}^{C^{l}_{j}} \vb_{j;v}^{l}W^{j3^{L}+v-d^{(l)}_{j}}+\epsilon^{(l)}_{j}\vr^{(l)}_{j})^{\mathsf{T}}$$

Since $\bar{\vA}^{(l)}_{i,j}=\vA^{(l)}_{i,j}+\epsilon^{(l)}_{i}\vr^{(l)}_{i}(\sum_{m=-(n+1)3^{L}}^{-1} \mW^{m})\epsilon^{(l)}_{j}\vr^{(l)}_{j}$, we have 
$\forall i,j,l$, $|\bar{\vA}^{(l)}_{i,j}-\vA^{(l)}_{i,j}| \leqslant   3^{L}(n+1)\epsilon^{(l)}_{i}\epsilon^{(l)}_{j} \leqslant  \eta_{0}$.

And therefore,
\begin{equation}
\begin{aligned}
\bar \mX^{ao(l)}_{\sigma,i}&=\sum_{j=1}^{i}\frac{\exp(\bar\vA^{(l)}_{i,j})}{\sum_{t=1}^{i}\exp(\bar\vA^{(l)}_{i,t})}\bar \mX_{\sigma,j}^{(l)}+\bar \mX^{(l)}_{\sigma,i}\\
&=\sum_{j=1}^{i}\frac{\exp(\bar\vA^{(l)}_{i,j})}{\sum_{t=1}^{i}\exp(\bar\vA^{(l)}_{i,t})}(\sum_{u=1}^{C^{l}_{j}}\vb^{l}_{j;u}\mW^{j3^{L}+u-d^{(l)}_{j}}+\epsilon^{(l)}_{j}\vr^{(l)}_{j})+\sum_{u=1}^{C^{l}_{i}}\vb^{l}_{i;u}\mW^{i3^{L}+u-d^{(l)}_{i}}+\epsilon^{(l)}_{i}\vr^{(l)}_{i}\\
&=\sum_{j=1}^{i}\frac{\exp(\bar\vA^{(l)}_{i,j})}{\sum_{t=1}^{i}\exp(\bar\vA^{(l)}_{i,t})}(\sum_{u=1}^{C^{l}_{j}}\vb^{l}_{j;u}\mW^{j3^{L}+u-d^{(l)}_{j}})+\sum_{u=1}^{C^{l}_{i}}\vb^{l}_{i;u}\mW^{i3^{L}+u-d^{(l)}_{i}}\\
&+\sum_{j=1}^{i}\frac{\exp(\bar\vA^{(l)}_{i,j})}{\sum_{t=1}^{i}\exp(\bar\vA^{(l)}_{i,t})}\epsilon^{(l)}_{j}\vr^{(l)}_{j}+\epsilon^{(l)}_{i}\vr^{(l)}_{i},
\end{aligned}
\end{equation}

along with
\begin{equation}\label{Xao_difference}
  \begin{aligned}
    \bar \mX^{ao(l)}_{\sigma,i}-\mX_{\sigma,i}^{ao(l)}&=\sum_{j=1}^{i}\left[\frac{\exp(\bar\vA^{(l)}_{i,j})}{\sum_{t=1}^{i}\exp(\bar\vA^{(l)}_{i,t})}-\frac{\exp(\vA^{(l)}_{i,j})}{\sum_{t=1}^{i}\exp(\vA^{(l)}_{i,t})}\right](\sum_{u=1}^{C^{l}_{j}}\vb^{l}_{ju}\mW^{j3^{s-1}+u-d^{(l)}_{j}})\\
    &+\sum_{j=1}^{i}\frac{\exp(\bar\vA^{(l)}_{i,j})}{\sum_{t=1}^{i}\exp(\bar\vA^{(l)}_{i,t})}\epsilon^{(l)}_{j}\vr^{(l)}_{j}+\epsilon^{(l)}_{i}\vr^{(l)}_{i}
  \end{aligned}
\end{equation}

Direct calculation and \eqref{Xao_difference} leads to

\begin{equation}
  \|\bar \mX_{\sigma,i}^{ao(l)}-\mX_{\sigma,i}^{ao(l)}\|\leqslant \text{I}+\text{II},
\end{equation}
where
\begin{align}
  \text{I}&=\max_{ i,j,l}|\frac{\exp(\bar\mA^{(l)}_{i,j})}{\sum_{t=1}^{i}\exp(\bar\mA^{(l)}_{i,t})}-\frac{\exp(\mA^{(l)}_{i,j})}{\sum_{t=1}^{i}\exp(\mA^{(l)}_{i,t})}|, \\
  \text{II}&=|\sum_{j=1}^{i}\frac{\exp(\bar\mA^{(l)}_{i,j})}{\sum_{t=1}^{i}\exp(\bar\mA^{(l)}_{i,t})}\epsilon^{(l)}_{j}\vr^{(l)}_{j}+\epsilon^{(l)}_{i}\vr^{(l)}_{i}|.
\end{align}

Take $\eta_{0}$ small enough such that $\forall|x|<\eta_{0}$, $|\exp(x)-1|<2x$. We then have 

\begin{align}
  &\begin{aligned}\label{partial_1}
    \text{I}&\leqslant \max\bigg\{ \frac{|\exp(\bar \mA_{i,j}^{(l)})-\exp( \mA_{i,j}^{(l)})|\ |\sum_{t=1}^{i}\exp(\mA_{i,j}^{(l)})|}{(\sum_{t=1}^{i}\exp(\mA_{i,j}^{(l)}))(\sum_{t=1}^{i}\exp(\bar \mA_{i,j}^{(l+1)}))}\\
    &+\frac{|\sum_{t=1}^{i}(\exp(\mA_{i,j}^{(l)})-\exp(\bar \mA_{i,j}^{(l)}))|\ |\exp(\mA_{i,j}^{(l)})|}{(\sum_{t=1}^{i}\exp(\mA_{i,j}^{(l)}))(\sum_{t=1}^{i}\exp(\bar \mA_{i,j}^{(l)}))}\bigg\}\\
    &\leqslant \max \bigg\{n\times \big|\exp(\bar \mA_{i,j}^{(l)}-\mA_{i,j}^{(l)})-1\ \big| \exp(\mA_{i,j}^{(l)})\exp(M)\\
    &+(\sum_{t=1}^{i}|\exp(\bar \mA_{i,t}^{(l)}-\mA_{i,t}^{(l)})-1|)\exp(\mA_{i,j}^{(l)})\exp(M))\bigg\}\\
    &\leqslant 2n\eta_{0}\exp(2M)+2n\eta_{0}\exp(2M)\\
    &\leqslant 4n\eta_{0}\exp(2M)
  \end{aligned}\\
  &\text{II} \leqslant  (n+1)\epsilon.\label{partial_2}
\end{align}

Combining \eqref{Xao_difference}, \eqref{partial_1} and \eqref{partial_2} leads to

\begin{equation}
  \|\bar \mX_{\sigma,i}^{ao(l)}-\mX_{\sigma,i}^{ao(l)}\| \leqslant  \text{I}+\text{II} \leqslant 4n\eta_{0}\exp(2M)+(n+1)\epsilon.
\end{equation}

We then choose $\eta_{0}$ and $\epsilon$ small such that  $4n\eta_{0}\exp(2M)+(n+1)\epsilon<\delta_{i}^{(l)}$, and thus 
$\bar \mX_{\sigma,i}^{ao(l)}\in D^{l+1}_{\sigma, i}+(-\delta_{i}^{l+1},\delta_{i}^{l+1})^{n}$.
Moreover, we have 
\begin{equation}
  L^{l}_N(\hat{f}_{i}^{(l)}(\mX_{\sigma,i}^{ao(l)}))=\sum_{u=1}^{C^{(l+1)}_{i}}\vb^{l+1}_{i;u}\mR^{i3^{s-1}+u-d^{(l+1)}_{i}}+\epsilon^{(l+1)}_{i}\vr^{(l+1)}_{i}\in \mX^{(l+1)}_{\sigma,i}+[-\epsilon,\epsilon]^{n},
\end{equation}
where $|\vr_{i}|=1,\vr_{i}\in \mathbb{R}^{d_{m}}$.
This completes the proof of our proposition.
\end{proof}

\begin{lemma}\label{layernorm_injective}
LayerNorm of the form $L_N(x)= \alpha \frac{x-\operatorname{E}(x)}{\sqrt{{\operatorname{Var}}(x)+\epsilon}}+\beta$, 
where $\alpha,  \beta$ and $\epsilon$ are constants and the function $\operatorname{E}(\cdot),\ \operatorname{\operatorname{Var}}(\cdot)$ stands for the expectation and variance respectively, is injective (i.e., For any $x_{1}\ne x_{2}$,   we  have $L_N(x_{1})\ne L_N(x_{2})$).
\end{lemma}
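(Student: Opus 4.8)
The plan is to prove injectivity directly: assuming $L_N(x_1)=L_N(x_2)$, first show that $x_1$ and $x_2$ agree after the same normalization, and then use the strict positivity of $\epsilon$ to conclude $x_1=x_2$.

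First I would strip off the affine layer. Since $\alpha,\beta$ are fixed and $\alpha$ has no zero entry, $L_N(x_1)=L_N(x_2)$ is equivalent, coordinatewise, to
\[
\frac{x_1-\operatorname{E}(x_1)\vone}{\sqrt{\operatorname{Var}(x_1)+\epsilon}}=\frac{x_2-\operatorname{E}(x_2)\vone}{\sqrt{\operatorname{Var}(x_2)+\epsilon}},
\]
where $\vone$ is the all-ones vector. Setting $t=\sqrt{(\operatorname{Var}(x_1)+\epsilon)/(\operatorname{Var}(x_2)+\epsilon)}>0$, this reads $x_1-\operatorname{E}(x_1)\vone=t\,(x_2-\operatorname{E}(x_2)\vone)$, so $x_1$ arises from $x_2$ by a positive rescaling composed with a shift along $\vone$.

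The crucial step is to pin $t=1$. Applying $\operatorname{Var}(\cdot)$ to both sides of the identity above, and using that the variance is invariant under adding a multiple of $\vone$ and homogeneous of degree two, gives $\operatorname{Var}(x_1)=t^2\operatorname{Var}(x_2)$; feeding this back into the definition of $t$ yields $t^2(\operatorname{Var}(x_2)+\epsilon)=t^2\operatorname{Var}(x_2)+\epsilon$, hence $t^2\epsilon=\epsilon$, and since $\epsilon>0$ we obtain $t=1$. This is where the regularizer is indispensable: for $\epsilon=0$ every positive scaling (and every $\vone$-shift) is a symmetry of $L_N$, so the lemma would fail.

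With $t=1$ the identity collapses to $x_1-x_2=(\operatorname{E}(x_1)-\operatorname{E}(x_2))\vone$, i.e.\ $x_1$ and $x_2$ differ only by a multiple of $\vone$. Ruling out a nonzero multiple is the only real obstacle, since a LayerNorm map is injective only modulo $\vone$-translations; I would close the gap using the structure of the vectors $L_N$ is actually applied to in Section~\ref{Construction_of_parameters}. By \eqref{embedding_requirement} each such vector is a sum of at most $3^{L-1}+1$ distinct standard basis vectors sitting in disjoint coordinate blocks, whereas $d_m$ is vastly larger, so $x_1-x_2$ has far fewer than $d_m$ nonzero entries and cannot equal $s\vone$ with $s\ne 0$; therefore $\operatorname{E}(x_1)=\operatorname{E}(x_2)$ and $x_1=x_2$. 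The remaining steps are routine algebra once one remembers to take the variance of both sides of the normalized identity.
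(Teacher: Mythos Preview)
Your argument is more careful than the paper's, and in fact you have put your finger on a genuine issue: the lemma as stated is false. LayerNorm is invariant under translations by $\vone$, since $\operatorname{E}(x+c\vone)=\operatorname{E}(x)+c$ and $\operatorname{Var}(x+c\vone)=\operatorname{Var}(x)$, so $L_N(x+c\vone)=L_N(x)$ for every scalar $c$. The paper's proof misses this because its summation step is in error: summing the coordinatewise identity over $i$ yields $\sum_i(x^i-\operatorname{E}(x))=0$ on each side, i.e.\ the trivial equality $0=0$, not the claimed relation $\operatorname{E}(x_1)/\sqrt{\operatorname{Var}(x_1)+\epsilon}=\operatorname{E}(x_2)/\sqrt{\operatorname{Var}(x_2)+\epsilon}$. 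From that spurious identity the paper deduces $x_1=kx_2$ and then forces $k=1$ via $\epsilon>0$; without it the chain collapses.

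Your route---take variances of the normalized identity to pin $t=1$, then reduce to $x_1-x_2\in\operatorname{span}(\vone)$---is the correct one, and your proposed remedy of restricting to the sparse vectors arising in Section~\ref{Construction_of_parameters} is the honest way forward: on that domain no two distinct admissible inputs differ by a nonzero multiple of $\vone$, so injectivity does hold there. What you end up proving is a restricted version of the lemma rather than the statement as written, but that restricted version is exactly what the construction needs, and it is all that can be true.
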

\begin{proof}[\textbf{Proof of Lemma \ref{layernorm_injective}}]
Note that
$L_N(x)= \alpha \frac{x-\operatorname{E}(x)}{\sqrt{\operatorname{\operatorname{Var}}(x)+\epsilon}}+\beta$,  

For any $x_{1}\ne x_{2}$,   if $L_N(x_{1})=L_N(x_{2})$,  then we have 

$$(\alpha \frac{x^{1}_{1}-\operatorname{E}(x_{1})}{\sqrt{\operatorname{\operatorname{Var}}(x_{1})+\epsilon}}+\beta,  \cdots,    \alpha \frac{x^{n}_{1}-\operatorname{E}(x_{1})}{\sqrt{\operatorname{\operatorname{Var}}(x_{1})+\epsilon}}+\beta)=(\alpha \frac{x^{1}_{2}-\operatorname{E}(x_{2})}{\sqrt{\operatorname{\operatorname{Var}}(x_{2})+\epsilon}}+\beta,  \cdots,    \alpha \frac{x^{n}_{2}-\operatorname{E}(x_{2})}{\sqrt{\operatorname{\operatorname{Var}}(x_{2})+\epsilon}}+\beta),$$ 
which leads to 
\begin{equation}\label{layernorm_injective_temp1}
  \alpha \frac{x^{i}_{1}-\operatorname{E}(x_{1})}{\sqrt{\operatorname{\operatorname{Var}}(x_{1})+\epsilon}}+\beta =  \alpha \frac{x^{i}_{2}-\operatorname{E}(x_{2})}{\sqrt{\operatorname{\operatorname{Var}}(x_{2})+\epsilon}}+\beta, \text{ for } 1\leqslant i \leqslant n.
\end{equation}

Therefore, summation from $1$ to $n$ in both sides of \eqref{layernorm_injective_temp1} leads to 

$$\alpha \frac{n\operatorname{E}(x_{1})-\operatorname{E}(x_{1})}{\sqrt{\operatorname{\operatorname{Var}}(x_{1})+\epsilon}}+n\beta=\alpha \frac{n\operatorname{E}(x_{2})-\operatorname{E}(x_{2})}{\sqrt{\operatorname{\operatorname{Var}}(x_{2})+\epsilon}}+n\beta,$$
which indicates that 
\begin{equation}\label{layernorm_temp2}
  \frac{\operatorname{E}(x_{1})}{\sqrt{\operatorname{\operatorname{Var}}(x_{1})+\epsilon}} =\frac{\operatorname{E}(x_{2})}{\sqrt{\operatorname{\operatorname{Var}}(x_{2})+\epsilon}}.
\end{equation}

By \eqref{layernorm_injective_temp1}, we also have 
\begin{equation}\label{layernorm_temp3}
  \frac{x^{i}_{1}}{\sqrt{\operatorname{\operatorname{Var}}(x_{1})+\epsilon}}=\frac{x^{i}_{2}}{\sqrt{\operatorname{\operatorname{Var}}(x_{2})+\epsilon}}, \text{ for } 1\leqslant i \leqslant n.
\end{equation}

Combining \eqref{layernorm_temp2} and \eqref{layernorm_temp3} yields that
\begin{equation}
  \frac{x_1}{x_2} = \frac{\operatorname{E}(x_1)}{\operatorname{E}(x_2)} = \frac{\sqrt{\operatorname{\operatorname{Var}}(x_{1})+\epsilon}}{\sqrt{\operatorname{\operatorname{Var}}(x_{2})+\epsilon}}.
\end{equation}

Set $k=\frac{\operatorname{E}(x_1)}{\operatorname{E}(x_2)}$,  then $x^{i}_1=kx^{i}_2$,  $\operatorname{E}(x_1)=k \operatorname{E}(x_2)$,  and therefore $\operatorname{\operatorname{Var}}(x_1)=k^{2}\operatorname{\operatorname{Var}}(x_2)$. These relations together with \eqref{layernorm_temp3} lead to

\begin{equation}
   \frac{x^{i}_{1}}{\sqrt{\operatorname{\operatorname{Var}}(x_{1})+\epsilon}}=\frac{kx^{i}_{1}}{\sqrt{k^2 \operatorname{Var(x_{1})}+\epsilon}}, \text{ for } 1\leqslant i \leqslant n.
\end{equation}
Since $\epsilon >0 $ and $\alpha \ne 0$,  it is cleat that $k$ must be 1, which contradicts with $x_1\ne x_2$.
\end{proof}

\begin{proof}[\textbf{Proof of Lemma \ref{distance_Lemma}}]

We prove this lemma by contradiction.

Suppose that there exist $ \mX \in D^{l+1}_{\sigma_1,i}$ and $ \mY \in D^{l+1}_{\sigma_2,i}$ such that $d(\mX,\mY)=0$.

Since $(\vb^{l+1}_{\sigma_1,i}) \neq (\vb^{l+1}_{\sigma_2,i})$,
there exists $i_{0}$ and $j_{0}$ such that $\vb_{\sigma_{1},i_{0},j_{0}}^{l}\in \{\vb^{l+1}_{\sigma_{1},i,1},...,\vb^{l+1}_{\sigma_{1},i,c_{\sigma_{1},i}^{(l+1)}}\}$ and $\vb_{\sigma_{1},i_{0},j_{0}}^{l} \notin \{\vb^{l+1}_{i,\sigma_{2},1},...,\vb^{l+1}_{i,\sigma_{2},c_{i,\sigma_{2}}^{(l+1)}}\}$. For simplicity, we denote $h=\vb_{\sigma_{1},i_{0},j_{0}}^{l}$.

By \eqref{def_of_Dset}, we know that 

\begin{align}
    \mX=\mX_{\sigma_{1},i}^{ao(l)}&=\sum^{i}_{j=1}\frac{\exp(A_{\sigma_{1},i,j}^{(l)})}{\sum^{i}_{t=1}\exp(A_{\sigma_{1},i,t}^{(l)})}\mX_{\sigma_{1},j}^{(l)}+\mX_{\sigma_{1},i}^{(l)}\\
    &=\sum^{i}_{j=1}\frac{\exp(A_{\sigma_{1},i,j}^{(l)})}{\sum^{i}_{t=1}\exp(A_{\sigma_{1},i,t}^{(l)})}\sum^{C_{\sigma_{1},j}^{l}}_{u=1}\vb_{ \sigma_{1},j;u}\mR^{j\times3^{L}+u-d_{\sigma_{1},j}^{(l)}}+\sum_{u=1}^{C_{\sigma_{1},j}^{l}}\vb_{\sigma_{1},i;u}^{l}\mR^{i\times 3^{L}+u-d_{\sigma_{1},i}^{(l)}}
\end{align}

\begin{align}
    \mY=\mX_{\sigma_{2},i}^{ao(l)}&=\sum^{i}_{j=1}\frac{\exp(A_{\sigma_{2},i,j}^{(l)})}{\sum^{i}_{t=1}\exp(A_{\sigma_{2},i,t}^{(l)})}\mX_{\sigma_{2},j}^{(l)}+\mX_{\sigma_{2},i}^{(l)}\\
    &=\sum^{i}_{j=1}\frac{\exp(A_{\sigma_{2},i,j}^{(l)})}{\sum^{i}_{t=1}\exp(A_{\sigma_{2},i,t}^{(l)})}\sum^{C_{\sigma_{2},j}^{l}}_{u=1}\vb_{ \sigma_{2},j;u}\mR^{j\times3^{L}+u-d_{\sigma_{2},j}^{(l)}}+\sum_{u=1}^{C_{\sigma_{2},j}^{l}}\vb_{\sigma_{2},i;u}^{l}\mR^{i\times 3^{L}+u-d_{\sigma_{2},i}^{(l)}}
\end{align}

If $h\notin \cup_{t=1}^{i}V_{t}^{l}$, then we know that $X^{ao(l)}_{\sigma_{1},i,u}\geqslant\frac{\exp(1)}{\sum_{t=1}^{i}\exp(\mA^{(l)}_{\sigma_{1},t})}$ and $X^{ao(l)}_{\sigma_{2},i,u}=0$ which contradicts with our assumption.

If $h\in \cup_{t=1}^{i}V_{t}^{l}$ then $X^{ao(l)}_{\sigma_{2},i,u}$ can either be  $\frac{\exp(0)}{\sum_{t=1}^{i}\exp(\mA^{(l)}_{\sigma_{2},t})}$ or $0$. The case $X^{ao(l)}_{\sigma_{2},i,u}=0$ clearly contradicts with our assumption. Hence, we consider only the case 
$X^{ao(l)}_{\sigma_{2},i,u}=\frac{\exp(0)}{\sum_{t=1}^{i}\exp(\mA^{(l)}_{\sigma_{2},t})}$.
And we then know that 
\begin{equation}\label{ao1_ao2}
X^{ao(l)}_{\sigma_{1},i,u}\geqslant \frac{\exp(1)}{\sum_{t=1}^{i}\exp(\mA^{(l)}_{\sigma_{1},t})}, \ \  X^{ao(l)}_{\sigma_{2},i,u}=\frac{\exp(0)}{\sum_{t=1}^{i}\exp(\mA^{(l)}_{\sigma_{2},t})}.
\end{equation} 
Suppose the nonzero components of  $\mX_{\sigma_{1},1}^{(tgt)}$ and $\mX_{\sigma_{2},1}^{(tgt)}$ located at as the $k_{\sigma_{1},1}$-th axis and the $k_{\sigma_{2},1}$-th axis respectively. Note that

\begin{align}
    \mX_{\sigma_{1},1}^{(l)}&=\mX_{\sigma_{1},1}^{tgt}R^{(n+1)3^{L}},\\
    \mX_{\sigma_{2},1}^{(l)}&=\mX_{\sigma_{2},1}^{tgt}R^{(n+1)3^{L}},
\end{align}

and that by \eqref{embedding_requirement} the distance between any two embedding axis $\geqslant 2(n+1)(3^{L}+1)$, we have

\begin{align}
    \mX_{\sigma_{1},i,k_{\sigma_{1},1} - 3^{s-1}n }^{ao(l)}&=\frac{\exp(0)}{\sum^{i}_{t=1}\exp(\mA_{\sigma_{1},i,t}^{(l)})},\\
    \mX_{\sigma_{2},i,k_{\sigma_{2},1} - 3^{s-1}n }^{ao(l)}&=\frac{\exp(0)}{\sum^{i}_{t=1}\exp(\mA_{\sigma_{2},i,t}^{(l)})}.
\end{align}

If $k_{\sigma_{1},1}\ne k_{\sigma_{2},1}$,

$$\frac{\exp(0)}{\sum^{i}_{t=1}\exp(\mA_{\sigma_{2},i,t}^{(l)})}=\mX_{\sigma_{1},i,k_{\sigma_{1},1} - 3^{L}n }^{ao(l)}=\mX_{\sigma_{2},i,k_{\sigma_{1},1}- 3^{L}n }^{ao(l)}=0$$, 

which is impossible. Therefore, we have $k_{\sigma_{1},1}= k_{\sigma_{2},1}$.

In addition, there exist $\vv_q \in \tilde{E}$ and the corresponding $k_q$ such that $k_q = k_{\sigma_{1},1}= k_{\sigma_{2},1}$, the components of $\mX^{ao(l)}_{\sigma_1,i}$ and $\mX^{ao(l)}_{\sigma_2,i}$ on the $(k_{q}-(n+1)3^{L})$-th axis are equal, which leads to

\begin{equation}\label{distance_Lemma_temp}
  \frac{1}{\sum^{i}_{t=1}\exp(A_{\sigma_{1},i},t)}=\frac{1}{\sum^{i}_{t=1}\exp(A_{i,\sigma_{2}},t)}.
\end{equation}
Combining \eqref{ao1_ao2} and \eqref{distance_Lemma_temp}  leads to contradiction with  $d(\mX,\mY)=0$. And we complete the proof of Lemma \ref{distance_Lemma}.
\end{proof}

\begin{proof}[\textbf{Proof of Lemma \ref{approximate_FNN}}]

By Lemma \ref{layernorm_injective}, we can easily know that $(L^{l}_N)^{-1}(L^{l}_N(f))=f$ is also a simple function.

According to the lemma above,

there exists a single-hidden-layer neural network $f^{\prime}$ for any $\epsilon_{0}$ s.t 

$$\sup\limits_{x\in K}||f(x)-f^{\prime}(x)||<\epsilon_{0}$$
Set $M = \max\limits_{x\in K} ||f^{\prime}(x)||$, for any $||x-y||<\epsilon_{0}$,

\begin{equation}
  \begin{aligned}
    |L_N(x)-L_N(y)|&=|\alpha\frac{x-\operatorname{E}(x)}{\sqrt{\operatorname{Var}(x)+\epsilon}}-\alpha\frac{y-\operatorname{E}(y)}{\sqrt{\operatorname{Var}(y)+\epsilon}}|\\
    &=|\alpha|\times|\frac{(x-\operatorname{E}(x))\sqrt{\operatorname{Var}(y)+\epsilon}-(y-\operatorname{E}(y))\sqrt{\operatorname{Var}(x)+\epsilon}}{\sqrt{\operatorname{Var}(y)+\epsilon }\sqrt{\operatorname{Var}(x)+\epsilon}}|\\
    & \leqslant  |\alpha|\times\frac{||(x-y-(\operatorname{E}(x)-\operatorname{E}(y))||\sqrt{\operatorname{Var}(y)+\epsilon}}{\epsilon}\\
    &+|\alpha|\times\frac{||y-\operatorname{E}(y)||\ |\sqrt{\operatorname{Var}(x)+\epsilon}-\sqrt{\operatorname{Var}(y)+\epsilon}|}{\varepsilon}.
\end{aligned}
\end{equation}

Since $||\operatorname{E}(x)-\operatorname{E}(y)||\leqslant||x-y|| \leqslant \epsilon_{0}$ and $\operatorname{Var}(y)=\operatorname{E}(y)^{2}-(\operatorname{E}(y))^{2} \leqslant  \operatorname{E}(y)^{2} \leqslant (M+\epsilon_{0})^{2}$ we have 

\begin{equation}
  \begin{aligned}
    |\operatorname{E}(x)^{2}-\operatorname{E}(y)^{2}|&=|\operatorname{E}(x-y)(x+y)|\\
    & \leqslant  \sqrt{\operatorname{E}(x-y)^{2}\ \operatorname{E}(x+y)^2}\\
    & \leqslant \sqrt{\epsilon_{0}^{2}\ (2M+\epsilon_{0})^{2}},
  \end{aligned}
\end{equation}

and
\begin{equation}
  \begin{aligned}
    |\sqrt{\operatorname{Var}(x)+\epsilon}-\sqrt{\operatorname{Var}(y)+\epsilon}|&=\frac{|\operatorname{Var}(x)-\operatorname{Var}(y)|}{\sqrt{\operatorname{Var}(x)+\epsilon}\sqrt{\operatorname{Var}(y)+\epsilon}}\\
    & \leqslant  \frac{|\operatorname{E}(x)^{2}-\operatorname{E}(y)^{2}|+|(\operatorname{E}(x))^{2}-(\operatorname{E}(y))^{2}|}{\epsilon}\\
    & \leqslant  \frac{\epsilon_{0}(2M+\epsilon_{0})+\epsilon_{0}(2M+\epsilon_{0})}{\epsilon},
\end{aligned}
\end{equation}

\begin{equation}
  \begin{aligned}
    |L_N(x)-L_N(y)|& \leqslant  |\alpha|\ \frac{2\epsilon_{0}\sqrt{(M+\epsilon_{0})^{2}+\epsilon}+(M+\epsilon_{0})(\frac{\epsilon_{0}(2M+\epsilon_{0})+\epsilon_{0}(2M+\epsilon_{0})}{\epsilon})}{\epsilon}.
  \end{aligned}
\end{equation}
We can set $\epsilon_{0}$ small enough such that 
\begin{equation*}
    \sup\limits_{x\in K}||L_N(f(x))-L_N(f^{\prime}(x))|| \leqslant  \sup\limits_{\forall |x-y|<\epsilon_{0}}|L_N(x)-L_N(y)|<\epsilon^{\prime}.
\end{equation*}

\end{proof}

\begin{theorem}[Universal Approximation Theorem\citep{cybenko1989approximation}]\label{thm:universal}
For any given continuous function $ f: \mathbb{R}^d \to \mathbb{R}^{n} $ and an allowable error $ \epsilon > 0 $,   there exists a single-hidden-layer neural network $ f_\theta $ with appropriate parameters $ \theta $,   such that:

\begin{equation*}
\sup\limits_{x \in K} ||f(x) - f_\theta(x)||_{\infty} < \epsilon,  
\end{equation*}

where $ K \subseteq \mathbb{R}^d $ is an arbitrary compact set.
\end{theorem}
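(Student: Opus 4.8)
The plan is to establish the result in Cybenko's original functional-analytic form: the family of single-hidden-layer networks is \emph{dense} in $C(K)$, from which the stated $\epsilon$-approximation is immediate, since every such network is by definition a finite linear combination $\sum_k c_k\,\sigma(\langle w_k,x\rangle+b_k)$. First I would reduce to scalar outputs: if for each coordinate $f_j:\mathbb{R}^d\to\mathbb{R}$ there is a one-hidden-layer network $g_j$ with $\sup_{x\in K}|f_j(x)-g_j(x)|<\epsilon$, then placing the hidden units of $g_1,\dots,g_n$ side by side in a single hidden layer and reading them off with the appropriate output weights yields one network $f_\theta$ with $\sup_{x\in K}\|f(x)-f_\theta(x)\|_\infty<\epsilon$. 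So it suffices to show that
\[
  S \;=\; \operatorname{span}\{\,x\mapsto \sigma(\langle w,x\rangle+b)\;:\;w\in\mathbb{R}^d,\ b\in\mathbb{R}\,\}
\]
is dense in $C(K)$ for a continuous sigmoidal activation $\sigma$.

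Second, I would argue by contradiction using Hahn--Banach and Riesz representation. If $\overline{S}\neq C(K)$, there is a nonzero bounded linear functional $\Lambda$ on $C(K)$ that vanishes on $\overline{S}$; by the Riesz representation theorem $\Lambda(g)=\int_K g\,d\mu$ for some finite signed regular Borel measure $\mu\neq 0$. In particular $\int_K\sigma(\langle w,x\rangle+b)\,d\mu(x)=0$ for every $w\in\mathbb{R}^d$ and $b\in\mathbb{R}$.

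Third --- and this is the crux --- I would prove that a continuous sigmoidal $\sigma$ is \emph{discriminatory}, i.e.\ the vanishing of all those integrals forces $\mu=0$. Fixing $w,a,\phi$ and letting $\lambda\to\infty$, the functions $x\mapsto\sigma(\lambda(\langle w,x\rangle+a)+\phi)$ are uniformly bounded and converge pointwise to $\mathbf{1}_{\{\langle w,x\rangle+a>0\}}+\sigma(\phi)\,\mathbf{1}_{\{\langle w,x\rangle+a=0\}}$; the bounded convergence theorem then gives $\mu(H_{w,a})+\sigma(\phi)\,\mu(\Pi_{w,a})=0$ for the open half-space $H_{w,a}$ and the hyperplane $\Pi_{w,a}$. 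Sending $\phi\to+\infty$ and $\phi\to-\infty$ and subtracting shows $\mu$ annihilates every half-space and every hyperplane. Fixing $w$ and considering the functional $F(h)=\int_K h(\langle w,x\rangle)\,d\mu(x)$ on bounded measurable $h$ on $\mathbb{R}$, we get $F(\mathbf{1}_{(c,\infty)})=F(\mathbf{1}_{\{c\}})=0$ for all $c$, hence $F$ vanishes on indicators of intervals, on simple functions, and by a dominated-convergence argument on all bounded measurable $h$; applying this with $h(t)=\cos t$ and $h(t)=\sin t$ and letting $w$ range over $\mathbb{R}^d$ shows the Fourier transform of $\mu$ is identically zero, so $\mu=0$ --- contradicting $\mu\neq 0$.

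Finally I would package the pieces: density of $S$ in $C(K)$ together with the coordinate-stacking reduction yields, for any compact $K\subseteq\mathbb{R}^d$ and any $\epsilon>0$, a single-hidden-layer $f_\theta$ with $\sup_{x\in K}\|f(x)-f_\theta(x)\|_\infty<\epsilon$, which is exactly the statement used in the paper. The main obstacle is the discriminatory-property lemma in the third step; the reduction to scalar outputs and the Hahn--Banach/Riesz step are routine. An alternative, entirely constructive route for the concrete activations used in practice --- approximate $f$ by a step function on a sufficiently fine partition of $K$ and realize each step as a bounded-slope difference of two sigmoidal units --- would also work and avoids the measure-theoretic argument, at the cost of a worse dependence of the network width on $\epsilon$.
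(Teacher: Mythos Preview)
Your proposal is correct and faithfully reproduces Cybenko's original argument, but note that the paper does not actually prove this theorem: it is stated with a citation to \citep{cybenko1989approximation} and invoked as a black box (specifically to establish Lemma~\ref{approximate_FNN}). So there is no ``paper's own proof'' to compare against; the authors simply import the result. Your Hahn--Banach/Riesz/discriminatory-sigmoid route is exactly Cybenko's, and the coordinate-stacking reduction to scalar outputs is the standard way to extend it to $\mathbb{R}^n$-valued targets, so if the paper had included a proof this is precisely what one would expect.
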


\section{Details of the experiment} \label{appendix: experiment settings}
\subsection{Dataset}

We require reasoning sequence $(x_{i})_{1\le i\le 2s}$ of the training set satisfy the following condition.

\begin{align}
    x_{2i}-x_{2i-1} \bmod 5 \in\{0,1,4\} 
\end{align}

The sequence of the test set satisfy:

\begin{align}
    x_{2i}-x_{2i-1} \bmod 5 \in\{2,3\} 
\end{align}

The values of each token range from 20 to 100,i.e.,$x_{i}\in[20,100]$. 

\subsection{Hyperparameters}
In this section, the fixed and tunable hyperparameters employed in the model are outlined.

The fixed hyperparameters are as follows. Transformer architecture uses one attention head per layer. The dataset is partitioned into a training set comprising 90\% of the data and a test set comprising the remaining 10\%. Training is conducted over 2000 epochs. A weight decay of 0.1 is applied. The dimension of the model $d_m$  is set equal to the key dimension $d_k$.  The feed-forward network dimension $d_{feedforward}$ is set to 1200.

\begin{table}[H]
    \centering
    \caption{}
    \label{table2}
    \begin{tabular}{lccc} 
    \toprule
    the number of reasoning steps        & 3   & 4   & 5      \\
    \midrule
    the size of datasets & 1200000  & 6000000  & 30000000     \\
    \bottomrule
    \end{tabular}
\end{table}

The following hyperparameters are varied across experiments.
We compare models using both pre-layer normalization and post-layer normalization configurations.
The number of layers, the number of reasoning steps, the model dimension $d_m$, the learning rate, the size of datasets (table \ref{table2}) and the batch size are also systematically varied.

\subsection{About the prelayernorm and postlayernorm}
We train a transformer which has 3 layers and 21 token length with batch size equal to 1000 and learning rate equal to $5\times10^{-5}$ to do 3-step reasoning. Initially, the model is configured with post-layer normalization. However, this result in suboptimal performance. There is figure \ref{fig:all_dms_horizontal1} we train .

\begin{figure}[H]
    \centering
    \includegraphics[width=0.32\linewidth]{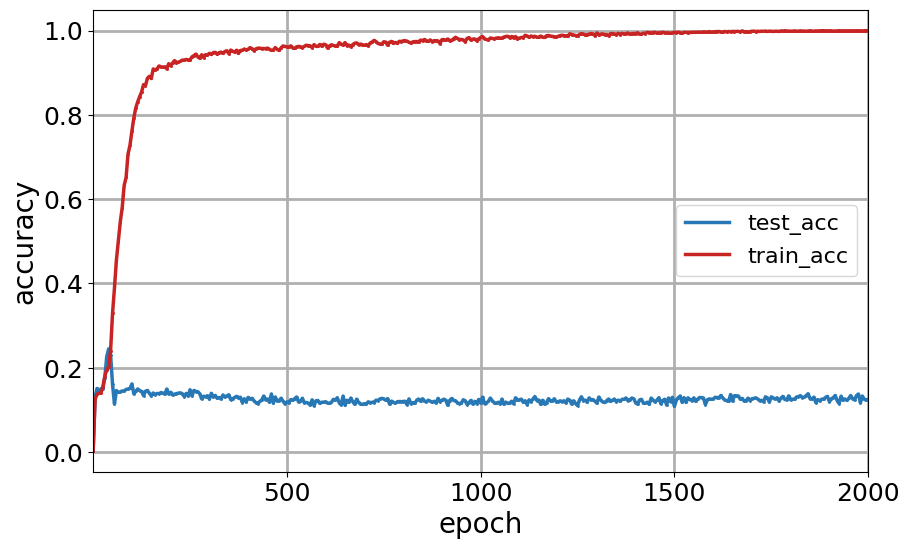}
    \hfill
    \includegraphics[width=0.32\linewidth]{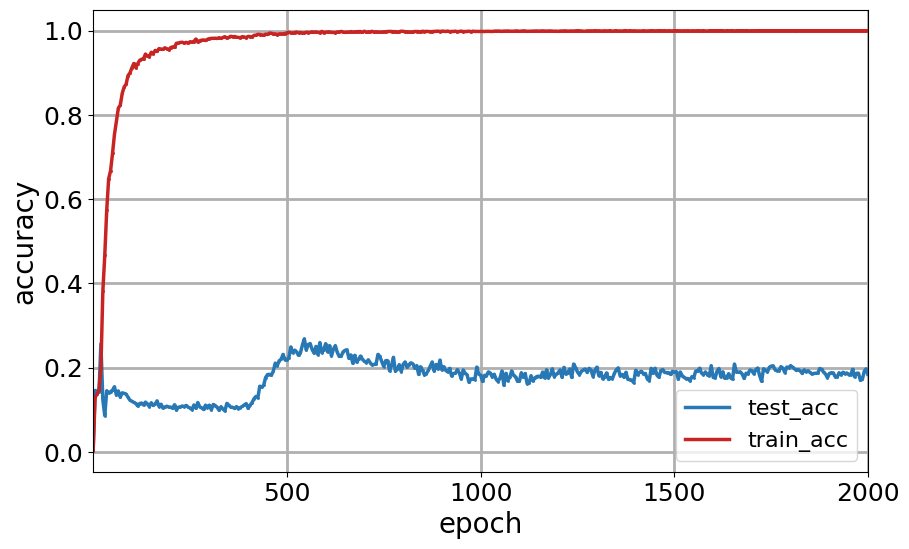}
    \hfill
    \includegraphics[width=0.32\linewidth]{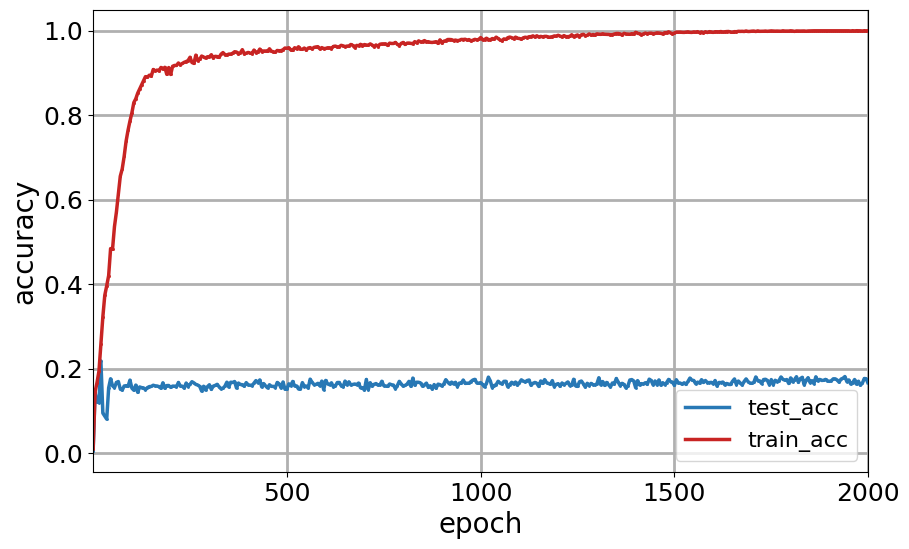}
    
    \medskip 
    \begin{minipage}{0.32\linewidth}
        \centering (a) $d_m = 256$
    \end{minipage}
    \hfill
    \begin{minipage}{0.32\linewidth}
        \centering (b) $d_m = 512$
    \end{minipage}
    \hfill%
    \begin{minipage}{0.32\linewidth}
        \centering (c) $d_m = 1024$
    \end{minipage}
    
    \caption{postlayernorm}
    \label{fig:all_dms_horizontal1}
\end{figure}

We therefore employ pre-layer normalization in our architecture. Empirical results indicate that this configuration yields significantly improved performance. The corresponding training curves and outcomes are presented in the figure \ref{fig:all_dms_horizontal2}.

\begin{figure}[H]
    \centering
    \includegraphics[width=0.32\linewidth]{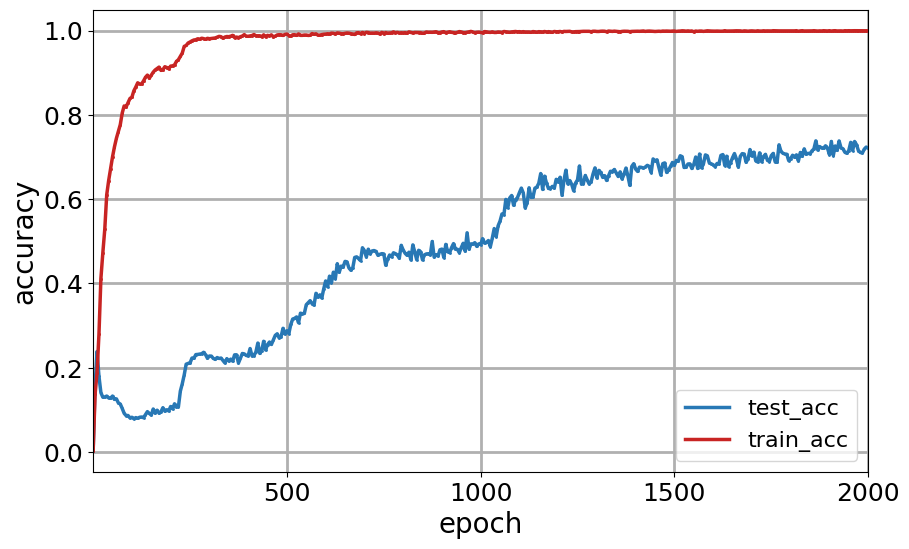}%
    \hfill%
    \includegraphics[width=0.32\linewidth]{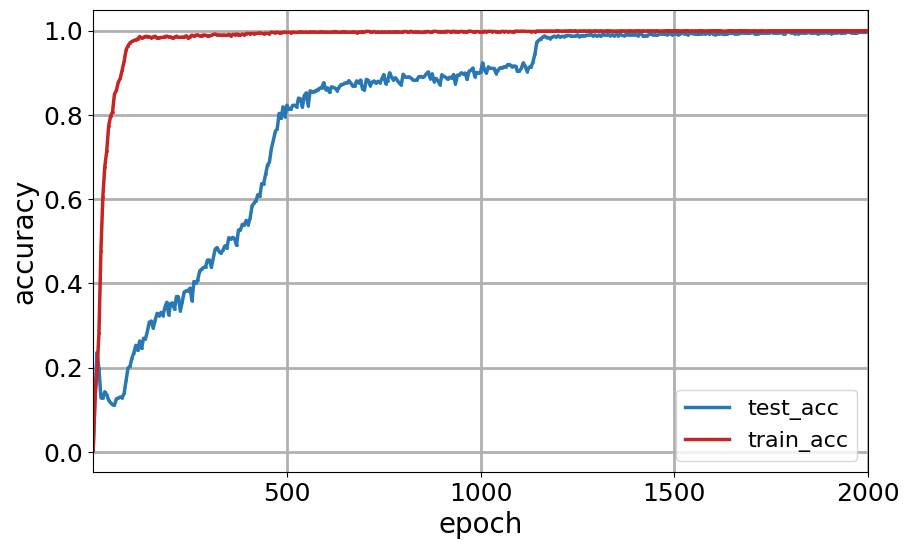}%
    \hfill%
    \includegraphics[width=0.32\linewidth]{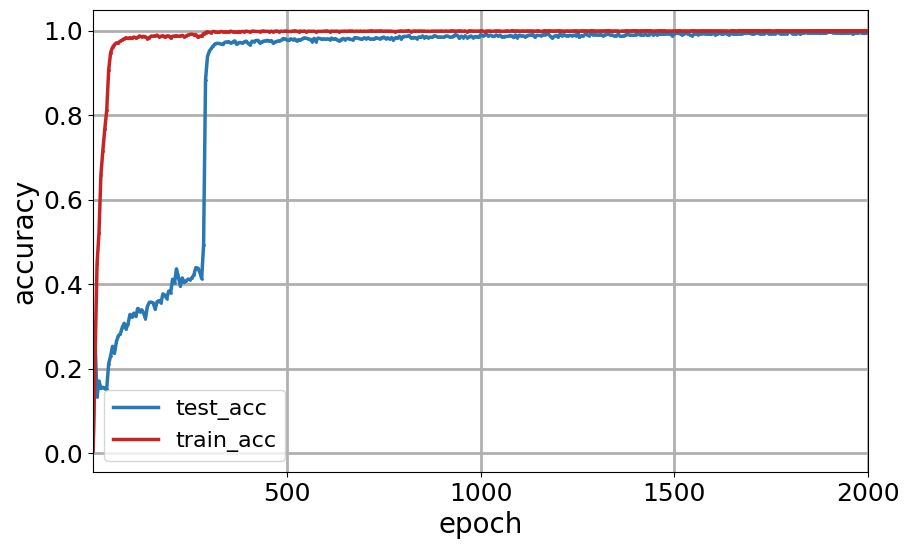}
    
    \medskip 
    \begin{minipage}{0.32\linewidth}
        \centering (a) $d_m = 256$
    \end{minipage}%
    \hfill%
    \begin{minipage}{0.32\linewidth}
        \centering (b) $d_m = 512$
    \end{minipage}%
    \hfill%
    \begin{minipage}{0.32\linewidth}
        \centering (c) $d_m = 1024$
    \end{minipage}
    
    \caption{prelayernorm}
    \label{fig:all_dms_horizontal2}
\end{figure}

\subsection{Causal intervention experiment}

In this section, we investigate whether transformer is capable of genuine reasoning or merely memorizes the answers, under the settings of 4-step and 5-step reasoning. We then describe the experimental methodology employed to obtain the results.

First, a sequence that can be answered correctly will be selected. Subsequently, a specific attention line or residual connection is masked. If transformer produces an incorrect output after the masking of a particular attention line or residual connection, that line will be marked in grey. If the model’s output remains correct, the line will left unchanged. The resulting attention graph retains only those connections that critically influence the outcome. This approach allows for conclusions to be drawn regarding whether the model has learned to perform reasoning.

\subsubsection{L=3 step-order=4 $d_{m}=1024$}

\begin{figure}[H] 
    \centering 
    \includegraphics[width=0.6\textwidth]{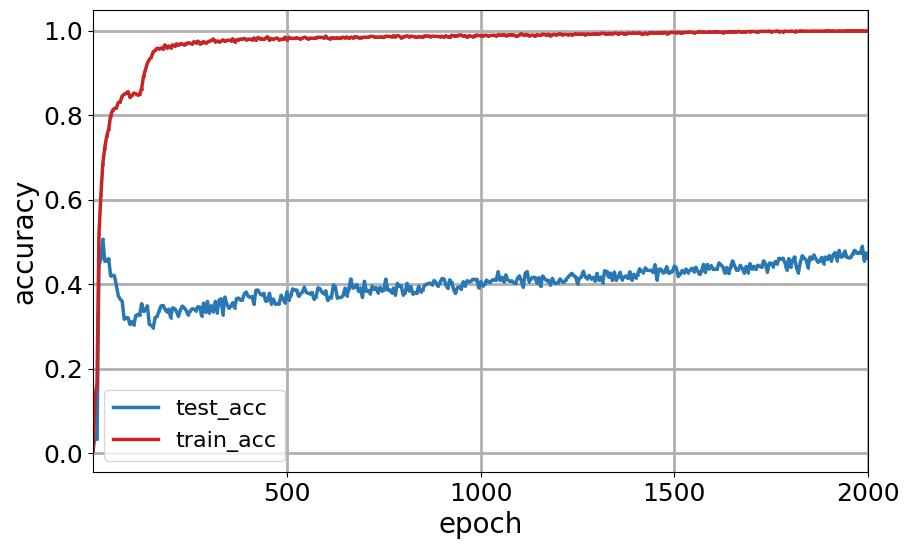} 
    \caption{accuracy of 4-step reasoning} 
    \label{fig:my_label} 
\end{figure}

\begin{figure}[H] 
    \centering 
    \includegraphics[width=0.8\textwidth]{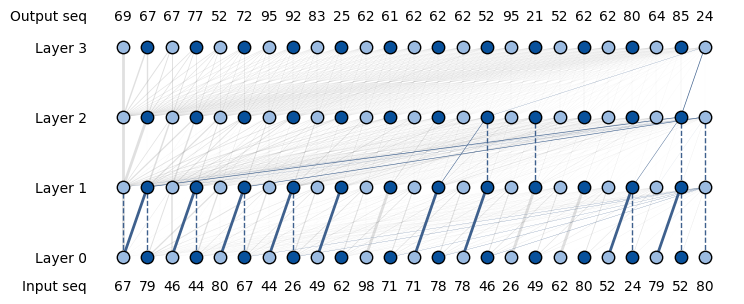} 
    \caption{L=3, 4-step reason} 
    \label{fig:4-step} 
\end{figure}

Figure \ref{fig:4-step} shows that when the input reasoning pairs satisfy some sequence relationship ((79, 52) occurs after both (67, 79) and (52, 24). ), the model produces the correct output, and the information flow aligns with the prescribed reasoning rules.

\subsubsection{L=3 step-order=5 $d_{m}=1024$}

\begin{figure}[H] 
    \centering 
    \includegraphics[width=0.6\textwidth]{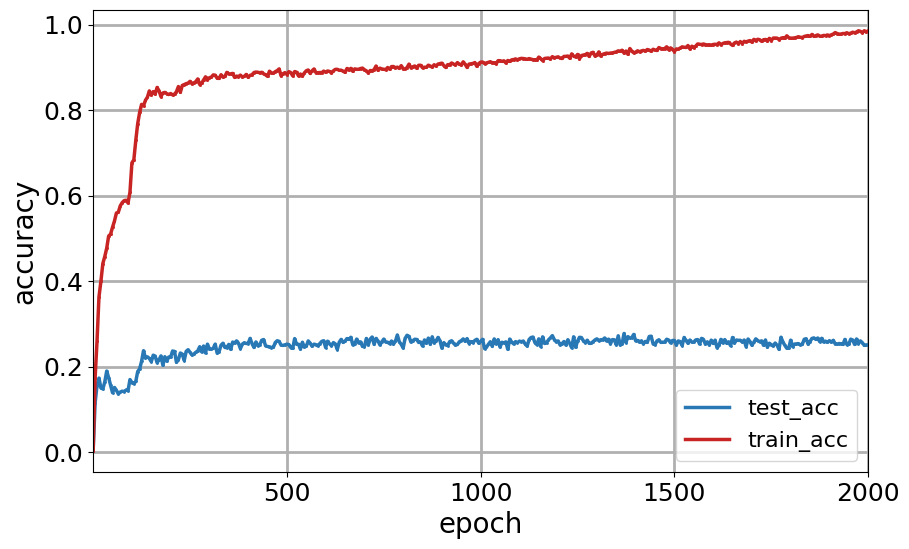 } 
    \caption{accuracy of 5-step reasoning} 
    \label{fig:my_label} 
\end{figure}

\begin{figure}[H] 
    \centering 
    \includegraphics[width=0.8\textwidth]{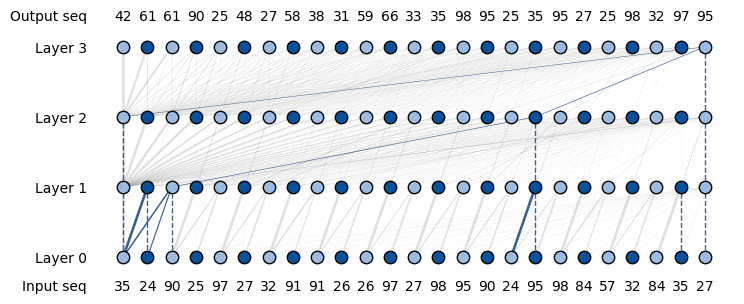} 
    \caption{L=3, 5-step reasoning} 
    \label{fig:5-step} 
\end{figure}

As shown in Figure \ref{fig:5-step}, when transformer produces a correct answer in the 5-step reasoning task, the attention and residual connections do not conform to the expected reasoning patterns. This may suggest that the 3-layer transformer fails to adequately learn genuine 5-step reasoning. Instead, the model might rely on memorization to arrive at the correct response.

\subsection{Guess about $d_m$}

Based on the aforementioned experiments, it can be observed that training a model capable of parallel reasoning—where the number of reasoning steps exceeds the depth of the transformer (i.e., number of layers minus one)—requires a substantially large model dimension $d_m$. It is therefore hypothesized that for string reasoning, wherein the number of reasoning steps equals the depth of the transformer (layers minus one), a significantly smaller $d_m$ may suffice.

We train a 4-layer Transformer model to perform 3-step reasoning. In this experiment, the sequence length is set to 21, the batch size is 1000, and the learning rate is $5\times 10^{-5}$. The model is trained for 500 epochs with a hidden dimension of $d_{m}=128$.

\begin{figure}[H]
    \centering 
    \includegraphics[width=0.8\textwidth]{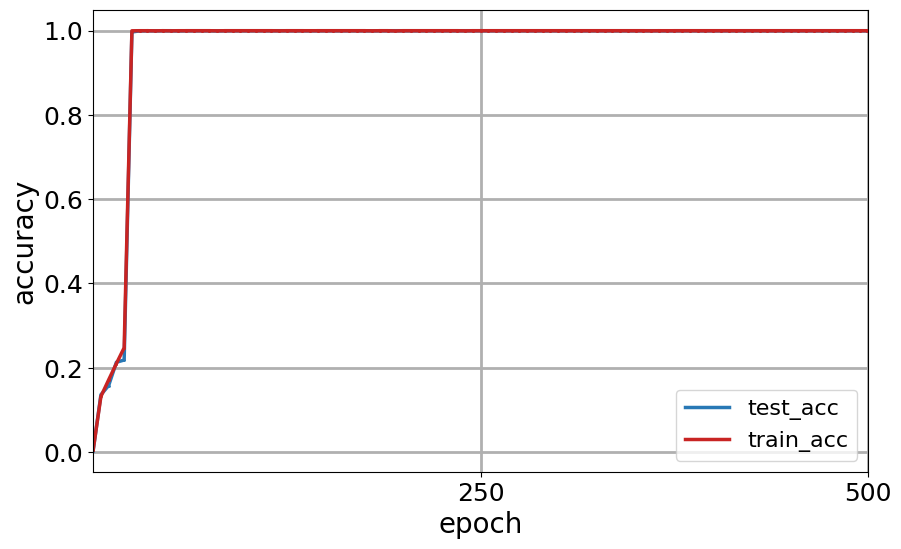 } 
    \caption{L=4 $d_{m}=128$ } 
    \label{fig:example} 
\end{figure}

The experimental results indicate that the blue and red strings both rapidly approach 100\% success rates. Under the string reasoning condition, a transformer model with 128 hidden dimensions demonstrates the capability to effectively perform 3-step reasoning tasks (figure \ref{fig:example}). In contrast, under the parallel reasoning condition, an architecturally equivalent model with the same number of hidden dimensions achieves only an 11.9\% success rate.

\end{document}